\newtheorem{thm}{Theorem}[section]
\newtheorem{defn}{Definition}[section]
\newtheorem{claim}{Step}
\newtheorem{claimproof}{Proof of Step}
\newtheorem{lemma}{Lemma}[section]
\newtheorem{rem}{Remark}
\newcommand{\cmark}{\ding{51}}%
\newcommand{\xmark}{\ding{55}}%
\begin{document}
%
\title{A Strongly Consistent Sparse $k$-means Clustering with Direct $l_1$ Penalization on Variable Weights}
%
%
%

\author{Saptarshi~Chakraborty, 
        Swagatam~Das\\
         {Indian~Statistical~Institute,~ Kolkata,~India}
\thanks{S. Chakraborty is with the Indian Statistical Institute, Kolkata, India, 700108 e-mail: saptarshichakraborty27@gmail.com.}
\thanks{S. Das is with the Electronics and Communication Sciences Unit, Indian Statistical Institute, Kolkata, India, 700108 e-mail: swagatamdas19@yahoo.co.in, swagatam.das@isical.ac.in.}}
\maketitle
\begin{abstract}
We propose the Lasso Weighted $k$-means ($LW$-$k$-means) algorithm as a simple yet efficient sparse clustering procedure for high-dimensional data where the number of features ($p$) can be much larger compared to the number of observations ($n$). In the $LW$-$k$-means algorithm, we introduce a lasso-based penalty term, directly on the feature weights to incorporate feature selection in the framework of sparse clustering. $LW$-$k$-means does not make any distributional assumption of the given dataset and thus, induces a non-parametric method for feature selection. We also analytically investigate the convergence of the underlying optimization procedure in $LW$-$k$-means and establish the strong consistency of our algorithm. $LW$-$k$-means is tested on several real-life and synthetic datasets and through detailed experimental analysis, we find that the performance of the method is highly competitive against some state-of-the-art procedures for clustering and feature selection, not only in terms of clustering accuracy but also with respect to computational time. 
\end{abstract}

\begin{IEEEkeywords}
Clustering, Unsupervised Learning, Feature Selection, Feature Weighting, Consistency.
\end{IEEEkeywords}

\section{Introduction}
\IEEEPARstart{C}{lustering}  is one of the major steps in exploratory data mining and statistical data analysis. It refers to the task of distributing a collection of patterns or data points into more than one non-empty groups or clusters in such a manner that the patterns belonging to the same group may be more identical to each other than those from the other groups \cite{1427769, wong2015short}. The patterns are usually represented by a vector of variables or observations that are also commonly known as features in the pattern recognition community. The notion of a cluster, as well as the number of clusters in a particular data set, can be ambiguous and subjective. However, most of the popular clustering techniques comply with the human conception of clusters and capture a dense patch of points in the feature space as a cluster. Center-based partitional clustering algorithms identify each cluster in terms of a single point called a \textit{centroid} or a \textit{cluster center}, which may or may not be a member of the given dataset. $k$-means \cite{Mac, Jain:2010:DCY:1755267.1755654} is arguably the most popular clustering algorithm in this category. This algorithm separates the data points into $k$ disjoint clusters ($k$ is to be specified beforehand, though) by locally minimizing the total intra-cluster spread i.e. the sum of squares of the distances from each point to the candidate centroids. Obviously, the algorithm starts with a set of randomly initialized candidate centroids from the feature space of the data and attempts to refine them towards the best representatives of each cluster over the iterations by using a local heuristic procedure. $k$-means may be viewed as a special case of the more general model-based clustering \cite{McNicholas2016,Fraley98howmany, McLachlan:2014:NCG:2787261.2787262} where the set of $k$ centroids can be considered as a model from which the data is generated. Generating a data point in this model consists of first selecting a centroid at random and then adding some noise. For a Gaussian distribution of the noise, this procedure will result into hyper-spherical clusters usually. \par

 With the advancement of sensors and hardware technology, it has now become very easy to acquire a vast amount of real data described over several variables or features, thus giving rise to high-dimensional data. For example, images can contain billions of pixels, text and web documents can have several thousand words, microarray datasets can consist of expression levels of thousands of genes. {\it Curse of dimensionality} \cite{bellman1957dynamic} is a term often coined to describe some fundamental problems associated with the high-dimensional data where the number of features $p$ far exceeds the number of observations $n$ ($p \gg n$). With the increase of dimensions, the difference between the distances of the nearest and furthest neighbors of a point fades out, thus making the notion of clusters almost meaningless \cite{beyer1999nearest}. In addition to the problem above, many researchers also concur on the fact that especially for high dimensional data, the meaningful clusters may be present only in subspaces formed with a specific subset of the features available \cite{tsai2008developing,liu2005toward,chen2012feature,de2012minkowski}. 
  Different features can exhibit different degrees of relevance to the underlying groups in a practical data with a high possibility. Generally, the machine learning algorithms employ various strategies to select or discard a number of features to deal with this situation. Using all the available features for cluster analysis (and in general for any pattern recognition task) can make the final clustering solutions less accurate when a considerable number of features are not relevant to some clusters \cite{chan2004optimization}. To add to the difficulty further, the problem of selection of an optimal feature subset with respect to some criteria is known to be NP-hard \cite{blum1989training}. Also, even the degree of contribution of the relevant features can vary differently to the task of demarcating various groups in the data. Feature weighting is often thought of as a generalization of the widely used feature selection procedures \cite{wettschereck1997review,modha2003feature,tsai2008developing,DBLP:journals/corr/Amorim16}. An implicit assumption of the feature selection methods is that all the selected features are equally relevant to the learning task in hand, whereas, feature weighting algorithms do not make such assumption as each of the selected features may have a different degree of relevance to a cluster in the data. To our knowledge, Synthesized Clustering (SYNCLUS) \cite{desarbo1984synthesized}  is the first $k$-means extension to allow feature weights. SYNCLUS partitions the available features into a number of groups and uses different weights for these groups during a conventional $k$-means clustering process. The convex $k$-means algorithm \cite{modha2003feature} is an interesting approach to feature weighting by integrating multiple, heterogeneous feature spaces into the $k$-means framework. Another extension of $k$-Means to support feature weights was introduced in \cite{chan2004optimization}.  Huang \textit{et al.} \cite{huang2005automated}, introduced the celebrated Weighted $k$-means algorithm ($WK$-means) which introduces a new step for updating the feature weights in $k$ means by using a closed-form formula for the weights derived from the current partition. $WK$-means was later extended to support fuzzy clustering \cite{li2006novel} and cluster-dependent weights \cite{huang2008weighting}. Entropy Weighted $k$-means \cite{jing2007entropy}, improved $k$-prototypes \cite{huang1998extensions}, Minkowski Weighted $k$-means \cite{de2012minkowski}, Feature Weight Self-Adjustment $k$-Means \cite{tsai2008developing}, Feature Group Weighted $k$-means ($FG$-$k$-means) \cite{chen2012feature} are among the notable works in this area. A detailed account of these algorithms and their extensions can be found in \cite{DBLP:journals/corr/Amorim16}.\par
    Traditional approaches for feature selection can be broadly categorized into \textit{filter} and \textit{wrapper}-based approaches \cite{dy2008unsupervised,kohavi1997wrappers}. Filter methods use some kind of proxy measure ( just for example,  mutual information, Pearson product-moment correlation coefficient, Relief-based algorithms etc.) to score the selected feature subset during the pre-processing phase of the data. On the other hand, the wrapper approaches employ a predictive learning model to evaluate the candidate feature subsets. Although wrapper methods tend to be more accurate than those following a filter-based approach \cite{dy2008unsupervised}, nevertheless, they incur high computational costs due to the need of executing both a feature selection module and a clustering module several times on the possible feature subsets. \par
  
    Real-world datasets can come with a large number of noise variables, i.e., variables that do not change from cluster to cluster, also implying that the natural groups occurring in the data differ with respect to a small number of variables. Just as an example, only a small fraction of genes (relevant features) contribute to the occurrence of a certain biological activity, while the others in a large fraction, can be irrelevant (noisy features). A good clustering method is expected to identify the relevant features, thus avoiding the derogatory effect of the noisy and irrelevant ones. It is not hard to see that if an algorithm can impose positive weights on the relevant features while assigning exactly zero weights on the noisy ones, the negative influence from the latter class of features can be nullified. Sparse clustering methods closely follow such intuition and aim at partitioning the observations by using only an adaptively selected subset of the available features.
    \subsection{Relation to Prior Works}
    
    Introducing sparsity in clustering is a well studied field of unsupervised learning. Friedman and Meulman \cite{cosa2004} proposed a sparse clustering procedure, called Clustering Objects on Subsets of Attributes (COSA), which in its simplified form, allows different feature weights within a cluster and closely relate to a weighted form of the $k$-means algorithm. Witten and Tibshirani \cite{witten2010framework} observed that COSA hardly results in a truly sparse clustering since, for a positive value of the tuning parameter involved, all the weights retain non-zero value. As a betterment, they proposed the sparse $k$-means algorithm by using the $l_1$ and $l_2$ penalization to incorporate feature selection. The $l_1$ penalty on the weights result in sparsity (making weights of some of the (irrelevant) features $0$) for a small value of a parameter which is tuned by using the Gap Statistic \cite{doi:10.1111/1467-9868.00293}. On the other hand, the $l_2$ penalty is equally important as it causes more than one components of the weight vector to retain non-zero value. Despite its effectiveness, the statistical properties of the sparse $k$-means algorithm including its consistency are yet to be investigated. Unlike the fields of sparse classification and regression, only a few notable extensions on sparse $k$-means emerged subsequently. A regularized version of sparse $k$ means for clustering high dimensional data was proposed in \cite{sun2012}, where the authors also established its asymptotic consistency. Arias-Castro and Pu \cite{ARIASCASTRO2017217} proposed a simple hill climbing approach to optimize the clustering objective in the framework of the sparse $k$ means algorithm. 

A very competitive approach for high dimensional clustering, different from the framework of sparse clustering was taken in \cite{jin2016influential} based on the so-called Influential Feature-based Principal Component Analysis aided with a Higher Criticality based Thresholding (IF-PCA-HCT). 
    This method first selects a small fraction of features with the largest Kolmogorov-Smirnov (KS) scores and then determines the first $k-1$ left singular vectors of the post-selection normalized data matrix. Subsequently, it estimates the clusters by using a classical $k$-means algorithm on these singular vectors. According to \cite{jin2016influential}, the only parameter that needs to be tuned in IF-PCA-HCT is the threshold for the feature selection step. The authors recommended a data-driven rule to set the threshold on the basis of the notion of Higher Criticism (HC) that uses the order statistics of the feature $z$-scores \cite{donoho2008higher}.\par

    Another similar approach known as the IF-PCA algorithm was proposed by Jin {\it et al.} \cite{jin2017phase}. For a threshold $t$ This method clusters the dataset by using the classical PCA to all features whose $l_2$ norm is larger that $t$.
  \begin{table*}
	\centering
	\caption{Some Well Known Algorithms on Feature Weighting and Feature Selection}
	\begin{tabular}{ccccc}
		\label{symbols1}\\
		\hline
		Algorithm/Reference & Feature Weighting & Feature Selection & Model Assumptions & Consistency Proof \\
		\hline
		$k$-means \cite{Mac}& \xmark & \xmark & \xmark & \checkmark\\
		$W$-$k$-means \cite{huang2005automated} & \checkmark & \xmark & \xmark & \checkmark\\
		Pan and Shen \cite{pan2007penalized} &  \xmark & \cmark & \cmark (Mixture model assumption) & \xmark\\
		Sparse-$k$-means \cite{witten2010framework} & \checkmark & \cmark & \xmark & \xmark\\
		IF-HCT-PCA \cite{jin2016influential} & \xmark & \cmark & \cmark (Normality assuption on the irrelevant features) & \cmark\\
		IF-PCA \cite{jin2017phase} &  \xmark & \cmark & \cmark (Normality assuption on the irrelevant features) & \cmark\\
		
		$LW$-$k$-means (The Proposed Method) & \cmark & \cmark & \xmark & \cmark\\
		\hline
	\end{tabular}
	\label{tab history}
\end{table*}
 Pan and Shen \cite{pan2007penalized} proposed the Penalized model-based clustering. This method proposes am EM algorithm to obtain feature selection. Although this method is quite effective, it assumes the likelihood of the data, which can lead to erroneous results if the assumed likelihood is not well suited for the data. This is also the case for IF-HCT-PCA\cite{jin2016influential} and IF-PCA \cite{jin2017phase} as they both assume a Gaussian mixture model for the data. As it can be seen from Section \ref{objective section} that the proposed method does not suffer from this drawback. In contrast to the Sparse $k$-means algorithm \cite{witten2010framework} which uses only $l_1$ and $l_2$ terms in the objective function, our proposed method uses only an $l_1$ penalization and also a $\beta$ exponent in the weight terms, which can lead to more efficient feature selection as seen in Section \ref{fs}. In addition, no obvious relation between the Saprse $k$-means and $LW$-$k$-means is apparent. \par    
 Some theoretical works on sparse clustering can be found in \cite{azizyan2013minimax,jin2017phase,verzelen2017detection}. A minimax theory for highdimensional Gaussian mixture models was proposed by  Azizyan {\it et al.} \cite{azizyan2013minimax}, where the authors derived some precise information theoretic bounds on the clustering accuracy and sample complexity of learning a mixture of two isotropic Gaussians in high dimensions under small mean separation. The minimax rates for the problems of testing and of variable selection under sparsity assumptions on the difference in means were derived in \cite{verzelen2017detection}. 
 The strong consistency of the Reduced $k$-Means (RKM) algorithm \cite{de1994k} the  under i.i.d sampling was recently established by Terada \cite{terada2014strong}.
Following the methods of \cite{pollard1981strong} and \cite{terada2014strong}, the strong consistency of the factorial $k$-means algorithm \cite{vichi2001factorial} was also proved in \cite{terada2015strong}. Gallegos and Ritter \cite{GALLEGOS201314} extended the Pollard's proof of strong consistency \cite{pollard1981strong} for an affine invariant $k$-parameters clustering algorithm. Nikulin \cite{JMLR:v16:nikulin15a} presented proof for the strong consistency of the divisive information-theoretic feature clustering model in probabilistic space with Kullback-Leibler (KL) divergence. Recently, the strong consistency of the Weighted $k$-means algorithm for nearmetric spaces under i.i.d. sampling was proved by Chakraborty and Das \cite{chakraborty2019}. The proof of strong consistency presented in this paper is slightly trickier than the aforementioned papers as we need to choose $\alpha$ and $\lambda$ suitably such that the sum of the weights are bounded almost surely and at least one weight is bounded away from $0$ almost surely.\par
In Table \ref{tab history}, we highlight some of the works in this field along with their important aspects in terms of feature weighting, feature selection, model assumptions and proof of consistency of the algorithms and try to put our proposed algorithm in the context.
    \begin{figure*}[htb]
    \centering
    \begin{subfigure}[b]{0.45\textwidth}
        \includegraphics[width=\textwidth]{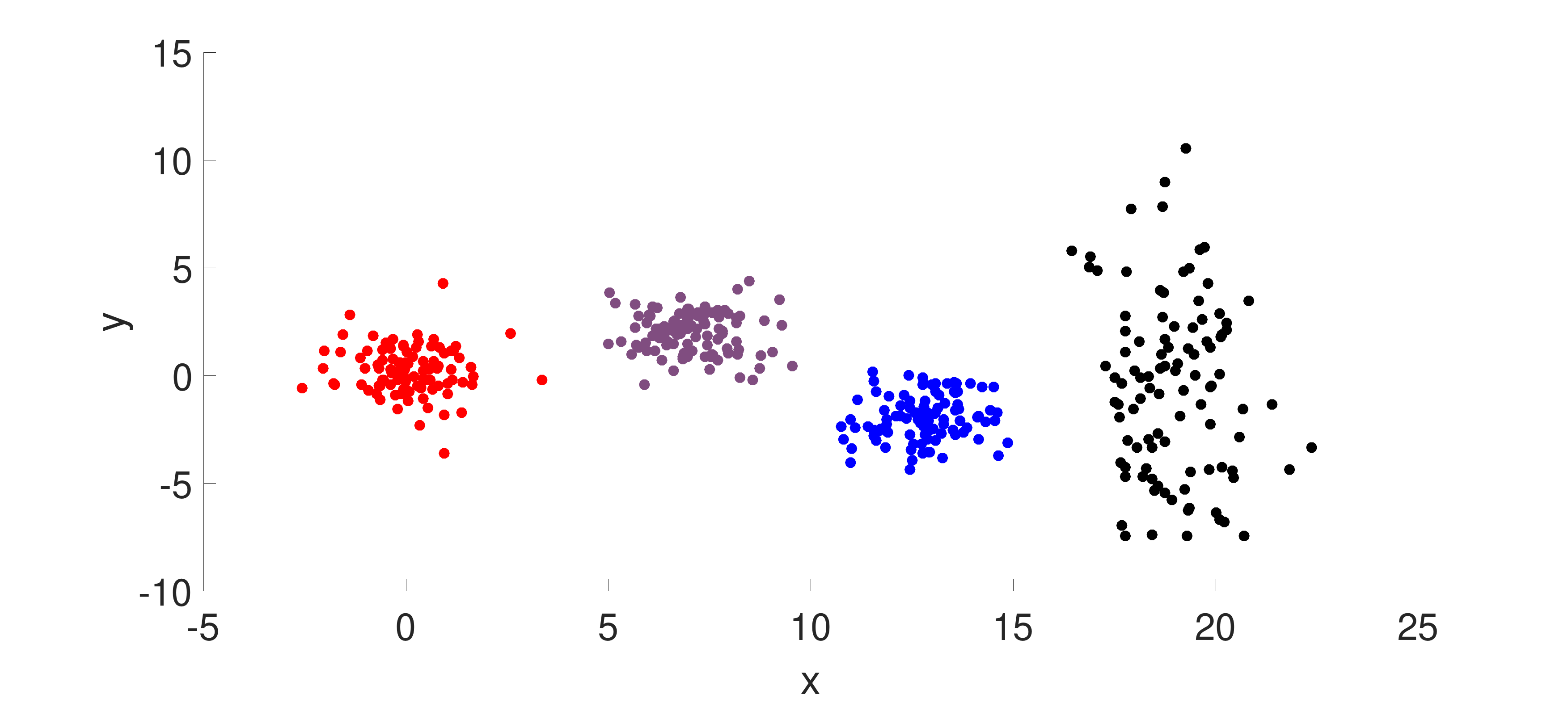}
        \caption{Ground truth}
        \label{cl}
    \end{subfigure}
    ~ 
    \begin{subfigure}[b]{0.45\textwidth}
        \includegraphics[width=\textwidth]{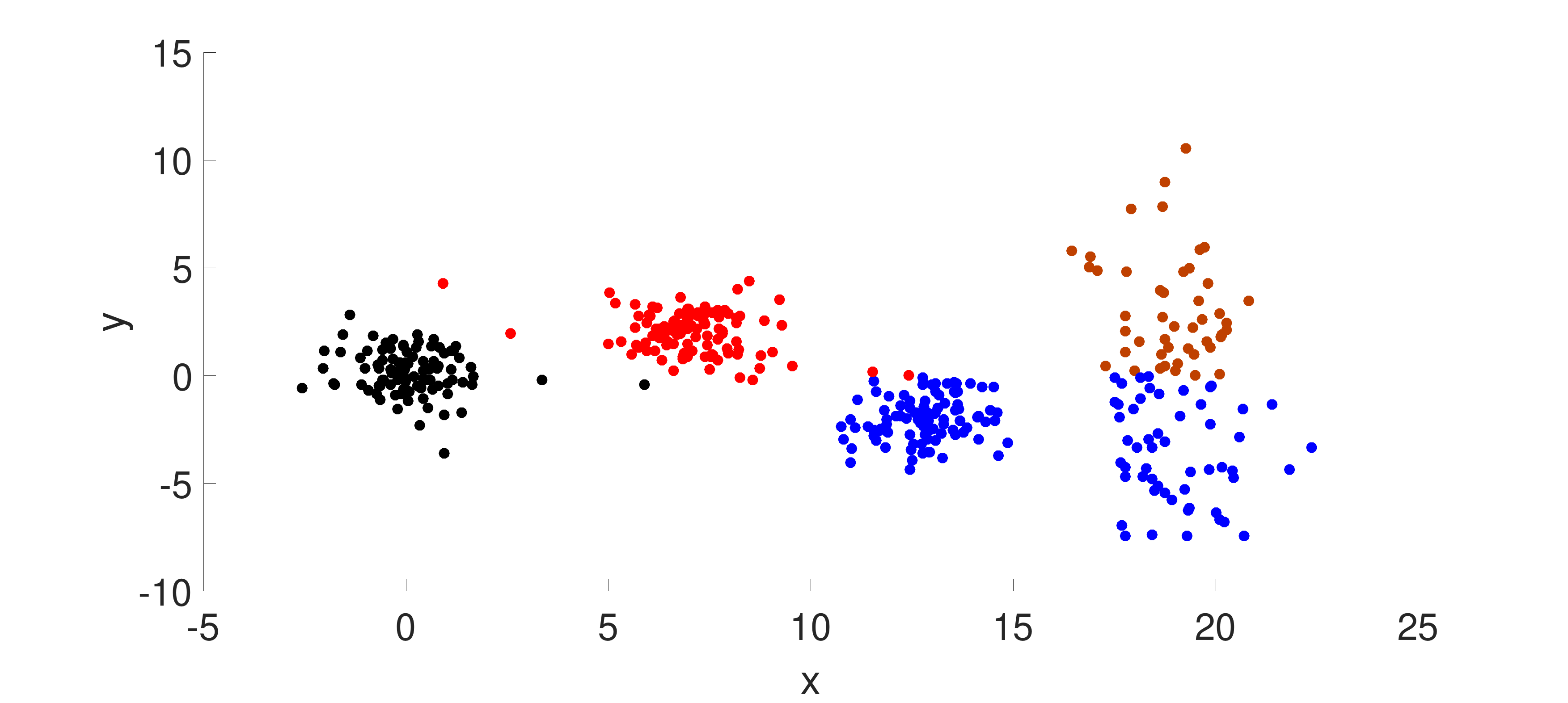}
        \caption{IF-HCT-PCA}
        \label{ifpca}
    \end{subfigure}
    ~ 
    \begin{subfigure}[b]{0.45\textwidth}
        \includegraphics[width=\textwidth]{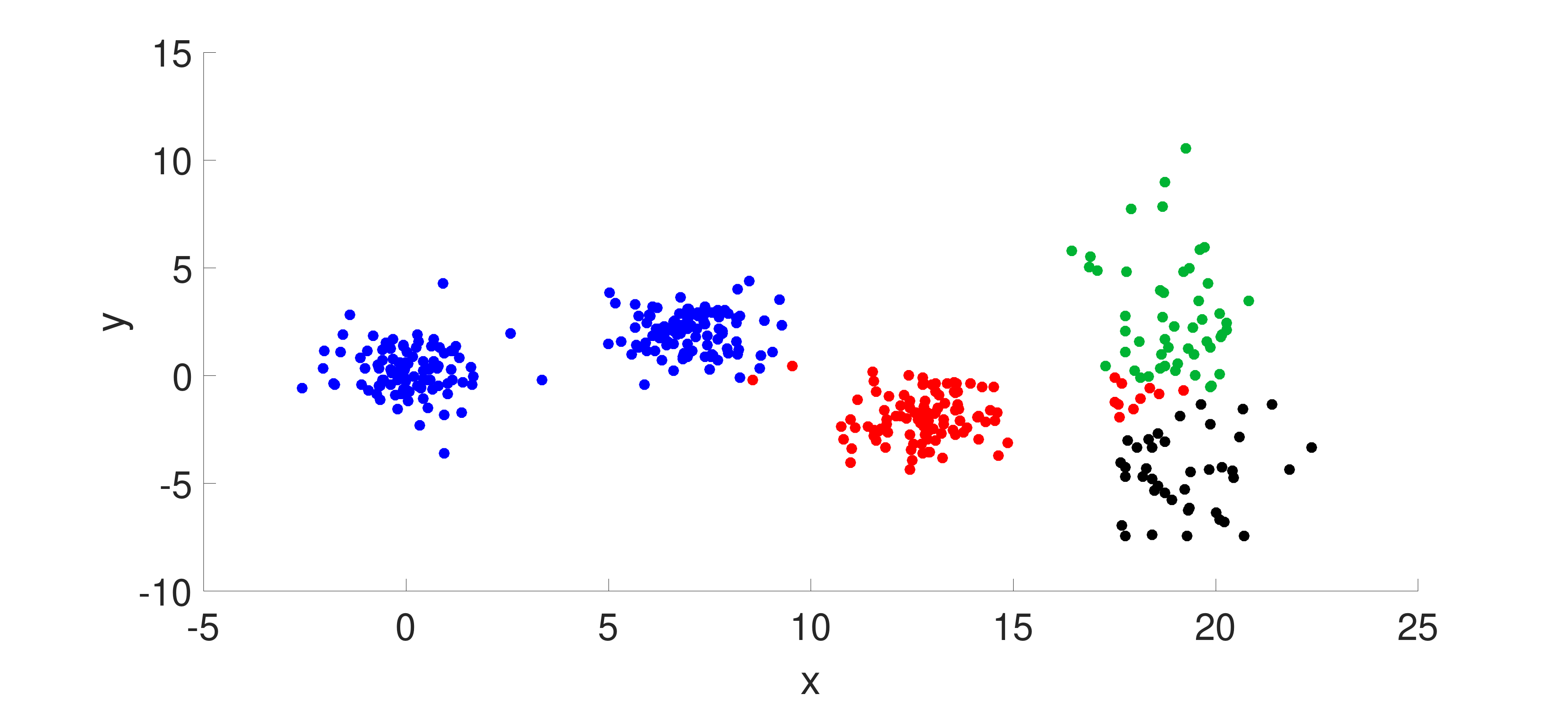}
        \caption{$k$-means}
    \end{subfigure}
    ~ 
    \begin{subfigure}[b]{0.45\textwidth}
        \includegraphics[width=\textwidth]{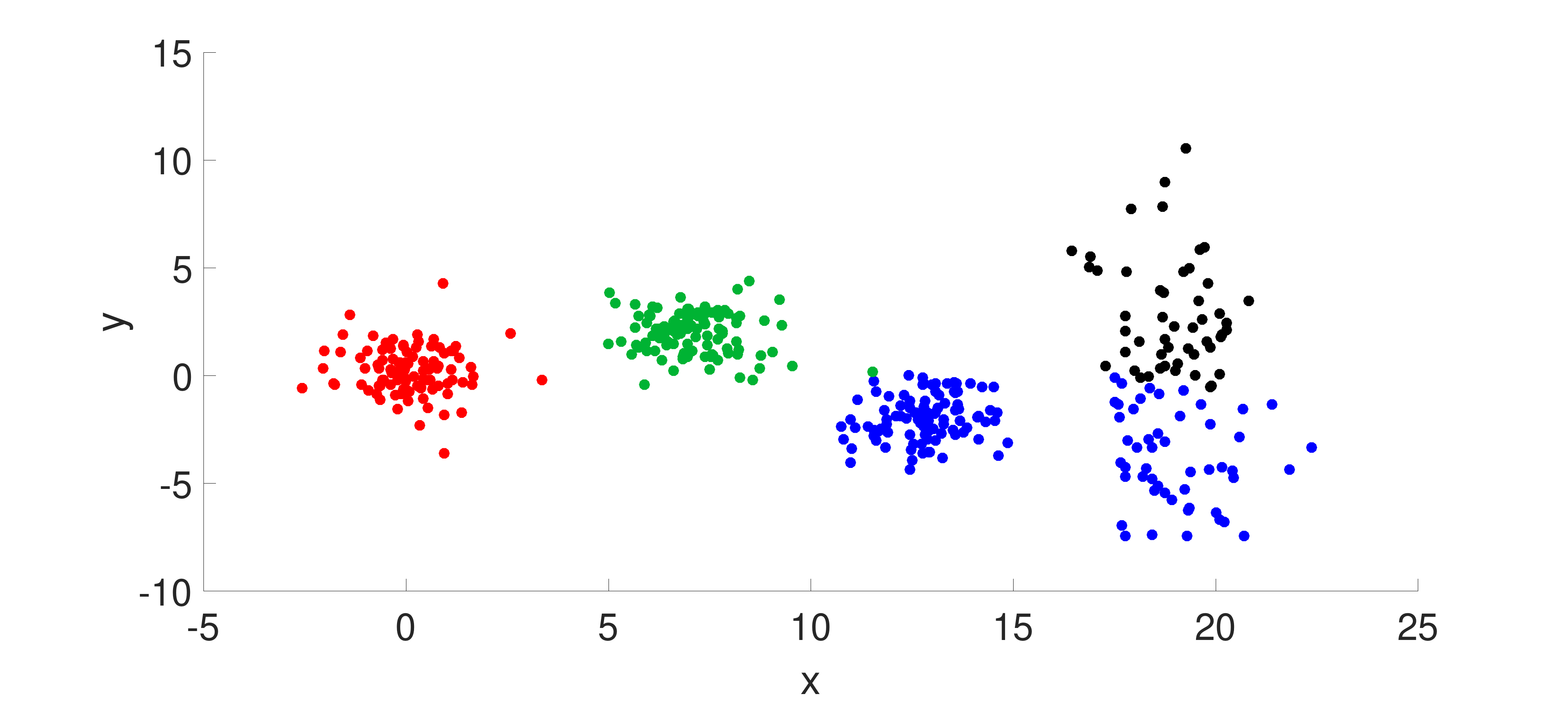}
        \caption{Sparse $k$-means}
    \end{subfigure}
    ~ 
    \begin{subfigure}[b]{0.45\textwidth}
        \includegraphics[width=\textwidth]{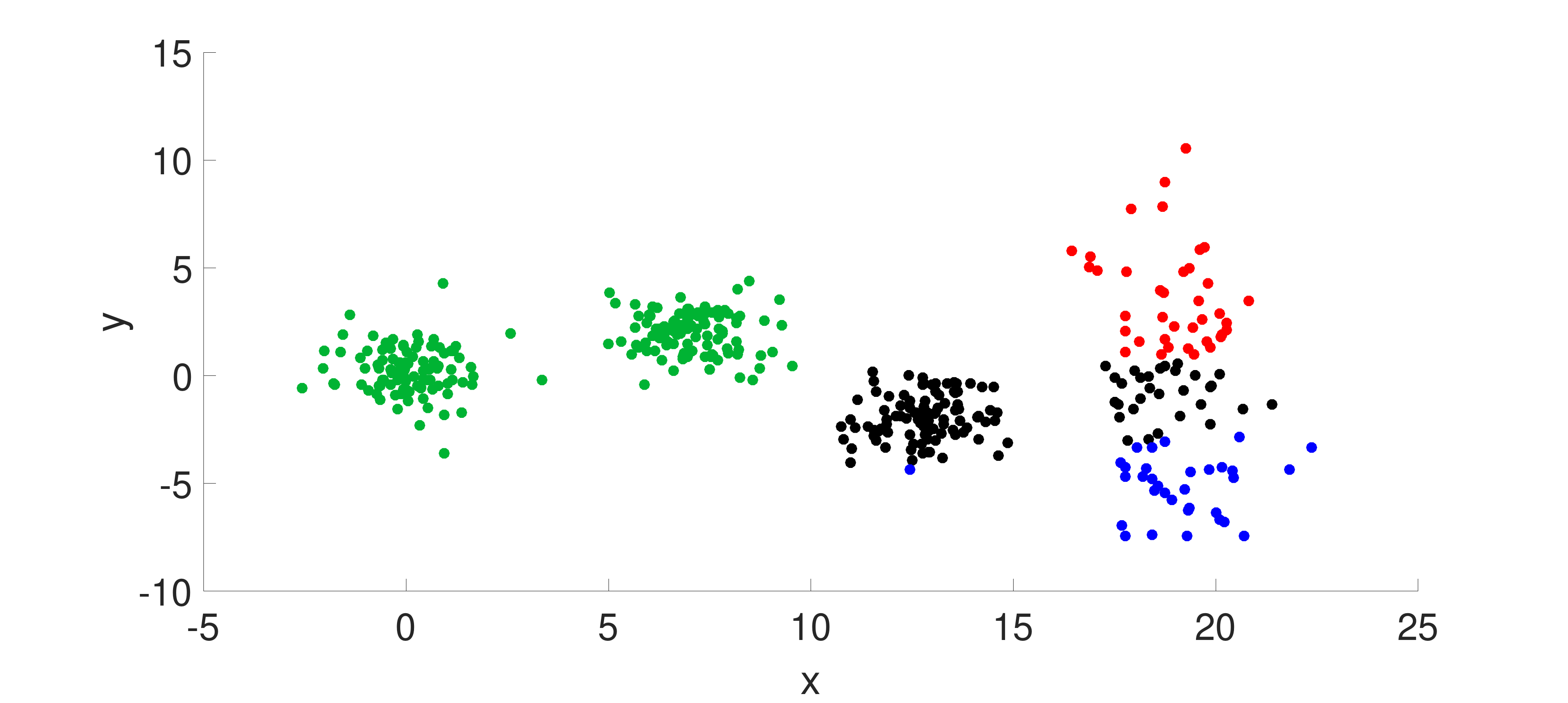}
        \caption{$WK$-means}
    \end{subfigure}
    ~ 
    \begin{subfigure}[b]{0.45\textwidth}
        \includegraphics[width=\textwidth]{images/cl.pdf}
        \caption{ $LW$-$k$-means}
        \label{ifpca}
    \end{subfigure}
    \caption{Ground truth Clustering and Partitioning by different algorithms for $data1$ dataset.}%
    \label{fff}%
\end{figure*}

    \subsection{Summary of Our Contributions}
    We propose a simple sparse clustering framework based on the feature-weighted $k$ means algorithm,  where a Lasso penalty is imposed directly on the feature weights and a closed form solution can be reached for updating the weights. The proposed algorithm, which we will refer to as Lasso Weighted $k$ means ($LW$-$k$-means), does not require the assumption of normality of the irrelevant features as required for the IF-HCT-PCA algorithm \cite{jin2016influential}. We formulate the $LW$-$k$-means as an optimization procedure on an objective function and  derive a block coordinate descent type algorithm \cite{tseng2001convergence} to optimize the objective function in section \ref{opti}. We also prove that the proposed algorithm converges after a finite number of iteration in Theorem \ref{convergence}. We establish the strong consistency of the proposed $LW$-$k$-means algorithm in Theorem \ref{strongth}. Conditions ensuring almost sure convergence of the estimator of $LW$-$k$-means with unboundedly increasing sample size are investigated in section \ref{strong1}. With a detailed experimental analysis, we demonstrate the competitiveness of the proposed algorithm against the baseline $k$-means and  $WK$-means algorithms along with the state-of-the-art sparse $k$-means and IF-HCT-PCA algorithms by using several synthetic as well as challenging real-world datasets with a large number of attributes. Through our experimental results, we observe that not only the $LW$-$k$-means outperforms the other state-of-the-art algorithms, but it does so with considerably less computational time. In section \ref{simulation studies}, we report a simulation study to get an idea about the distribution of the obtained feature weights. The outcomes of the study show that $LW$-$k$-means perfectly identifies the irrelevant features in certain datasets which may deceive some of the state-of-the-art clustering algorithms.  \par
    
       \begin{table}
\centering
\caption{Comparison between $LW$-$k$-means and IF-HCT-PCA}
\label{con}
\begin{tabular}{crrc}
\hline

Algorithm & \multicolumn{2}{c}{Feature Weights}   & Average CER\\
 & $x$ & $y$ \\
 \hline
 $k$-means & 1 & 1 & 0.2657\\\hline
 $WK$-means & 0.5657  & 0.4343 &0.1265\\\hline
 IF-HCT-PCA & 1& 1&0.1475 \\\hline
 Sparse $k$-means & 0.9446&0.3281 &  0.1275\\\hline
 $LW$-$k$-means &0.7587 & 0&0 

\\ \hline
\end{tabular}
\end{table}

    \subsection{A Motivating Example}
Before proceeding further, we take a motivating example to illustrate the efficacy of the $LW$-$k$-means procedure (detailed in Section \ref{lwkalgo})  w.r.t the other peer clustering algorithms by considering a sample toy dataset. In Fig. \ref{cl}, we show the scatter plot of a synthetic dataset $data1$ (the dataset is available at \url{https://github.com/SaptarshiC98/lwk-means}). It is clear that only the $x$-variable contains the cluster structure of the data while the $y$-variable does not. We run five algorithms ($k$-means, $WK$-means, sparse $k$-means, IF-HCT-PCA, and $LW$-$k$-means) on the dataset independently 20 times and report the average CER (Classification Error Rate: proportional to instances misclassified over the whole set of instances) in Table \ref{con}. We also note the average feature weights for each algorithm. From Table \ref{con}, we see that only the $LW$-$k$-means assigns a zero feature weight to feature $y$ and also that it achieves an average CER of 0. The presence of an elongated cluster (colored in black in Fig. \ref{con}) affects the clustering procedure of all the algorithms except $LW$-$k$-means. This elongated cluster, which is non-identically distributed in comparison to the other clusters, increases the Within Sum of Squares (WSS) of the $y$ values, thus increasing its weight. It can be easily seen that for this toy example, the other peer algorithms erroneously detect the $y$ feature to be important for clustering and thus leads to inaccurate clustering. This phenomenon is illustrated in Fig. \ref{con}. 

\section{Background}
\subsection{Some Preliminary Concepts}
In this section we will discuss briefly about the notion of consistency of an estimator. Before we begin, let us recall the defination of convergence in probability and almost surely.
\begin{defn}
Let $(\Omega, \mathcal{F},P)$ be a probability space. A sequence of random variables $\{X_n\}_{n \geq 1}$ is said to converge almost surely (a.s. $[P]$) to a random variable $X$ (in the same probability space),
written as $$X_n \xrightarrow{a.s.} X$$ 
if $P(\{\omega \in \Omega: X_n(\omega) \to X(\omega)\})=1$. 
\end{defn}

\begin{defn}
Let $(\Omega, \mathcal{F},P)$ be a probability space. A sequence of random variables $\{X_n\}_{n \geq 1}$ is said to converge in probability to a random variable $X$ (in the same probability space),
written as $$X_n \xrightarrow{P} X$$ 
if $\forall \epsilon>0$, $\lim_{n \to \infty } P(|X_n-X|> \epsilon)=0$. 
\end{defn}

\subsection{The Setup and Notations}

Before we start, we discuss the meaning of some symbols used throughout the paper in Table \ref{symbols1}.

\begin{table}[htb]
\centering
\caption{Symbols and Their meanings }
\begin{tabular}{cc}
\label{symbols1}
{\it Symbol} & {\it Meaning} \\
\hline
$\mathbb{R}$           & The set of all real numbers            \\
$\mathbb{R}_+$           & The set of all non-negative real numbers            \\
$\mathbb{R}_k^p$ & $\{A\subset \mathbb{R}^p|A \text{ contains k or fewer points }\}$\\
$\mathbb{N}$             & The set of all natural numbers     \\
$\mathcal{S}$           & The set $\{2n : n \in \mathbb{N}\}$  \\
$\mathcal{U}$  & The cluster assignment matrix\\
$\mathcal{Z}$ & The centroid matrix whose rows denote the centroids\\ 
$\mathcal{W}$ & Vector of all the feature weights\\
$\mathcal{N}(\mu,\sigma^2)$ & Normal distribution with mean $\mu$ and variance $\sigma^2$\\
$Unif(a,b)$ & Uniform distribution on the interval $(a,b)$\\
$\chi^2_{d}$ & $\chi^2$ distribution with $d$ degrees of freedom\\
$\mathbf{A}'$ & Transpose of the matrix $\mathbf{A}$\\
$\mathbf{1}$ & Vector $(1,\dots,1)'$ of length n\\
i.i.d & Independent and Identically Distributed\\
i.o. & Infinitely Often\\
a.s. & Almost Surely\\
CER & Classification Error Rate\\
\hline
\end{tabular}
\end{table}
Let $\mathcal{X}=\{\mathbf{x}_1, \mathbf{x}_2, \dots , \mathbf{x}_n\}\subset \mathbb{R}^p$ be a set of $n$ data points which needs to be partitioned into $k$ disjoint and non-empty clusters. Let us also impose $2 \leq k \leq n$ and assume that $k$ is known. 
Let us now recall the definition of a consistent and strongly consistent estimator.
\begin{defn}
An estimator $T_n=T_n(X_1,\dots,X_n)$ is said to be consistent for a parameter $\theta$ if $T_n \xrightarrow{P} \theta$.
\end{defn}

\begin{defn}
An estimator $T_n=T_n(X_1,\dots,X_n)$ is said to be strongly consistent for a parameter $\theta$ if $T_n \xrightarrow{a.s.} \theta$.
\end{defn}
A detailed exposure on consistency can be found in \cite{lehmann2006theory}.

\subsection{$k$-means Algorithm}
The conventional $k$-means clustering problem can be formally stated as a minimization of the following objective function: 

\begin{equation}
\label{obj}
P_{k-means}(\mathcal{U}, \mathcal{Z})=\sum_{i=1}^{n}\sum_{j=1}^{k} \sum_{l=1}^{p} u_{i, l} d(x_{i, j}, z_{j, l}),
\end{equation}
where $\mathcal{U}$ is an $n\times k$ cluster assignment matrix (also called partition matrix), $u_{i,j}$ is binary and $u_{i,j}$ = 1 means data point $\mathbf{x}_i$ belongs to cluster $C_j$. $\mathcal{Z}=[\mathbf{z}_1', \mathbf{z}_2', \dots , \mathbf{z}_k']'$ is a matrix, whose rows represent the  $k$ cluster centers, and $d(\text{ } , \text{ })$ is the distance metric of choice to measure the dissimilarity between two data points. For the widely popular squared Euclidean distance, $d(x_{i, l}, z_{j, l})$=$(x_{i, l}-z_{j, l})^2$. Local minimization of the $k$-means objective function is, most commonly carried out by using a two-step alternating optimization procedure, called the Lloyd's heuristic and recently a performance guarantee of the method in well clusterable situations was established in \cite{Ostrovsky:2013:ELM:2395116.2395117}.
\subsection{$WK$-means Algorithm}
In the well-known Weighted $k$-means ($W$-$k$-means) algorithm by Huang \textit{et al.} \cite{huang2005automated}, the feature weights are also updated along with the cluster centers and the partition matrix within a $k$-means framework. In \cite{huang2005automated}, the authors modified the objective function of $k$-means in the following way to achieve an automated learning of the feature weights: 
\begin{equation}
\label{obj}
P_{Wk-means}(\mathcal{U}, \mathcal{Z}, \mathcal{W})=\sum_{i=1}^{n} \sum_{j=1}^{k} \sum_{l=1}^{p} u_{i, j} w_l^\beta d(x_{i, l}, z_{j, l}),
\end{equation}
where $\mathcal{W}=[w_1, w_2, \dots , w_p]$ is the vector of weights for the $p$ variables, $\sum_{l=1}^{l}w_l$ = 1, and $\beta$ is the exponent of the weights. Huang {\it et al.} \cite{huang2005automated} formulated an alternative optimization based procedure to minimize the objective function with respect to $\mathcal{U}$, $\mathcal{Z}$ and $\mathcal{W}$. The additional step introduced in the $k$-means loop to update the weights use the following closed form upgrade rule: $w_l=\frac{1}{\sum_{t=1}^{p}(\frac{D_l}{D_t})^{\frac{1}{\beta-1}}}$, where $D_l=\sum_{i=1}^{n}\sum_{j=1}^{k}u_{i,j} d(x_{i,l},z_{j,l})$.
\subsection{Sparse $k$-means Algorithm}
Witten and Tibshirani \cite{witten2010framework} proposed the sparse $k$-means clustering algorithm for feature selection during clustering of high-dimensional data. The sparse $k$-means objective function can be formalized in the following way:
\begin{equation}
\label{sparse}
\begin{aligned}
& P_{Sparse \text{ } k-means}(\mathcal{U},\mathcal{W}) \\
&=\sum_{l=1}^p \bigg( w_l\frac{1}{n}\sum_{i=1}^n\sum_{i'=1}^nd(x_{i,l},x_{i',l})\\
&-\sum_{j=1}^k\frac{1}{n_j}\sum_{i=1}^n\sum_{i'=1}^nu_{i,j}u_{i',j}d(x_{i,l},x_{i',l})\bigg).      
\end{aligned}    
\end{equation}

This objective function is optimized w.r.t. $\mathcal{U}$ and $\mathcal{W}$ subject to the constraints, 
$$\|\mathcal{W}\|_2^2 \leq 1\text{,  }\|\mathcal{W}\|_1\leq s \text{ and } w_j \geq 0\textbf{ } \forall j\in \{1,\dots,p\}.$$

\section{The $LW$-$k$-means Objective }
\label{objective section}
The $LW$-$k$-means algorithm is formulated as a minimization problem of the $LW$-$k$-means objective function given by, 
\begin{equation}
\label{eq lwk}
\begin{split}
    &P_{LW-kmeans}(\mathcal{U},\mathcal{Z},\mathcal{W})\\
  &=\frac{1}{n}\sum_{i=1}^n\sum_{j=1}^k\sum_{l=1}^p (w_l^\beta+\frac{\lambda}{p^2} |w_l|)u_{i,j}d(x_{i,l},z_{j,l})-\alpha\sum_{l=1}^pw_l,
\end{split}
\end{equation}

where, $\lambda >0$, $\alpha>0$ and $\beta \in \mathcal{S}$ are fixed parameters chosen by the user.
This objective function is to be minimized w.r.t $\mathcal{U},\mathcal{Z}$, and $\mathcal{W}$ subject to the constraints,
\begin{subequations}
\begin{equation} \label{c1}
 \sum_{j=1}^{k}u_{i,j}=1,
\end{equation}
\begin{equation} \label{c2}
u_{i,j} \in \{1,0\} \forall i \in \{1,\dots n\}, \forall j \in \{1,\dots ,k\},
\end{equation}
\begin{equation} \label{c3}
\mathcal{Z}=[\mathbf{z}_1',\dots ,\mathbf{z}_k']' \text{ } \mathbf{z}_j \in \mathbb{R}^p \text{, } \forall j \in \{1,\dots ,k\},
\end{equation}
\begin{equation} \label{c4}
\mathcal{W}=[w_1,\dots ,w_p]' \text{ } w_l \in \mathbb{R}_+ \text{, } \forall l \in \{1,\dots ,p\}.
\end{equation}

\end{subequations}
In what follows, we discuss the key concept behind the choice of the objective function (\ref{eq lwk}). It is well known that though the $WK$-means algorithm \cite{huang2005automated} is very effective for automated feature weighing, it cannot perform feature selection automatically. Our motivation for introducing the $LW$-$k$-means is to modify the $WK$-means objective function in such a way that it can perform feature selection automatically.  
If we fix $\mathcal{U}$ and $\mathcal{Z}$ and  consider equation (\ref{obj}) only as a function of $\mathcal{W}$, we get,
\begin{equation}
    \label{huang2}
    P(\mathcal{W})=\frac{1}{n}\sum_{i=1}^n\sum_{j=1}^k\sum_{l=1}^p w_l^\beta u_{i,j}d(x_{i,l},z_{j,l})=\frac{1}{n}\sum_{l=1}^pD_l w_l^\beta,
\end{equation}
where, $D_l=\sum_{i=1}^n\sum_{j=1}^ku_{i,j}d(x_{i,l},z_{j,l})$.
The objective function \ref{huang2} is minimized subject to the constraint $\sum_{l=1}^p w_l=1$. This optimization problem is pictorially presented in Fig. \ref{fig:gull}. The blue lines in the figure represent the contour of the objective function. The red line represents the constraint $\sum_{l=1}^p w_l=1$. The point that minimizes the objective function \ref{huang2}, is the point where the red line touches the contours of the objective function. It is clear from the picture and also from the weight update formula in \cite{huang2005automated}, that $w_l$ is strictly positive unless $D_l=0$. Thus, the $WK$-means will assign a weight, however small it may be, to the irrelevant features but will never assign a zero to it. Thus, $WK$-means fails to perform feature selection, where some features need to be completely discarded.\par
Let us try to overcome this difficulty by adding a penalty term. If we add a penalty term $\frac{1}{n}\lambda \sum_{l=1}^p|w_l|$, that will equally penalize all the $w_l$'s regardless of whether the feature is distinguishing or not. Instead of doing that we use the penalty term $\frac{1}{n}\frac{\lambda}{p^2} \sum_{l=1}^pD_l |w_l|$, which will punish those $w_l$'s for which $D_l$'s are larger. Here $p^2$ is just a normalizing constant. Thus if we use this penalty term, the objective function becomes

{\small
\begin{equation}
    \label{huang3}
    P(\mathcal{W})=\frac{1}{n}\sum_{l=1}^pD_l w_l^\beta+\frac{1}{n}\frac{\lambda}{p^2} \sum_{l=1}^pD_l |w_l|=\sum_{l=1}^p (w_l^\beta+\frac{\lambda}{p^2}|w_l|) D_l. 
\end{equation}
}%
Apart from having the objective function \ref{huang3}, we do not want that the sum of the weights should deviate too much from $1$. Thus we substract a penalty term $\alpha(\sum_{l=1}^pw_l-1)$, where, $\alpha>0$ and the objective function becomes,
\begin{equation}
    \label{huang4}
    P(\mathcal{W})
    =\frac{1}{n}\sum_{l=1}^p (w_l^\beta+\frac{\lambda}{p^2}|w_l|) D_l-\alpha\sum_{l=1}^pw_l+\alpha.
\end{equation}
Since, $\alpha>0$ is a constant, minimizing \ref{huang4} is same as minimizing
\begin{equation}
    \label{huang5}
    P(\mathcal{W})=\frac{1}{n}\sum_{l=1}^p (w_l^\beta+\frac{\lambda}{p^2}|w_l|) D_l-\alpha\sum_{l=1}^pw_l
\end{equation}
w.r.t $\mathcal{W}$. Since $\lambda$ is a constant, we change the objective function to \ref{eq lwk}.
In Fig. \ref{fig:tiger}, we show the contour plot of the objective function \ref{eq lwk} with $p=2$, $\beta=2$, $D_1=1$, $D_2=3$, $\lambda=1$, $\alpha=0.66$. Clearly the minimization of the objective function occurs on the $x$-axis. Hence the $LW$-$k$-means algorithm can set some of the feature weights to $0$ and thus can perform feature selection.
\begin{rm}
The $w_l^\beta$ term provides an additional degree of non-linearity to the $LW$-$k$-means objective function. Also notice that for $\beta=2$, though sparse $k$-means objective function (\ref{sparse}) and the $LW$-$k$-means objective function uses the same term, they are not similar at all. The optimal value for the weight for a given set of cluster centroids for sparse $k$-means algorithm does not have a closed form expression but for $LW$-$k$-means, we can find a closed form expression (section \ref{opti}), which can be used for hypothesis testing purposes for model based clustering. \par
In addition, we note the difference between the Regularized $k$-means \cite{sun2012regularized} and $LW$-$k$-means. The former uses a penalization on the centroids of each cluster for feature selection but the later uses the whole dataset for the same purpose. Since a cluster centroid determined by the underlying $k$-means procedure may not be the actual representative of a whole cluster, using penalization only on the cluster centroids may lead to improper feature selection due to grater loss of information about the naturally occurring groups in the data. 
\end{rm}

    
\begin{figure}
    \centering
    \begin{subfigure}[b]{0.24\textwidth}
        \includegraphics[width=\textwidth]{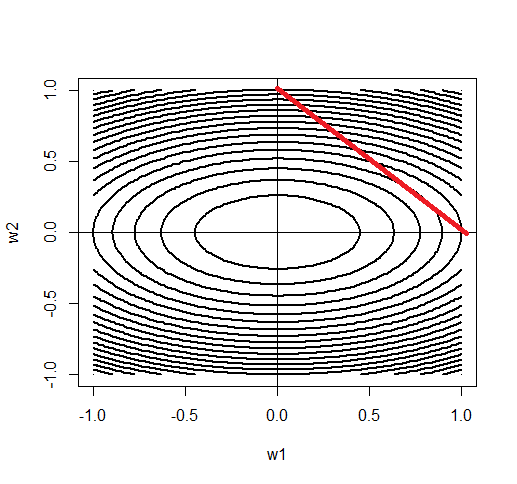}
        \caption{Optimization in $WK$-means}
        \label{fig:gull}
    \end{subfigure}
    \begin{subfigure}[b]{0.24\textwidth}
        \includegraphics[width=\textwidth]{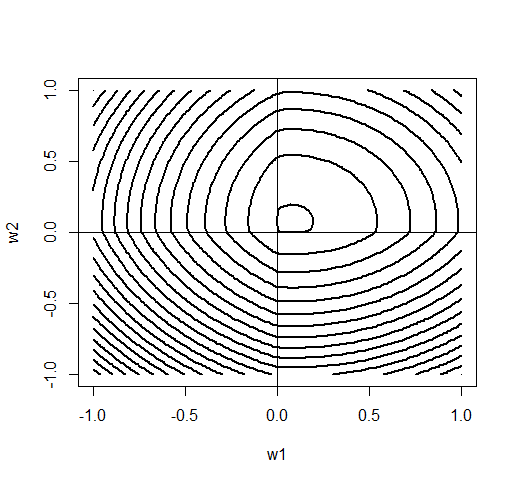}
        \caption{Optimization in $LW$-$k$-means}
        \label{fig:tiger}
    \end{subfigure}
    \caption{Contour plot of the objective functions for $WK$-means and $LW$-$k$-means. }\label{fig:animals}
\end{figure}
\section{The Lasso Weighted $k$-means Algorithm and its COnvergence}
 \label{opti}

 We can minimize \ref{eq lwk} by solving the following three minimization problems. 
\begin{itemize}
    \item \textit{Problem $\mathbf{P_1}$:} Fix $\mathcal{Z}=\mathcal{Z}_0$, $\mathcal{W}=\mathcal{W}_0$, minimize $P(\mathcal{U},\mathcal{Z}_0,\mathcal{W}_0)$ w.r.t $\mathcal{U}$ subject to the constraints \ref{c1} and \ref{c2}.
    \item \textit{Problem $\mathbf{P_2}$:} Fix $\mathcal{U}=\mathcal{U}_0$, $\mathcal{W}=\mathcal{W}_0$, minimize $P(\mathcal{U}_0,\mathcal{Z},\mathcal{W}_0)$ w.r.t $\mathcal{Z}$.
    \item \textit{Problem $\mathbf{P_3}$:} Fix $\mathcal{Z}=\mathcal{Z}_0$, $\mathcal{U}=\mathcal{U}_0$, minimize $P(\mathcal{U}_0,\mathcal{Z}_0,\mathcal{W})$ w.r.t $\mathcal{W}$.
\end{itemize}
It is easily seen that \textit{Problem $\mathbf{P_1}$} can be solved by assigning
\[\hspace{-1cm}
    u_{i ,j}=
    \begin{cases}
    1, & \hspace{-7cm}\text{if } \sum_{l=1}^p (w_l^\beta+\frac{\lambda}{p^2} |w_l|) d(x_{i, l}, z_{j, l}), \\ \hspace{.3cm}\leq \sum_{l=1}^p (w_l^\beta+\frac{\lambda}{p^2} |w_l|) d(x_{i, l}, z_{t, l}) , 1\leq t \leq k,\\
    0,               & \hspace{-7cm}\text{otherwise.}
    \end{cases}
    \]
\textit{Problem $\mathbf{P_2}$} can also be easily solved by assigning,
$$z_{i, j}=\frac{\sum_{i=1}^{n}u_{i, l} x_{i, j}}{\sum_{i=1}^{n}u_{i, l}}.$$
Let, $D_l=\sum_{i=1}^n\sum_{j=1}^ku_{i,j}d(x_{i,l},z_{j,l})$. Hence \textit{Problem $\mathbf{P_3}$} can be stated in the following way.
Let $D_l^0$ denote the value of $D_l$ at $\mathcal{Z}=\mathcal{Z}_0$ and $\mathcal{U}=\mathcal{U}_0$. We note that the objective function can now be written as,

\begin{equation}
\label{new obj}
 P(\mathcal{W})=\frac{1}{n}\sum_{l=1}^p (w_l^\beta+\frac{\lambda}{p^2} |w_l|)D_l^0-\alpha\sum_{l=1}^pw_l. 
\end{equation}
 Now, for solving \textit{Problem $\mathbf{P_3}$}, we note the following. 
\begin{thm}
\label{convex}
The objective function $P(\mathcal{W})$ in \ref{new obj} is convex in $\mathcal{W}$.
\end{thm}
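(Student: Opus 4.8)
The plan is to exhibit $P(\mathcal{W})$ as a nonnegative-weighted sum of convex functions of the individual coordinates $w_l$, together with a linear term, and then invoke the elementary fact that such a combination is convex on the feasible set $\mathbb{R}_+^p$.

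First I would record the sign information that makes everything go through: since $D_l^0=\sum_{i=1}^n\sum_{j=1}^k u_{i,j}d(x_{i,l},z_{j,l})$ is a sum of (squared Euclidean) distances, we have $D_l^0\ge 0$ for every $l$; moreover $\lambda>0$, $\alpha>0$ and $n,p>0$. Next I would handle the three types of summands separately. The map $w\mapsto |w|$ is convex on $\mathbb{R}$. The map $w\mapsto w^\beta$ is convex on $\mathbb{R}_+$: because $\beta\in\mathcal{S}$ is an even integer with $\beta\ge 2$, its second derivative $\beta(\beta-1)w^{\beta-2}$ is nonnegative (indeed $w^{\beta-2}\ge 0$ as $\beta-2$ is even), so $w\mapsto w^\beta$ is in fact convex on all of $\mathbb{R}$, hence a fortiori on $\mathbb{R}_+$. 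Finally, $-\alpha\sum_{l=1}^p w_l$ is linear, hence convex.

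Then I would assemble the pieces. For each fixed $l$ the function $w_l\mapsto\left(w_l^\beta+\tfrac{\lambda}{p^2}|w_l|\right)D_l^0$ is the sum of the convex functions $w_l^\beta$ and $\tfrac{\lambda}{p^2}|w_l|$ scaled by the nonnegative constant $D_l^0$, hence convex, and multiplying by $\tfrac1n>0$ preserves convexity. A function of the vector $\mathcal{W}$ that is a finite sum of convex functions each depending on a single coordinate is convex, and adding a linear term preserves convexity; summing over $l$ and adding $-\alpha\sum_l w_l$ therefore shows $P(\mathcal{W})$ is convex on $\mathbb{R}_+^p$.

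There is essentially no serious obstacle here; the only point worth care is the role of the hypothesis $\beta\in\mathcal{S}$. For convexity on the feasible set $\mathbb{R}_+^p$ it would already suffice to have $\beta\ge 1$, but the evenness of $\beta$ is what delivers convexity (and smoothness) of $w\mapsto w^\beta$ across the whole real line, which is the form in which the term will be used when the coordinatewise minimizer is derived for \textit{Problem} $\mathbf{P_3}$; I would flag this in the write-up so the assumption does not look arbitrary. I would claim only convexity, not strict convexity, since along any coordinate with $D_l^0=0$ the function degenerates to an affine function of that coordinate.
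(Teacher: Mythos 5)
Your proof is correct and takes essentially the same route as the paper's: both decompose $P(\mathcal{W})$ into the power terms, the absolute-value terms, and the linear term, verify convexity of $w\mapsto w^\beta$ via the nonnegativity of its second derivative $\beta(\beta-1)w^{\beta-2}$ (using that $\beta$ is even), and conclude by closure of convexity under nonnegative combinations. Your added remarks on $D_l^0\ge 0$, the precise role of $\beta\in\mathcal{S}$, and the failure of strict convexity when $D_l^0=0$ are correct refinements but do not change the argument.
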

\begin{proof}
See Appendix \ref{convex proof}.
\end{proof}
Now let us solve \textit{Problem $\mathbf{P_3}$} for the case $p=1$. For this, we construct an equivalent problem as follows.\par
\begin{thm}
\label{reformulation}
Suppose $w \in \mathbb{R}$, $D>0$, $\alpha \geq 0$, $\lambda\geq 0$, $\beta \in \mathcal{S}$ be scalars. Consider the following single-dimensional optimization problem $P_1^*$,
\begin{equation}
\label{eq1}
min_{w} \frac{1}{n}w^{\beta} D-\alpha w +\frac{\lambda}{np^2} |w| D.  
\end{equation}
Let $w_1^*$ be a solution to \ref{eq1}. Consider another single-dimensional optimization problem $P_2^*$
\begin{equation}
\label{eq2}
min_{w}\frac{1}{n} w^{\beta} D-\alpha w +\frac{\lambda}{np^2} t D,   
\end{equation} 
subject to 
\begin{equation}
\label{eq3}
t-w \geq 0,
\end{equation}
\begin{equation}
\label{eq4}
t+w \geq 0.
\end{equation}
Suppose $(w^*,t^*)$ be a solution of problem $P_2^*$. Then, $w^*=w_1^*.$
\end{thm}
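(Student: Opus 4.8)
The plan is to exploit the fact that the auxiliary variable $t$ in $P_2^*$ is just a slack encoding of $|w|$: since $D>0$, the coefficient $\tfrac{\lambda}{np^2}D$ multiplying $t$ is non‑negative, so at any optimum of $P_2^*$ the inequality $t\ge|w|$ forced by \ref{eq3}--\ref{eq4} is tight and $P_2^*$ collapses onto $P_1^*$. First I would record a uniqueness statement for \ref{eq1}. Write $f(w)=\tfrac1n w^{\beta}D-\alpha w+\tfrac{\lambda}{np^2}|w|D$. Because $\beta\in\mathcal{S}$ is an even integer $\ge 2$, the map $w\mapsto\beta w^{\beta-1}$ is strictly increasing on $\mathbb{R}$ (its exponent $\beta-1$ is odd), so $w\mapsto w^{\beta}$ is strictly convex; as $D>0$, the term $\tfrac1n w^{\beta}D$ is strictly convex, and adding the convex term $\tfrac{\lambda}{np^2}|w|D$ (cf.\ Theorem \ref{convex}) together with the affine term $-\alpha w$ keeps $f$ strictly convex. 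Moreover $f$ is coercive, since $w^{\beta}$ dominates the linear terms as $|w|\to\infty$, so a minimizer exists and, by strict convexity, it is unique; by hypothesis this minimizer is $w_1^*$.

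Next I would compare the two optimal values. For any feasible pair $(w,t)$ of $P_2^*$, constraints \ref{eq3}--\ref{eq4} give $t\ge|w|$, hence, since $\tfrac{\lambda}{np^2}D\ge 0$,
\[
\tfrac1n w^{\beta}D-\alpha w+\tfrac{\lambda}{np^2}tD \;\ge\; \tfrac1n w^{\beta}D-\alpha w+\tfrac{\lambda}{np^2}|w|D \;=\; f(w)\;\ge\; f(w_1^*).
\]
Thus the optimal value of $P_2^*$ is at least $f(w_1^*)$. Conversely, $(w_1^*,|w_1^*|)$ is feasible for $P_2^*$ and attains objective value $f(w_1^*)$, so the optimal value of $P_2^*$ equals $f(w_1^*)$ and is attained there.

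Finally, let $(w^*,t^*)$ be any solution of $P_2^*$; its objective value is $f(w_1^*)$, so applying the displayed chain of inequalities to $(w^*,t^*)$ gives
\[
f(w_1^*)\;=\;\tfrac1n (w^*)^{\beta}D-\alpha w^*+\tfrac{\lambda}{np^2}t^*D\;\ge\;f(w^*)\;\ge\;f(w_1^*),
\]
so every inequality is an equality; in particular $f(w^*)=f(w_1^*)$. Since $f$ has a unique minimizer, $w^*=w_1^*$, which is the assertion. (As a by‑product, when $\lambda>0$ the first equality also forces $t^*=|w^*|$.) The only real subtlety is the uniqueness of the minimizer of $f$; everything else is an elementary comparison of objective values, so the main thing to get right is a clean strict‑convexity argument for $w\mapsto w^{\beta}$, which is where using $\beta$ even and $\ge 2$ is essential.
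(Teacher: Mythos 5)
Your proposal is correct, and its core — treating $t$ as a slack variable for $|w|$ and comparing objective values of $P_2^*$ against the feasible point $(w_1^*,|w_1^*|)$ — is the same elementary comparison the paper performs (the paper adds the two resulting inequalities to first extract $t^*=|w^*|$ and then deduce $f(w^*)=f(w_1^*)$, whereas you get the same equality from a single sandwich of inequalities; this is a cosmetic difference). Where you genuinely diverge is in the last step. The paper concludes $w^*=w_1^*$ from $f(w^*)=f(w_1^*)$ by asserting that the identity ``is true for all $\alpha\geq 0$ and $\lambda\geq 0$,'' which is not a legitimate inference since $\alpha$ and $\lambda$ are fixed data of the problem; as written, the paper only shows that $w^*$ is \emph{a} minimizer of $P_1^*$. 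You instead close the gap properly by proving that $f$ is strictly convex (using that $\beta\in\mathcal{S}$ is an even integer $\geq 2$, so $w\mapsto w^{\beta}$ has strictly increasing derivative, and $D>0$) and coercive, hence has a unique minimizer, from which $f(w^*)=f(w_1^*)$ forces $w^*=w_1^*$. Your version is therefore more rigorous than the paper's and also delivers, as by-products, the existence and uniqueness claims that the paper later needs in Theorem \ref{solve 1 d}; the only thing the paper's KKT-free argument ``buys'' that yours does not is nothing — it actually relies on the uniqueness you prove without establishing it.
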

\begin{proof}
See Appendix \ref{reformulation proof}.
\end{proof}
Before we solve problem $P_2^*$, we consider the following definition.
\begin{defn}
\label{S}
For scalars $x$ and $y \geq 0$, the function $S(\cdot,\cdot)$ is defined as, 
\[
    S(x,y)=
    \begin{cases}
    x-y, & \text{if } x>y,\\
    x+y,               & \text{if } x<-y,\\
    0, & \text{otherwise.}
    \end{cases}
    \]
\end{defn}
We now solve problem $P_2^*$ of Theorem \ref{reformulation} by using Theorem \ref{solve alt}.
\begin{thm}
\label{solve alt}
Consider the 1-D optimization problem $P_2^*$ of Theorem \ref{reformulation}. Let $D>0$ and $(w^*,t^*)$ be a solution to problem $P_2^*$. Then $w^*$ is given by,
$$w^*=\Bigg[\frac{1}{\beta}S(\frac{n\alpha}{D},\frac{\lambda}{p^2})\Bigg]^{\frac{1}{\beta-1}}.$$
\end{thm}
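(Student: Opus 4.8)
The plan is to solve the smooth convex program $P_2^*$ by the Karush--Kuhn--Tucker (KKT) conditions. Since $\beta\in\mathcal{S}$ is even, $w\mapsto \tfrac1n w^{\beta}D$ is $C^{\infty}$ and convex, the remaining terms $-\alpha w$ and $\tfrac{\lambda}{np^{2}}tD$ are affine, and the two constraints $t-w\ge 0$, $t+w\ge 0$ are affine; hence $P_2^*$ is a convex program for which a linear constraint qualification holds, so the KKT conditions are necessary and sufficient for global optimality. Coercivity of the objective on the feasible set (where $t\ge|w|$, so $w^{\beta}\to\infty$ drives everything to $+\infty$ as $|(w,t)|\to\infty$) guarantees that a minimizer exists; thus it suffices to show that every KKT point has the claimed $w$-coordinate.

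First I would form the Lagrangian
\[
L(w,t,\mu_1,\mu_2)=\tfrac1n w^{\beta}D-\alpha w+\tfrac{\lambda}{np^{2}}tD-\mu_1(t-w)-\mu_2(t+w),\qquad \mu_1,\mu_2\ge 0,
\]
and write down stationarity in $w$ and in $t$ together with complementary slackness $\mu_1(t-w)=0$, $\mu_2(t+w)=0$. Stationarity in $t$ gives $\mu_1+\mu_2=\tfrac{\lambda D}{np^{2}}>0$, so at least one multiplier is strictly positive; combining the corresponding active constraint with primal feasibility $t\ge|w|$ forces $t^{*}=|w^{*}|$. I would then split into the three cases according to which of $\mu_1,\mu_2$ vanish. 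If $\mu_1>0,\ \mu_2=0$ (so $w^{*}\ge 0$, $t^{*}=w^{*}$, $\mu_1=\tfrac{\lambda D}{np^{2}}$), stationarity in $w$ yields $(w^{*})^{\beta-1}=\tfrac1\beta\bigl(\tfrac{n\alpha}{D}-\tfrac{\lambda}{p^{2}}\bigr)$, which is consistent exactly when $\tfrac{n\alpha}{D}\ge\tfrac{\lambda}{p^{2}}$ and gives $w^{*}=\bigl[\tfrac1\beta(\tfrac{n\alpha}{D}-\tfrac{\lambda}{p^{2}})\bigr]^{1/(\beta-1)}$. If $\mu_2>0,\ \mu_1=0$ (so $w^{*}\le 0$), stationarity forces $(w^{*})^{\beta-1}=\tfrac1\beta\bigl(\tfrac{n\alpha}{D}+\tfrac{\lambda}{p^{2}}\bigr)>0$, which is impossible since $\beta-1$ is odd and $w^{*}\le 0$ (here $\alpha,\lambda,D,p>0$ is used); this case is vacuous. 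Finally $\mu_1,\mu_2>0$ forces both constraints active, i.e. $w^{*}=t^{*}=0$, and the sign requirement $\mu_2\ge 0$ shows this branch is the optimal one precisely when $\tfrac{n\alpha}{D}\le\tfrac{\lambda}{p^{2}}$. Assembling the cases and using $\tfrac{n\alpha}{D}>0>-\tfrac{\lambda}{p^{2}}$ to match the ``otherwise'' clause, the outcome is exactly $w^{*}=\bigl[\tfrac1\beta S(\tfrac{n\alpha}{D},\tfrac{\lambda}{p^{2}})\bigr]^{1/(\beta-1)}$ by Definition \ref{S}.

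The main subtlety to be careful about is the bookkeeping of signs: because $\beta-1$ is odd, $x\mapsto x^{1/(\beta-1)}$ is a genuine bijection of $\mathbb{R}$ and $w^{\beta-1}$ has the same sign as $w$, which is what rules out the spurious negative branch and makes the threshold behaviour of $S$ at the boundary $\tfrac{n\alpha}{D}=\tfrac{\lambda}{p^{2}}$ come out correctly (there both the first and the third case collapse to $w^{*}=0$). I would close by noting that convexity of $P_2^*$ upgrades any such KKT point to the global minimizer, and that by Theorem \ref{reformulation} this $w^{*}$ also solves the original penalized problem $P_1^*$, which is what is used in the weight-update step of the algorithm.
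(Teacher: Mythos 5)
Your proposal is correct and follows essentially the same route as the paper: form the Lagrangian of $P_2^*$, write the KKT conditions, use stationarity in $t$ together with complementary slackness to pin down $t^*=|w^*|$, and then split into cases that yield the two branches of $S(\cdot,\cdot)$. The only cosmetic difference is that you organize the cases by which multipliers vanish while the paper cases on whether $\tfrac{n\alpha}{D}$ exceeds $\tfrac{\lambda}{p^2}$, and you add the (welcome but not strictly needed) remarks on coercivity and on sufficiency of KKT via convexity.
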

\begin{proof}
See Appendix \ref{solve alt proof}.
\end{proof}

In Theorem \ref{reformulation}, we showed the equivalence of problems $P_1^*$ and $P_2^*$ and in Theorem \ref{solve alt}, we solved problem $P_2^*$. Hence combining the results of Theorems \ref{reformulation} and \ref{solve alt}, we have the following theorem.
\begin{thm}
\label{solve 1 d}
Suppose $w \in \mathbb{R}$, $D>0$, $\alpha \geq 0$, $\lambda \geq 0$, $\beta\in \mathcal{S}$ be scalars. Consider the following single-dimensional optimization problem,
\begin{equation}
\label{2.4.1}
min_{w} \frac{1}{n} w^{\beta} D-\alpha w +\frac{\lambda}{np^2} |w| D .
\end{equation}
 Then a solution to this problem exists, is unique and is given by,
 $$w^*=\Bigg[\frac{1}{\beta}S(\frac{n\alpha}{D},\frac{\lambda}{p^2})\Bigg]^{\frac{1}{\beta-1}}.$$
\end{thm}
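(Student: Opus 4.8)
The plan is to assemble this statement from the two preceding results — Theorem~\ref{reformulation} (equivalence of $P_1^*$ and $P_2^*$) and Theorem~\ref{solve alt} (closed-form solution of $P_2^*$) — with one extra ingredient supplied separately: an existence argument, since neither of those theorems actually proves that a minimizer exists (each takes a solution as given).

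First I would settle existence and uniqueness for the objective $f(w):=\frac1n w^{\beta}D-\alpha w+\frac{\lambda}{np^2}|w|D$. Because $\beta\in\mathcal{S}$ is an even integer with $\beta\ge 2$ and $D>0$, the term $\frac1n w^{\beta}D$ grows like a positive even power of $w$, so $f(w)\to+\infty$ as $|w|\to\infty$; being continuous and coercive, $f$ attains its infimum on $\mathbb{R}$, which gives existence. For uniqueness, observe that $w\mapsto w^{\beta}$ is strictly convex on $\mathbb{R}$ for even $\beta\ge 2$, that $-\alpha w$ is affine, and that $|w|$ is convex; hence $f$ is strictly convex (sharpening Theorem~\ref{convex}, which already gives convexity), and a strictly convex function has at most one minimizer. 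Therefore the minimizer $w_1^*$ of problem $P_1^*$ in \ref{eq1} — which is exactly the problem \ref{2.4.1} — exists and is unique.

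Next I would invoke Theorem~\ref{reformulation}: since $D>0$, $\alpha\ge 0$, $\lambda\ge 0$ and $\beta\in\mathcal{S}$, any solution $(w^*,t^*)$ of the reformulated problem $P_2^*$ satisfies $w^*=w_1^*$. One should also note that $P_2^*$ does admit a solution — its feasible set is nonempty, and for any feasible $w$ the objective is minimized over $t$ by taking $t=|w|$, which collapses $P_2^*$ to $P_1^*$ and so inherits the existence just established. Finally, Theorem~\ref{solve alt} evaluates this common value: for $D>0$,
$$w^*=\left[\frac1\beta\, S\!\left(\frac{n\alpha}{D},\frac{\lambda}{p^2}\right)\right]^{\frac{1}{\beta-1}},$$
which is well defined because $\frac{n\alpha}{D}\ge 0$ and $\frac{\lambda}{p^2}\ge 0$ force $S(\cdot,\cdot)\ge 0$, while $\beta-1\ge 1$. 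Chaining $w_1^*=w^*$ with the uniqueness from the first step yields the claimed formula for the unique solution of \ref{2.4.1}.

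I expect the only genuinely non-mechanical part to be the existence/coercivity step: Theorems~\ref{reformulation} and \ref{solve alt} are stated conditionally on a solution being available, so the real task is to verify that this hypothesis holds, and the even-degree leading term $w^\beta$ makes that routine. Everything else is a direct substitution of the two cited theorems together with the strict-convexity remark needed to upgrade "a solution" to "the unique solution".
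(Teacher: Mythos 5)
Your proposal follows the same route as the paper: the paper's entire proof of this theorem is the single sentence that the result ``follows trivially'' from Theorems~\ref{reformulation} and~\ref{solve alt}, and your chaining of those two results is exactly that argument. The one substantive thing you add --- the coercivity argument for existence (the even-degree term $\frac{1}{n}w^{\beta}D$ with $D>0$ dominating the linear terms) and the strict-convexity argument for uniqueness --- is a genuine and correct supplement, since both cited theorems are stated conditionally on a minimizer being given, so the paper's one-line proof does not by itself justify the ``exists'' and ``is unique'' clauses of the statement; your version closes that gap.
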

\begin{proof}
The result follows trivially from Theorems \ref{reformulation} and \ref{solve alt}.
\end{proof}

We are now ready to prove Theorem \ref{solve p3}, which essentially gives us the solution to \textit{Problem $\mathbf{P_3}$}.
\begin{thm}
\label{solve p3}
Let $\lambda \geq 0$, $\alpha \geq 0$, $D_l>0$ for all $d \in \{1,\dots ,p\}$ be scalars. Also let, $\beta \in \mathcal{S}$ and $p \in \mathbb{N}$. If $\mathcal{W} \in \mathbb{R}^p$ , then solution to the problem

$$ minimize_{\mathcal{W} \in \mathbb{R}^p} \text{ } P(\mathcal{W})=\frac{1}{n}\sum_{l=1}^p (w_l^\beta+\frac{\lambda}{p^2} |w_l|)D_l^0-\alpha\sum_{l=1}^pw_l$$  
exists, is unique and is given by,
$$w_l^*=\Bigg[\frac{1}{\beta}S(\frac{n\alpha}{D_l},\frac{\lambda}{p^2})\Bigg]^{\frac{1}{\beta-1}} \text{ } \forall
l \in\{1,\dots ,p\}.$$
\begin{proof}
See Appendix \ref{solve p3 proof}.
\end{proof}

\end{thm}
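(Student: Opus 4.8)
The plan is to exploit the fact that the objective function $P(\mathcal{W})$ is \emph{separable} in the coordinates of $\mathcal{W}$. Writing
\[
P(\mathcal{W}) = \sum_{l=1}^p g_l(w_l), \qquad g_l(w) := \frac{1}{n} w^{\beta} D_l^0 - \alpha w + \frac{\lambda}{np^2} |w| D_l^0,
\]
each summand $g_l$ depends on $\mathcal{W}$ only through its $l$-th component. Hence minimizing $P$ over $\mathbb{R}^p$ decouples into $p$ independent one-dimensional problems: if $w_l^*$ minimizes $g_l$ over $\mathbb{R}$ for every $l$, then for any $\mathcal{W} = (w_1,\dots,w_p)'$ one has $P(\mathcal{W}) = \sum_l g_l(w_l) \geq \sum_l g_l(w_l^*) = P(\mathcal{W}^*)$ with $\mathcal{W}^* := (w_1^*,\dots,w_p^*)'$, so $\mathcal{W}^*$ is a global minimizer; conversely, if $\widehat{\mathcal{W}}$ is any minimizer of $P$, each coordinate $\widehat{w}_l$ must minimize $g_l$ (otherwise replacing that single coordinate by a better one strictly decreases $P$), and since — as shown in the next step — each $g_l$ has a \emph{unique} minimizer, the minimizer of $P$ is unique.

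The second step is to invoke Theorem \ref{solve 1 d} coordinatewise. For each fixed $l$, the function $g_l$ is exactly the single-dimensional objective appearing in \ref{2.4.1} with $D$ taken to be $D_l^0 > 0$ and with the same $n$, $\alpha$, $\lambda$, $\beta$, $p$; the hypotheses $\alpha \geq 0$, $\lambda \geq 0$, $\beta \in \mathcal{S}$, $D_l^0 > 0$ are inherited from the present statement. Theorem \ref{solve 1 d} then gives that $g_l$ has a unique minimizer equal to $\big[\frac{1}{\beta} S(\frac{n\alpha}{D_l^0}, \frac{\lambda}{p^2})\big]^{1/(\beta-1)}$. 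Feeding this into the separability reduction of the previous paragraph yields existence and uniqueness of the minimizer of $P$ and identifies its $l$-th coordinate as $w_l^* = \big[\frac{1}{\beta} S(\frac{n\alpha}{D_l^0}, \frac{\lambda}{p^2})\big]^{1/(\beta-1)}$, which is the asserted formula.

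I do not expect a genuine obstacle: the whole content is the separability observation that turns the $p$-dimensional problem into $p$ copies of the already-solved scalar problem of Theorem \ref{solve 1 d}. The only points deserving a line of care are (i) spelling out the reduction in both directions, so that uniqueness — not merely existence — is transferred from the scalar case, and (ii) noting that each $w_l^*$ is in fact nonnegative, hence feasible for constraint \ref{c4}: since $\alpha \geq 0$, $D_l^0 > 0$, $\lambda \geq 0$ we have $\frac{n\alpha}{D_l^0} \geq 0 \geq -\frac{\lambda}{p^2}$, so $S(\frac{n\alpha}{D_l^0}, \frac{\lambda}{p^2}) \geq 0$ by Definition \ref{S}, and as $\beta$ is even the exponent $\frac{1}{\beta-1}$ is the reciprocal of an odd integer, so the root is well defined on $[0,\infty)$ and $w_l^* \geq 0$.
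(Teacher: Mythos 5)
Your proposal is correct and follows essentially the same route as the paper's own proof: observe that $P(\mathcal{W})$ is separable into $p$ one-dimensional objectives and apply Theorem \ref{solve 1 d} to each coordinate. The extra care you take in transferring uniqueness and checking nonnegativity of $w_l^*$ is sound but does not change the argument.
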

\label{lwkalgo}

Algorithm \ref{alg1} gives a formal description of the $LW$-$k$-means algorithm.

\begin{algorithm}
\SetAlgoLined
 \KwData{$\mathcal{X}$, $k$, $\frac{\lambda}{p^2}$, $\epsilon$}
 \KwResult{$\mathcal{U}$, $\mathcal{Z}$, $\mathcal{W}$}
 \textbf{initialization}: 
Randomly pick $k$ datapoints $\mathbf{x}_1,\dots,\mathbf{x}_k$ from $\{\mathbf{x}_1,\dots,\mathbf{x}_n\}.$ \\
Set $\mathcal{Z}=[\mathbf{x}_1,\dots,\mathbf{x}_k]'$\\
   $\mathcal{W}=[\frac{1}{p},\dots ,\frac{1}{p}]$.\\
   $P_1=0$\\
   $P_2$= A very large value\\
   \While{$|P_1-P_2|>\epsilon$}{
     $P_1=\frac{1}{n}\sum_{i=1}^n\sum_{j=1}^k\sum_{l=1}^p (w_l^\beta+\frac{\lambda}{p^2}|w_l|)u_{i,j}d(x_{i,l},z_{j,l})-\alpha\sum_{l=1}^pw_l,$\\
     Update $\mathcal{Z}$ by 
    $$z_{i, j}=\frac{\sum_{i=1}^{n}u_{i, l} x_{i, j}}{\sum_{i=1}^{n}u_{i, l}}$$
     Update $\mathcal{W}$ by\\
              \[
    w_l=
    \begin{cases}
    0& \text{if } D_l=0,\\
 \Bigg[\frac{1}{\beta}S(\frac{n\alpha}{D_l},\frac{\lambda}{p^2})\Bigg]^{\frac{1}{\beta-1}} & \text{otherwise,}
    \end{cases}
    \]\\
    where $D_l=\sum_{i=1}^{n}\sum_{j=1}^{k}u_{i, j}d(x_{i, l}, z_{j,l})$.\\
   
        Update $\mathbf{U}$ by 
    \[\small
    u_{i, j}=
    \begin{cases}
    1, & \text{if } \sum_{l=1}^p (w_l^\beta+\frac{\lambda}{p^2} |w_l|) d(x_{i, l}, z_{j, l})\\ 
    &  \leq \sum_{l=1}^p (w_l^\beta+\frac{\lambda}{p^2} |w_l|) d(x_{i, l}, z_{t, l}) , 1\leq t \leq k\\
    0,               & \text{otherwise.} 
    \end{cases}
    \]
    $P_2=\frac{1}{n}\sum_{i=1}^n\sum_{j=1}^k\sum_{l=1}^p (w_l^\beta+\frac{\lambda}{p^2} |w_l|)u_{i,j}d(x_{i,l},z_{j,l})-\alpha\sum_{l=1}^pw_l$
}

\caption{The $LW$-$k$-means Algorithm}
\label{alg1}
\end{algorithm}

We now prove the convergence of the iterative steps in the $LW$-$k$-means algorithm. This result is proved in the following theorem.
The proof of convergence of the $LW$-$k$-means can be directly derived from \cite{tseng2001convergence}. We only state the result in Theorem \ref{convergence}. The proof of this result is given in Appendix \ref{convergence proof}.
\begin{thm}
\label{convergence}
The $LW$-$k$-means algorithm converges after a finite number of iterations.
\end{thm}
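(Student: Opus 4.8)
The plan is to recognize the $LW$-$k$-means algorithm as an instance of block coordinate descent (a.k.a.\ the Gauss--Seidel method) applied to the objective $P(\mathcal{U},\mathcal{Z},\mathcal{W})$, with the three blocks being $\mathcal{U}$, $\mathcal{Z}$, and $\mathcal{W}$, and then invoke the finite-termination argument that is standard for such schemes when one of the coordinate blocks ranges over a finite set. First I would observe that each of \textit{Problems} $\mathbf{P_1}$, $\mathbf{P_2}$, $\mathbf{P_3}$ is solved \emph{exactly} at every iteration: $\mathbf{P_1}$ by the nearest-centroid assignment rule, $\mathbf{P_2}$ by the cluster-mean update, and $\mathbf{P_3}$ by the closed-form weight update, which is the unique global minimizer by Theorem~\ref{solve p3}. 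Consequently the sequence of objective values $\{P^{(m)}\}_{m\geq 1}$ produced by successive sweeps is nonincreasing, since each of the three sub-updates can only decrease (or leave unchanged) the current value of $P$.

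Next I would argue that the objective is bounded below, so that the monotone sequence $\{P^{(m)}\}$ converges. Boundedness below follows because, for fixed $\mathcal{U}$ and $\mathcal{Z}$, the map $\mathcal{W}\mapsto P$ is the convex coercive-from-below function of \ref{new obj} (its infimum over $\mathbb{R}_+^p$ is attained, by Theorem~\ref{solve p3}), and there are only finitely many distinct partition matrices $\mathcal{U}$; moreover for a fixed $\mathcal{U}$ the centroid update already minimizes over $\mathcal{Z}$, so $P$ restricted to the image of the algorithm is bounded below by the minimum over the finite collection of partitions of these partition-wise minima. Hence $P^{(m)}\downarrow P^\ast$ for some finite $P^\ast$.

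Finally I would push the monotone-convergence fact to genuine finite termination by exploiting the finiteness of the set of partition matrices. The key step is this: the value of $P$ at the end of a full sweep is determined by the partition $\mathcal{U}$ alone, because given $\mathcal{U}$ the subsequent centroid and weight updates are deterministic closed-form minimizers; call this value $\phi(\mathcal{U})$. If two sweeps produced the same $\mathcal{U}$ they would produce the same $\mathcal{Z}$ and $\mathcal{W}$, hence the same iterate, and the algorithm would have stabilized. Since $\{P^{(m)}\}$ is strictly decreasing until it stabilizes and $\phi$ takes only finitely many values (one for each of the at most $k^n$ partitions), the sequence cannot decrease forever; after finitely many iterations it must repeat a partition, at which point $|P_1-P_2|=0<\epsilon$ and the \textbf{while} loop in Algorithm~\ref{alg1} terminates. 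The main obstacle is making the ``strictly decreasing until stabilized'' claim airtight: one must verify that if a sweep does \emph{not} change the objective value then it does not change the iterate either, which requires the uniqueness of the minimizers in $\mathbf{P_2}$ and $\mathbf{P_3}$ (the latter supplied by Theorem~\ref{solve p3}) together with a tie-breaking convention in $\mathbf{P_1}$ so that an unchanged objective forces an unchanged assignment; this is exactly the point where the cited convergence theory of \cite{tseng2001convergence} for block coordinate descent with a finite block does the work, and I would simply appeal to it.
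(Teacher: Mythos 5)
Your proposal is correct in substance but follows a genuinely different route from the paper's own argument. The paper's proof is the short monotone-convergence one: each sweep of the \textbf{while} loop does not increase the objective, the sequence of objective values $\{f_m\}$ is bounded below, hence it converges and is Cauchy, so eventually $|f_{m+1}-f_m|<\epsilon$ and the stopping criterion of Algorithm~\ref{alg1} is triggered. That argument only needs $\epsilon>0$ and never uses the finiteness of the set of partition matrices; in exchange it proves only that the $\epsilon$-tolerance test is eventually passed, not that the iterates themselves stabilize. Your argument is the classical $k$-means-style one: the post-sweep objective value is a function $\phi(\mathcal{U})$ of the partition alone, there are at most $k^n$ partitions, and (given uniqueness of the $\mathbf{P_2}$, $\mathbf{P_3}$ minimizers plus a tie-breaking rule in $\mathbf{P_1}$) a repeated partition forces exact stabilization, so the algorithm terminates finitely even with $\epsilon=0$. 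This buys a strictly stronger conclusion at the cost of the tie-breaking bookkeeping you correctly flag, which you (like the paper) delegate to \cite{tseng2001convergence}. Two small remarks: your boundedness-below argument is actually more careful than the paper's, which simply asserts $f_t\geq 0$ even though the $-\alpha\sum_{l}w_l$ term makes nonnegativity non-obvious; on the other hand, both your appeal to Theorem~\ref{solve p3} and the paper's reasoning quietly assume $D_l>0$, and the degenerate case $D_l=0$ (where the algorithm explicitly sets $w_l=0$) should be mentioned if one wants the infimum over $\mathcal{W}$ to be finite in full generality.
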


\section{Strong Consistency of the $LW$-$k$-means Algorithm}
\label{strong}
In this section, we will prove a strong consistency result pertaining to the $LW$-$k$-means algorithm. Our proof of strong consistency result is slightly trickier than that of Pollard \cite{pollard1981strong} in the sense that we have to deal with the weight terms which may not be bounded. We first prove the existence of an $\alpha$, which depends on the datasets itself (Theorem \ref{bddwt}), such that $\exists \lambda_0$ for which, we can find an $l$ such that $w_l^{(n)}>C(P)>0$ $\forall 0<\lambda<\lambda_0$ (Theorem \ref{lambda}). In Theorem \ref{strongth}, we prove the main result pertaining to the strong consistency of proposed algorithm.
Throughout this section, we will assume that $d(x,y)=(x-y)^2$, i.e. the distance used is the squared Euclidean distance. We will also assume that the underlying distribution has a finite second moment.
\subsection{The Strong Consistency Theorem}
\label{strong1}
In this section we prove the strong consistency of the proposed method for the following setup. Let $\mathbf{X}_1$,\dots ,$\mathbf{X}_n$ be independent random variables with a common distribution $P$ on $\mathbb{R}^p$.
\begin{rem}
\normalfont
$P$ can be thought of a mixture distribution in the context of clustering but this assuption is not necessary for the proof.
\end{rem}
Let $P_n$ denote the empirical measure based on $\mathbf{X}_1$,\dots ,$\mathbf{X}_n$. 
For each measure $Q$ on $\mathbb{R}^p$, each $\mathcal{W} \in \mathbb{R}^p$ and each finite subset $A$ of $\mathbb{R}^p$, define
\begin{align*}
    \Phi(\mathcal{W},A,Q)&:=\int min_{a \in A} \sum_{l=1}^p (w_l^\beta+\frac{\lambda}{p^2}|w_l|)(x_l-a_l)^2 Q(dx)\\
    &-\alpha(Q)\sum_{l=1}^p w_l
\end{align*}

and 
$$m_k(Q):=inf\{\Phi(\mathcal{W},A,Q)| A \in \mathbb{R}_k^p, \mathcal{W} \in \mathbb{R}^p\}.$$
Here $\alpha(Q)$ is a functional. $\alpha(Q)$  and $\lambda$ are chosen as in Theorems \ref{bddwt} and \ref{lambda}.
For a given $k$, let $A_n$ and $\mathcal{W}_n$ denote the optimal sample clusters and weights respectively, i.e. $\Phi(\mathcal{W}_n,A_n,P_n)=m_k(P_n)$. The optimal population cluster centroids and weights are denoted by $\bar{A}(k)$ and $\bar{\mathcal{W}}(k)$ respectively and they satisfy the relation, $\Phi(\bar{\mathcal{W}}(k),\bar{A}(k),P)=m_k(P)$. Our aim is to show $A_n \xrightarrow{a.s.} \bar{A}(k)$ and $\mathcal{W}_n \xrightarrow{a.s.} \bar{\mathcal{W}}(k)$.\par

\begin{thm}
\label{bddwt}
There exists at least one fuctional $\alpha(\cdot)$ such that $\mathbf{1}'\mathcal{W}_n \leq 1$. Moreover $\alpha(P_n) \xrightarrow{a.s.} \alpha(P)$.
\end{thm}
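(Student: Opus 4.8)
The plan is to exhibit a data-dependent functional $\alpha(\cdot)$ chosen small enough, relative to the intrinsic within-cluster spread of the coordinate marginals, that the reward term $-\alpha\sum_l w_l$ in $\Phi$ can never overpower the $w_l^\beta$ cost; this pins the total weight of any minimizer at or below $1$, while the defining formula is transparently continuous along $P_n\to P$. Concretely, for a probability measure $Q$ on $\mathbb{R}^p$ with finite second moment and each $l\in\{1,\dots,p\}$, let $v_l(Q):=\inf\{\int\min_{b\in B}(x_l-b)^2\,Q(dx):B\subset\mathbb{R},\,|B|\le k\}$ be the optimal one-dimensional $k$-means value of the $l$-th marginal of $Q$; put $\delta(Q):=\min_{1\le l\le p}v_l(Q)$ and set $\alpha(Q):=\delta(Q)/p^{\beta-1}$.

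\textbf{Step 1 ($\mathbf{1}'\mathcal{W}_n\le1$).} Write $w_1,\dots,w_p$ for the entries of $\mathcal{W}_n$. Since $\Phi(\mathbf{0},A,P_n)=0$ for any $A$, we have $\Phi(\mathcal{W}_n,A_n,P_n)=m_k(P_n)\le0$. Splitting the integral in $\Phi$ over the cells $R_1,\dots,R_k$ on which the respective points of $A_n=\{a^{(1)},\dots,a^{(k)}\}$ attain the weighted minimum, and setting $\mathcal{D}_l:=\sum_j\int_{R_j}(x_l-a^{(j)}_l)^2\,P_n(dx)$, this inequality becomes $\sum_l(w_l^\beta+\frac{\lambda}{p^2}w_l)\mathcal{D}_l\le\alpha(P_n)\sum_l w_l$. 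Replacing each $a^{(j)}_l$ by the within-cell mean can only decrease $\mathcal{D}_l$, and the resulting quantity is the within-group sum of squares of a partition of the $l$-th marginal of $P_n$ into at most $k$ groups, so $\mathcal{D}_l\ge v_l(P_n)\ge\delta(P_n)$. Dropping the nonnegative $\lambda$-term and using Jensen's inequality for $t\mapsto t^\beta$ (here $\beta\ge2$), i.e.\ $\sum_l w_l^\beta\ge p^{1-\beta}(\sum_l w_l)^\beta$, yields $\delta(P_n)\,p^{1-\beta}(\sum_l w_l)^\beta\le\alpha(P_n)\sum_l w_l$. If $\sum_l w_l=0$ we are done; otherwise dividing by $\sum_l w_l$ and substituting $\alpha(P_n)p^{\beta-1}/\delta(P_n)=1$ gives $(\sum_l w_l)^{\beta-1}\le1$, i.e.\ $\mathbf{1}'\mathcal{W}_n\le1$. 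Note that this step never uses $\lambda$, which is why $\alpha$ can be fixed before $\lambda$ in Theorem~\ref{lambda}.

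\textbf{Step 2 ($\alpha(P_n)\xrightarrow{a.s.}\alpha(P)$).} It suffices to prove $v_l(P_n)\to v_l(P)$ a.s.\ for each $l$, since then $\delta(P_n)=\min_l v_l(P_n)\to\min_l v_l(P)=\delta(P)$ and dividing by the constant $p^{\beta-1}$ concludes. By Varadarajan's theorem $P_n\to P$ weakly a.s., and by the SLLN $\int x_l^2\,P_n(dx)\to\int x_l^2\,P(dx)$ a.s.\ using the finite second moment; hence the $l$-th marginal of $P_n$ converges to that of $P$ weakly together with its second moment. Along this mode of convergence the optimal $k$-means value is continuous: upper semicontinuity holds because $v_l$ is an infimum of maps $Q\mapsto\int\min_{b\in B}(x_l-b)^2\,Q(dx)$ that are continuous under weak-plus-second-moment convergence, and lower semicontinuity because near-optimal centroid sets may be confined to a fixed compact interval. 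This is precisely Pollard's continuity argument \cite{pollard1981strong}, and it gives $v_l(P_n)\to v_l(P)$ a.s.

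\textbf{Expected main obstacle.} The only genuinely technical ingredient is the continuity of the $k$-means functional in Step 2, and within it the a-priori confinement of near-optimal centroids to a fixed compact set — standard (Pollard) but not formally trivial, and exactly where the finite-second-moment hypothesis is used. A secondary point worth recording: if the $l$-th marginal of $P$ sits on at most $k$ atoms then $v_l(P)=0$, so $\alpha(P)=0$ and the conclusion degenerates to $\mathcal{W}_n\to\mathbf{0}$; to keep $\alpha(P)>0$ as the objective requires, one assumes each marginal of $P$ charges more than $k$ points, the usual nondegeneracy hypothesis for $k$-means consistency results.
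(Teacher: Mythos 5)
Your proof is correct, but Step 1 takes a genuinely different route from the paper's. The paper works from the closed-form stationarity formula $w_l^{(n)}=\bigl[\tfrac{1}{\beta}S\bigl(\tfrac{n\alpha(P_n)}{D_l^*},\tfrac{\lambda}{p^2}\bigr)\bigr]^{\frac{1}{\beta-1}}$ of Theorem \ref{solve p3}: it bounds the soft-threshold by $S(x,y)\le x$, sums over $l$, and \emph{reverse-engineers} $\alpha$ as the reciprocal power-sum $\alpha(P_n)=\bigl(\sum_l[n/(\beta D_l)]^{1/(\beta-1)}\bigr)^{-(\beta-1)}$ (with $D_l$ the optimal one-dimensional $k$-means cost of the $l$-th marginal, and $D_l^*\ge D_l$ by optimality) so that the resulting bound telescopes to exactly $1$; this $\alpha$ is the Lagrange multiplier from the $WK$-means derivation, which is why the paper also recommends it computationally. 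You instead never touch the first-order conditions: you compare the optimum with $\mathcal{W}=\mathbf{0}$ to get $m_k(P_n)\le 0$, lower-bound each coordinate dispersion by the marginal $k$-means value $v_l(P_n)\ge\delta(P_n)$, and close with the power-mean inequality $\sum_l w_l^\beta\ge p^{1-\beta}(\sum_l w_l)^\beta$, arriving at a different (more conservative, $\min$-based) functional $\alpha(Q)=\delta(Q)/p^{\beta-1}$. Your argument is more robust in that it only uses the variational characterization of the minimizer, and it is equally compatible with Theorem \ref{lambda} since all that is needed downstream is $\mathbf{1}'\mathcal{W}_n\le 1$ and $\alpha(P_n)\xrightarrow{a.s.}\alpha(P)>0$; the paper's choice buys a larger, algorithmically meaningful $\alpha$. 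Step 2 is essentially the paper's argument (both reduce to Pollard's strong consistency applied to each one-dimensional marginal). Your closing caveat — that $\alpha(P)=0$ when some marginal of $P$ charges at most $k$ points — is a genuine nondegeneracy condition that the paper's proof also needs implicitly (there $D_l^*=0$ makes the weight formula undefined), so flagging it is appropriate rather than a defect.
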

\begin{proof}
See appendix \ref{bddwt proof}.
\end{proof}

\begin{rem}
\normalfont
One can choose $\alpha$ as follows.
\begin{itemize}
    \item Run the $k$-means algorithm on the entire dataset. Let $U$ and $Z$ be the correspong cluster assignment matrix and the set of centroids respectively.
    \item Choose $\alpha_n(P_n)=\frac{1}{\Bigg(\sum_{l=1}^p [\frac{n}{\beta D_l}]^{\frac{1}{\beta-1}}\Bigg)^{\beta-1}}.$
\end{itemize}
\end{rem}

\begin{rem}
\normalfont
Note that if one chooses $\alpha$ to be constant, then $\sum_{l=1}^p w_l^{(n)}$ will be bounded above by $\alpha^{\frac{1}{\beta-1}} \sum_{l=1}^p [\frac{n}{\beta D_l}]^{\frac{1}{\beta-1}}$, which converges almost surely to a constant by \cite{pollard1981strong}. In what follows, we only require the $w_l^{(n)}$ terms to be almost surely bounded by a positive constant. That requirement is also satisfied if we choose any positive constant $\alpha>0$.
\end{rem}

\begin{thm}
\label{d'}
Let $\mathcal{U}^*_n$ and $D^*_l$ have the same meaning as in the proof of Theorem \ref{bddwt}. Let $\bar{x_l}=\int x_l P_n(d\mathbf{x})$ denote the mean of the $j^{th}$ feature and $D_l=\sum_{i=1}^n d(x_{i,l},\bar{x_l})$. Then $\exists d' \in \{1,\dots,p\}$ such that $D^*_{d'} \leq D_{d'}$. 
\end{thm}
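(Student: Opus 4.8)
The plan is to compare the within-cluster dispersion $D_l^*$ produced by the $k$-means partition $\mathcal{U}_n^*$ against the total (one-cluster) dispersion $D_l = \sum_{i=1}^n d(x_{i,l}, \bar{x}_l)$, and to argue that it is impossible for the $k$-means partition to \emph{increase} the dispersion in every coordinate simultaneously. First I would recall the elementary fact that for any fixed cluster $C_j$ with centroid $z_{j,l} = \frac{1}{|C_j|}\sum_{i \in C_j} x_{i,l}$, the centroid minimizes $\sum_{i \in C_j}(x_{i,l}-a)^2$ over $a \in \mathbb{R}$; in particular, writing $\bar{x}_l$ for the global mean of feature $l$, the classical ANOVA/variance-decomposition identity gives
\begin{equation*}
\sum_{i=1}^n (x_{i,l}-\bar{x}_l)^2 = \sum_{j=1}^k \sum_{i \in C_j}(x_{i,l}-z_{j,l})^2 + \sum_{j=1}^k |C_j|(z_{j,l}-\bar{x}_l)^2,
\end{equation*}
so that $D_l^* = \sum_{j,i} u_{i,j}^* d(x_{i,l},z_{j,l}) \le D_l$ holds coordinatewise for the $k$-means partition --- the between-cluster term is nonnegative. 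This already shows $D_l^* \le D_l$ for \emph{every} $l$, which is stronger than the claimed existence of a single $d'$.

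Since the excerpt only asks for the existence of one such $d'$, the cleanest route is: invoke the variance-decomposition identity above for a fixed $l$, observe the between-group sum of squares is a sum of squares hence $\ge 0$, and conclude $D_{d'}^* \le D_{d'}$ for any choice of $d'$ (e.g. $d'=1$). If the intended statement is really about the optimal $k$-means centroids $\mathcal{Z}$ rather than arbitrary centroids, I would note that the Lloyd update step $z_{j,l} = \frac{\sum_i u_{i,l}x_{i,j}}{\sum_i u_{i,l}}$ precisely sets $z_{j,l}$ to the cluster mean, so the identity applies verbatim. The essential observation is just that splitting a set of points into groups and re-centering each group can never make the total squared-deviation larger.

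The main (and only) subtlety I anticipate is bookkeeping: making sure the definition of $D_l^*$ used here matches the $D_l^*$ from the proof of Theorem~\ref{bddwt} (i.e.\ that it is evaluated at the $k$-means partition $\mathcal{U}_n^*$ with its associated optimal centroids, not at some arbitrary centroid configuration), and handling the trivial degenerate case where a cluster is empty (in which case that cluster contributes $0$ to both sides and can be dropped). Neither of these is a genuine obstacle. I would therefore structure the proof as: (i) fix $l=d'$; (ii) write down the variance decomposition for feature $d'$ using the cluster means as centroids; (iii) discard the nonnegative between-cluster term to get $D_{d'}^* \le D_{d'}$; (iv) note this in fact holds for all $d' \in \{1,\dots,p\}$, which suffices.
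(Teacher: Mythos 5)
Your proof is correct, but it takes a genuinely different route from the paper's. You prove the stronger, coordinatewise statement $D_l^* \le D_l$ for every $l$ directly, via the within/between variance decomposition (the global mean's total sum of squares equals the within-cluster sum of squares about the cluster means plus a nonnegative between-cluster term), which holds for \emph{any} partition once the centroids are the cluster means. The paper instead argues by contradiction at the level of the aggregate objective: if $D_l^* > D_l$ for all $l$, then replacing $A_n$ by the feasible single-point configuration $A=\{\bar{\mathbf{x}}\}$ (which belongs to $\mathbb{R}^p_k$ and realizes $D_l$ in every coordinate) while keeping the same weights would strictly decrease $\Phi(\cdot,\cdot,P_n)$ below $m_k(P_n)$, contradicting the definition of $m_k$ as an infimum. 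Your approach buys a stronger conclusion and makes the mechanism transparent; the paper's buys robustness to one subtlety that you flag but slightly underestimate: the optimal centroids $A_n$ are only forced to be within-cluster means in those coordinates $l$ where the coefficient $w_l^\beta+\frac{\lambda}{p^2}|w_l|$ is strictly positive — when it vanishes, $z_{j,l}$ is unconstrained by optimality and your coordinatewise inequality can fail for that feature. The existence of a single $d'$ still survives (any coordinate with positive coefficient works, and the paper's aggregate contradiction never needs the per-coordinate claim), but if you keep your route you should restrict the coordinatewise conclusion to coordinates where the centroid is pinned to the cluster mean, or stipulate (as the algorithm does) that centroids are always taken to be cluster means.
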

\begin{proof}
We prove the theorem using contradiction. Assuming the contrary, suppose, $D^*_l > D_l$ $\forall d \in \{1,\dots,p\}$. Then,
\begin{align*}
m_k(P_n)&=\frac{1}{n}\sum_{l=1}^p({w^{(n)}_l}^\beta+\frac{\lambda}{p^2}|w_l|) D^*_l-\alpha \sum_{l=1}^p w^{(n)}_l\\
& >\frac{1}{n}\sum_{l=1}^p({w^{(n)}_l}^{\beta}+\frac{\lambda}{p^2}|w_l|) D_l-\alpha \sum_{l=1}^p {w^{(n)}}_l,    
\end{align*}

which is a contradiction since $m_k(Q):=inf\{\Phi(\mathcal{W},A,Q)| A \in \mathbb{R}_k^p, \mathcal{W} \in \mathbb{R}^p\}.$

\end{proof}
\begin{rem}
\normalfont
The following theorem illustrate that if $\lambda$ is chosen inside the range $(0,\lambda_0)$, at least one feature weight is bounded below by a positive constant almost surely. This positive constant depends only on the underlying distribution and is thus denoted by $C(P)$. ALso note that $\lambda_0$ depends on the underlying distribution of the datapoints.
\end{rem}
\begin{thm}
\label{lambda}
There exists a constant $\lambda_0>0$ and $d' \in \{1,\dots,p\}$ such that $\forall 0<\lambda<\lambda_0$, $w^{(n)}_{d'} \geq c(P) >0$ almost surely.
\end{thm}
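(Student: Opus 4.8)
The plan is to read $w^{(n)}_{d'}$ off the closed-form weight update of Theorem \ref{solve p3} and to show that the argument of the soft-threshold $S$ stays bounded away from $0$ once $\lambda$ is small enough. Recall that, for the optimal sample clusters $A_n$, the optimal weights are $w^{(n)}_l=\bigl[\tfrac1\beta S\bigl(\tfrac{n\alpha(P_n)}{D^*_l},\tfrac{\lambda}{p^2}\bigr)\bigr]^{1/(\beta-1)}$, where $D^*_l=\sum_{i,j}u^*_{i,j}d(x_{i,l},z^*_{j,l})$ and $S(\cdot,\cdot)$ is as in Definition \ref{S}. Since $\alpha(P_n)>0$ and $D^*_l>0$, the first argument of $S$ is positive, so $w^{(n)}_l$ is bounded below by a positive deterministic constant as soon as $\tfrac{n\alpha(P_n)}{D^*_l}$ is bounded below by a quantity strictly exceeding $\tfrac{\lambda}{p^2}$.

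First I would control $D^*_{d'}$ for the index $d'$ furnished by Theorem \ref{d'}, where $D^*_{d'}\le D_{d'}=\sum_{i=1}^n(x_{i,d'}-\bar x_{d'})^2\le\max_{1\le l\le p}\sum_{i=1}^n(x_{i,l}-\bar x_{l})^2$. By the strong law of large numbers (the common distribution $P$ has a finite second moment), for each fixed $l$ one has $\tfrac1n\sum_{i=1}^n(x_{i,l}-\bar x_{l})^2\xrightarrow{a.s.}\operatorname{Var}_P(X_l)$; writing $M:=\max_{1\le l\le p}\operatorname{Var}_P(X_l)$, which is positive unless $P$ is degenerate (in which case every $D_l=0$ and the statement is vacuous), this gives $\tfrac1n D^*_{d'}\le 2M$ for all large $n$, almost surely. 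Combining with Theorem \ref{bddwt}, which gives $\alpha(P_n)\xrightarrow{a.s.}\alpha(P)>0$, I obtain, almost surely and for all large $n$, the bound $\tfrac{n\alpha(P_n)}{D^*_{d'}}\ge\tfrac{\alpha(P_n)}{2M}\ge\tfrac{\alpha(P)}{4M}$.

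Now set $\lambda_0:=\tfrac{p^2\alpha(P)}{8M}$, a quantity depending only on $P$ (and the fixed constants $p,\beta$). For any $\lambda\in(0,\lambda_0)$ we have $\tfrac{\lambda}{p^2}<\tfrac{\alpha(P)}{8M}$, so, almost surely and for all large $n$, $\tfrac{n\alpha(P_n)}{D^*_{d'}}>\tfrac{\lambda}{p^2}$ and therefore, by Definition \ref{S}, $S\bigl(\tfrac{n\alpha(P_n)}{D^*_{d'}},\tfrac{\lambda}{p^2}\bigr)=\tfrac{n\alpha(P_n)}{D^*_{d'}}-\tfrac{\lambda}{p^2}>\tfrac{\alpha(P)}{4M}-\tfrac{\alpha(P)}{8M}=\tfrac{\alpha(P)}{8M}$. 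Substituting into the formula of Theorem \ref{solve p3} yields $w^{(n)}_{d'}>\bigl(\tfrac{\alpha(P)}{8\beta M}\bigr)^{1/(\beta-1)}=:c(P)>0$, which is the desired conclusion.

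Two points need care, and the second is the genuine subtlety. (i) The degenerate case $M=0$ (when $P$ is concentrated at a point) must be set aside, since it makes the update divide by zero; but then clustering is meaningless and every weight is handled trivially. (ii) The index $d'$ produced by Theorem \ref{d'} may depend on the sample, hence on $n$, and all the limits invoked hold only eventually along each sample path; so what is honestly proved is that, almost surely, $\max_{1\le l\le p}w^{(n)}_l\ge c(P)$ for all sufficiently large $n$, equivalently (there being only $p$ coordinates) some fixed coordinate satisfies $w^{(n)}_{d'}\ge c(P)$ infinitely often. This is exactly the ``at least one weight bounded away from $0$'' property that the proof of Theorem \ref{strongth} requires, and I would phrase the statement in this slightly sharpened form rather than as a bound holding for every $n$.
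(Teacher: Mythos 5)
Your proof is correct and follows essentially the same route as the paper's: both invoke Theorem \ref{d'} to bound $D^*_{d'}$ by the single-feature dispersion $D_{d'}$, use the strong law of large numbers together with $\alpha(P_n)\xrightarrow{a.s.}\alpha(P)$ from Theorem \ref{bddwt} to control the first argument of $S$, and then choose $\lambda_0$ proportional to $\alpha(P)p^2$ divided by a variance so that the soft threshold stays strictly positive. The only substantive difference is your point (ii): the paper treats $d'$ and $\sigma_{d'}^2$ as fixed even though the $d'$ supplied by Theorem \ref{d'} is sample-dependent, whereas your bound through $M=\max_{1\le l\le p}\operatorname{Var}_P(X_l)$ is uniform over all possible choices of $d'$ and therefore more honestly delivers the ``at least one weight bounded away from $0$ eventually, almost surely'' property that the proof of Theorem \ref{strongth} actually uses.
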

\begin{proof}
Let $\bar{x_l}=\int x_l P_n(d\mathbf{x})$ denote the mean of the $j^{th}$ feature. 
Let $D_l=\sum_{i=1}^n d(x_{i,l},\bar{x_l})$. $\mathcal{U}^*_n$ and $D^*_l$ have the same meaning as in the proof of Theorem \ref{bddwt}. Choose $d'$ as in Theorem \ref{d'}. Thus, $D^*_{d'} \leq D_{d'}$. Thus, $\frac{\alpha(P_n)n}{D^*_{d'}} \geq \frac{\alpha(P_n)n}{D_{d'}}$. By the assumption of finite second moment, $\frac{1}{n}D_{d'} \xrightarrow{a.s.} \sigma_{d'^2}$, where, $\sigma^2_{d'}=(E[(\mathbf{X}-E(\mathbf{X}))(\mathbf{X}-E(\mathbf{X})')])_{d'd'}$ is the population variance of the $d'-th$ feature. Here $\mathbf{X}$ is any random variable having distribution $P.$ Again, $\alpha(P_n) \xrightarrow{a.s.} \alpha(P)$ (by Theorem \ref{bddwt}). Since, $S(x,y)$ is a continuous function in $x$, $w^{(n)}_{d'}=\Bigg[\frac{1}{\beta}S\Bigg(\frac{n \alpha(P_n)}{D^*_{d'}},\frac{\lambda}{p^2}\Bigg)\Bigg]^{\frac{1}{\beta-1}}\geq\Bigg[\frac{1}{\beta}S\Bigg(\frac{n \alpha(P_n)}{D_{d'}},\frac{\lambda}{p^2}\Bigg)\Bigg]^{\frac{1}{\beta-1}}\xrightarrow{a.s.}\Bigg[\frac{1}{\beta}S\Bigg(\frac{\alpha(P)}{\sigma_{d'}^2},\frac{\lambda}{p^2}\Bigg)\Bigg]^{\frac{1}{\beta-1}}$.
We can choose $\lambda_0=\frac{1}{4}\frac{\alpha(P)p^2}{\sigma_{d'}^2}$ and $C(P)=\frac{1}{2}\Bigg[\frac{1}{\beta}S\Bigg(\frac{\alpha(P)}{\sigma_{d'}^2},\frac{\lambda}{p^2}\Bigg)\Bigg]^{\frac{1}{\beta-1}}$. Thus $C(P)>0$ $\forall 0<\lambda<\lambda_0$.
\end{proof}
We are now ready to prove the main result of this section, i.e. the consistency theorem. The theorem essentially implies that if $\alpha$ and $\lambda$ are suitably chosen, the set of optimal cluster centroids and the optimal weights tends to the popoulation optima in an almost sure sense. 

\begin{thm}
\label{strongth}
Suppose that $\int \|\mathbf{x}\|^2P(dx)< \infty$ and for each $j=1,\dots,k$, there is a unique set $\bar{A}(j)$ and a unique $\bar{\mathcal{W}}(j)\in \mathbb{R}^p$ such that $\Phi(\bar{\mathcal{W}}(j),\bar{A}(j),P)=m_j(P)$ and $\alpha$ and $\lambda$ are chosen accoring to Theorem \ref{bddwt} and \ref{lambda} respectively. Then $A_n\xrightarrow{a.s.}\bar{A}(k)$ and $\mathcal{W}_n\xrightarrow{a.s.}\bar{\mathcal{W}}(k)$. Moreover, $\Phi(\mathcal{W}_n,A_n,P_n)\xrightarrow{a.s.}\Phi(\bar{\mathcal{W}}(k),\bar{A}(k),P)$. 
\end{thm}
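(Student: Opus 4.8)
The plan is to follow the classical route of Pollard \cite{pollard1981strong}, as adapted by Terada \cite{terada2014strong} and by Chakraborty and Das \cite{chakraborty2019}: (i) reduce the optimisation to a \emph{compact} parameter set for both the centroids and the weights; (ii) establish a uniform strong law of large numbers, so that $\Phi(\mathcal{W},A,P_n)\to\Phi(\mathcal{W},A,P)$ uniformly over that compact set almost surely; and (iii) couple uniform convergence with the uniqueness of the population optimiser to pass to the limit. The two features that make this harder than plain $k$-means are exactly what Theorems \ref{bddwt} and \ref{lambda} provide: the former keeps $\mathcal{W}_n$ inside the compact set $\Delta:=\{\mathcal{W}\in\mathbb{R}_+^p:\mathbf{1}'\mathcal{W}\le 1\}$ (eventually, a.s.) and gives $\alpha(P_n)\xrightarrow{a.s.}\alpha(P)$, while the latter secures a coordinate $d'$ with $w^{(n)}_{d'}\ge C(P)>0$ almost surely, which is what prevents the reweighted dissimilarity from degenerating.

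First I would confine the optimal sample centroids $A_n$ to a fixed bounded ball $M\subset\mathbb{R}^p$ for all large $n$, a.s. Two observations do the work. For a fixed assignment the optimal centroid of a cluster is simply the unweighted coordinatewise mean of its members, because the common factors $w_l^\beta+\frac{\lambda}{p^2}|w_l|$ pull out of the centroid update; hence $A_n$ always lies in the convex hull of the sample. And, exactly as in Pollard's boundedness lemma, since $P(\{\|\mathbf{x}\|\le R\})\to 1$ as $R\to\infty$ and $\frac{1}{n}D_{d'}\xrightarrow{a.s.}\sigma_{d'}^2$, a centroid straying far in coordinate $d'$ would --- because $w^{(n)}_{d'}\ge C(P)$ --- push $\Phi(\mathcal{W}_n,A_n,P_n)$ above the trivial upper bound for $m_k(P_n)$ obtained by placing every centroid at the sample mean, a contradiction; coordinates $l$ whose weight tends to $0$ contribute negligibly, and the corresponding (irrelevant) centroid coordinates may be taken inside the data range. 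Running the same argument on $P$ gives $\bar A(k)\subseteq M$ and, via the construction of $\alpha(\cdot)$, $\bar{\mathcal{W}}(k)\in\Delta$; so the sample and population problems both live over the common compact set $\Delta\times\{A\in\mathbb{R}_k^p:A\subseteq M\}$.

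Next I would establish the uniform SLLN. With $f(\mathbf{x};\mathcal{W},A):=\min_{a\in A}\sum_{l=1}^p\big(w_l^\beta+\frac{\lambda}{p^2}|w_l|\big)(x_l-a_l)^2$, the map $(\mathcal{W},A)\mapsto f(\mathbf{x};\mathcal{W},A)$ is continuous on $\Delta\times\{A\in\mathbb{R}_k^p:A\subseteq M\}$ for each $\mathbf{x}$, and $0\le f(\mathbf{x};\mathcal{W},A)\le c(\|\mathbf{x}\|^2+1)$ with $c(\|\mathbf{x}\|^2+1)\in L^1(P)$ by the second-moment assumption. A standard covering/bracketing argument (Pollard's uniform SLLN; see also \cite{terada2014strong}) then yields $\sup\big|\int f\,dP_n-\int f\,dP\big|\xrightarrow{a.s.}0$, the supremum running over $\Delta\times\{A\in\mathbb{R}_k^p:A\subseteq M\}$. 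Combined with $\alpha(P_n)\xrightarrow{a.s.}\alpha(P)$ and $\sum_l w_l\le 1$ on $\Delta$, this gives $\sup|\Phi(\mathcal{W},A,P_n)-\Phi(\mathcal{W},A,P)|\xrightarrow{a.s.}0$ over the same compact set. From here the conclusion is routine: $m_k(P_n)=\Phi(\mathcal{W}_n,A_n,P_n)$ is sandwiched between $\Phi(\bar{\mathcal{W}}(k),\bar A(k),P_n)\xrightarrow{a.s.}m_k(P)$ and $\inf_{\Delta\times\{A\subseteq M\}}\Phi(\cdot,\cdot,P_n)\xrightarrow{a.s.}m_k(P)$, so $m_k(P_n)\xrightarrow{a.s.}m_k(P)$; given any subsequence, compactness extracts a further subsequence with $(\mathcal{W}_n,A_n)\to(\mathcal{W}^\star,A^\star)$, joint continuity plus uniform convergence force $\Phi(\mathcal{W}^\star,A^\star,P)=m_k(P)$, and the uniqueness hypothesis forces $(\mathcal{W}^\star,A^\star)=(\bar{\mathcal{W}}(k),\bar A(k))$; since every subsequence has a further subsequence with this same limit, $\mathcal{W}_n\xrightarrow{a.s.}\bar{\mathcal{W}}(k)$, $A_n\xrightarrow{a.s.}\bar A(k)$, and hence $\Phi(\mathcal{W}_n,A_n,P_n)\xrightarrow{a.s.}\Phi(\bar{\mathcal{W}}(k),\bar A(k),P)$.

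I expect the main obstacle to be the first step, the compactness reduction for the centroids. In ordinary $k$-means Pollard's boundedness lemma is clean, but here the effective dissimilarity is reweighted by quantities that themselves vary with $n$ and can vanish in some coordinates, so the argument must lean on Theorem \ref{lambda} to keep coordinate $d'$ ``alive'' while simultaneously discarding the coordinates whose weight shrinks to $0$; the non-countable index set in the uniform SLLN, and the usual $k$-means degeneracy where limiting configurations may have fewer than $k$ distinct centres (precisely why $\mathbb{R}_k^p$ is defined with ``$k$ or fewer points''), are secondary issues but still need care.
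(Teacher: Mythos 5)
Your overall architecture --- compactify both parameter sets, prove a uniform SLLN over the resulting compact set, then combine joint continuity with uniqueness of the population optimiser via a subsequence argument --- is exactly the paper's route, and your steps (ii) and (iii) correctly anticipate the paper's uniform SLLN (Theorems \ref{slln} and \ref{slln_ultimate}), the equicontinuity/continuity results (Lemmas \ref{lem1}, \ref{lem2}, Theorem \ref{cont}), and the closing argument.

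The genuine gap is in step (i), the confinement of $A_n$. Your argument --- a centroid straying far in coordinate $d'$ pushes $\Phi(\mathcal{W}_n,A_n,P_n)$ above the trivial upper bound obtained from $A_0=\{\mathbf{0}\}$ --- only proves that \emph{not all} centroids can escape: if every centroid is far from a ball $B(r)$ of positive $P$-mass, each point of $B(r)$ pays a large cost in coordinate $d'$ (this is where $w^{(n)}_{d'}\ge C(P)$ enters), and the objective exceeds the bound. It does \emph{not} rule out a single stray centroid, because inside the $\min_{\mathbf{a}\in A}$ a faraway centroid can simply capture no data points and contribute nothing to the objective; and the ``convex hull of the sample'' observation does not repair this, since that hull is not uniformly bounded when $P$ has unbounded support. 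The paper closes this with a second, separate step: an induction on the number of centres in which any centroid outside $B(5M)$ is deleted, the extra cost of reassigning its points to the centre already known to lie in $B(M)$ is bounded by $4(1+\frac{\lambda}{p^2})\int_{\|x\|\ge 2M}\|x\|^2 P_n(dx)$ (such points must have norm at least $2M$), and one then contradicts the strict gap $m_k(P)+\epsilon<m_{k-1}(P)$. This is also why the theorem hypothesises uniqueness of $(\bar{\mathcal{W}}(j),\bar{A}(j))$ for every $j=1,\dots,k$ rather than only $j=k$: the induction needs the population problems at all smaller $j$. Without this deletion step your compactness reduction, and hence everything downstream of it, does not go through.
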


\begin{proof}

We will prove the theorem using the following steps.

\begin{claim}

\label{one}
There exists $M>0$ such that $B(M)$ contains at least one point of $A_n$  almost surely, i.e. there exists $M>0$  such that $$P(\cup_{n=1}^\infty \cap_{m=n}^\infty \{\omega:A_m(\omega) \cap B(M) \neq \emptyset \})=1.$$
\end{claim}
\begin{claimproof}
\normalfont

Let $r>0$ be such that $B(r)$ has a positive $P$-measure. By our assumptions, $\Phi(\mathcal{W}_n,A_n,P_n) \leq \Phi(\mathbf{1}_n,A_0,P_n)$ for any set $A_0$ containing atmost $k$ points. Choose $A_0=\{\mathbf{0}\}$. Then, 
$$\Phi(\mathbf{1}_n,A_0,P_n)=\Bigg(1+\frac{\lambda}{p^2}\Bigg)\int \|\mathbf{x}\|_2^2 P_n(d\mathbf{x})-p\alpha(P_n).$$
Thus, 
$$\Phi(\mathbf{1}_n,A_0,P_n) \xrightarrow{a.s.} \Bigg(1+\frac{\lambda}{p^2}\Bigg)\int \|\mathbf{x}\|_2^2 P(d\mathbf{x})-p\alpha(P).$$
Let $\Omega'=\{\omega \in \Omega: \forall n \in \mathbb{N}, \exists m \geq n \text{ s.t. } A_m \cap B(M) = \emptyset\}$. By the Axiom of Choice \cite{jech2008axiom}, for any $\omega \in \Omega'$, there exists a sequence $\{n_h\}_{h \in \mathbb{N}}$ such that $n_i < n_j$ for $i<j$ and $A_{n_h} \cap B_M(\mathbf{x}_0) = \emptyset$.
Now, for this sequence,
\begin{align*}
& \limsup_h\Phi(\mathcal{W}_{n_h},A_{n_h},P_{n_h}) \\ & \geq \lim_h [C(P)^\beta+\frac{\lambda}{p^2}C(P)] (M-r)^2 P_{n_h}(B(r))\\
& -\liminf_{h} \alpha(P_{n_h}) \sum_{l=1}^pw_l^{(n_h)}.    
\end{align*}

Thus, 
$\limsup_{n_h}\Phi(\mathcal{W}_{n_h},A_{n_h},P_{n_h}) \geq [C(P)^\beta+\frac{\lambda}{p^2}C(P)] (M-r)^2 P(B(r)) -\alpha(P)$ almost surely. We choose $M$ large enough such that $[C(P)^\beta+\frac{\lambda}{p^2}C(P)] (M-r)^2 P(B(r)) -\alpha(P)> \Bigg(1+\frac{\lambda}{p^2}\Bigg)\int \|\mathbf{x}\|_2^2 P(d\mathbf{x})-p\alpha(P)$. This would make $\Phi(\mathcal{W}_n,A_n,P_n) > \Phi(\mathbf{1}_n,A_0,P_n) $ i.o., which is a contradiction.
\end{claimproof}
\begin{claim}

For $n$ large enough, $B(5M)$ contains all points of $A_n$ almost surely, i.e. 
$$P(\cup_{n=1}^\infty \cap_{m=n}^\infty \{\omega:A_m(\omega) \subset B(5M)  \})=1.$$
\end{claim}
\begin{claimproof}
\normalfont
We use induction for the proof of this step. We have seen from Step \ref{one}, the conclusions of this claim is valid. We assume this claim is valid for optimal allocation of $1,\dots,k-1$ cluster centroids. \par
Suppose $A_n$ contains at least one point outside $B(5M)$. Now if we delete this cluster centroid, at worst, the center $\mathbf{a}_1$, which is known to lie inside $B(M)$ might have to accept points that were previously assigned to cluster centroids outside $B(5M)$. These sample points must have been at a distance at least $2M$ from the origin, otherwise, they would have been closer to the centroid $a_1$, than to any other centroid outside $B(5M)$. Hence, the extra contribution to $\Phi(\cdot,\cdot,P_n)$, due to deleting the centroids outside $B(5M)$ is atmost 
\begingroup
\allowdisplaybreaks
\begin{equation}
\begin{aligned}
 & \int_{\|x\| \geq 2M} \sum_{l=1}^p ({w^{(n)}_l}^\beta+\frac{\lambda}{p^2}) (x_l-a_{1l})^2 P_n(dx) -\alpha(P_n)\sum_{l=1}^p w^{(n)}_l \\
&    \leq (1+\frac{\lambda}{p^2}) \int_{\|x\| \geq 2M} \sum_{l=1}^p  (x_l-a_{1l})^2 P_n(dx)-\alpha(P_n)p \\
& \leq 2(1+\frac{\lambda}{p^2}) \int_{\|x\| \geq 2M}  (\|x\|^2+\|a_1\|^2) P_n(dx) \\
& \leq 4(1+\frac{\lambda}{p^2}) \int_{\|x\| \geq 2M}  \|x\|^2 P_n(dx).
\end{aligned}
\end{equation}
\endgroup
Let $A_n^*$ be obtained by deleting the centroids outside $B(5M)$ of $A_n$. Since $A_n^*$ has atmost $k-1$ points, we have $\Phi(\mathcal{W}_n, A_n^*,P_n) \geq \Phi(\mathcal{V}_n, B_n,P_n)$, where $\mathcal{V}_n$ and $B_n$ denote the optimal set of weights and optimal set of cluster centroids for $k-1$ centers respectively.
Let $\Omega''=\{\omega \in \Omega: \forall n \in \mathbb{N}, \exists m\geq n , A_m(\omega) \not \subset B(5M)\}$. Now by Axiom of Choice, for any $\omega \in \Omega''$, there exists a sequence $\{n_h\}_{h \in \mathbb{N}}$ such that $n_i < n_j$ for $i<j$ and $A_{n_h} \not \subset B(5M).$

\begin{equation}
\label{5M}
    \begin{aligned}
    & m_{k-1}(P)\\
    & \leq \liminf_h\Phi(\mathcal{W}_{n_h}, A^*_{n_h},P_{n_h}) \text{ a.s.} \\
               & \leq \limsup_{h} [\Phi(\mathcal{W}_{n_h}, A^*_{n_h},P_{n_h}) \\
               &+4(1+\frac{\lambda}{p^2}) \int_{\|x\| \geq 2M}  \|x\|^2 P_{n_h}(dx)]\\
               & \leq \limsup_{h} \Phi(\mathcal{W}, A,P_{n_h})+ 4(1+\frac{\lambda}{p^2}) \int_{\|x\| \geq 2M}  \|x\|^2 P(dx) ,
    \end{aligned}
\end{equation}
for any $A$ having $k$ or fewer points and for any $\mathcal{W}\in \mathbb{R}^p$. Choose $A=\bar{A}(k)$ and $\mathcal{W}=\bar{\mathcal{W}}(k)$. Choose $\epsilon>0$ such that $m_k(P)+\epsilon<m_{k-1}(P)$. Choose $M$ large enough such that $4(1+\frac{\lambda}{p^2}) \int_{\|x\| \geq 2M}  \|x\|^2 P(dx) < \epsilon$. Thus, the last bound of Eqn \ref{5M} is less than $\Phi(\bar{\mathcal{W}}(k),\bar{A}(k),P)+\epsilon=m_k(P)+\epsilon>m_{k-1}(P)$, which is a contradiction. 
\end{claimproof}
Hence, for $n$ large enough, it suffices to search for $A_n$ among the class of sets, $\xi_k:=\{A \subset B(5M)| A \text{ contains } k\text{ or fewer points}\}$. For the final requirement on $M$, we assume that $M$ is large enough so that $\xi_k$ contains $\bar{A}(k)$. Under the topology induced  by the Hausdroff metric, $\xi_k$ is compact. Let $\Gamma_k=[0,b] \times \dots [0,b]$  ($p$ times), where $b$ is such that $b>1$ and $\bar{\mathcal{W}}(k)_l<b$ $\forall d \in \{1,\dots, p\}$. As proved in Theorem \ref{cont}, the map $(\mathcal{W},A)\to \Phi(\mathcal{W},A,P)$ is continuous on $\Gamma_k \times \xi_k $. The function $\Phi(\cdot, \cdot, P)$ has the property that given any neighbourhood $\mathcal{N}$ of $(\bar{\mathcal{W}}(k),\bar{A}(k))$ (depending on $\eta$) $\Phi(\mathcal{W},A,P)\geq\Phi(\bar{\mathcal{W}}(k),\bar{A}(k),P)+\eta$, for every $(\mathcal{W},A) \in \Gamma_k \times \xi_k \setminus  \mathcal{N}$. \par
Now by uniform SLLN (Theorem \ref{slln_ultimate}), we have,
$$sup_{(\mathcal{W},A) \in \Gamma_k \times \xi_k} |\Phi(\mathcal{W},A,P_n)-\Phi(\mathcal{W},A,P)| \xrightarrow{ a.s.}0.$$
We need to show that $(\mathcal{W}_n,A_n)$ eventually lies inside $\mathcal{N}$. It is enough to show that $\Phi(\mathcal{W}_n,A_n,P)<\Phi(\bar{\mathcal{W}}(k),\bar{A}(k),P)+\eta$, eventually. This follows from $$\Phi(\mathcal{W}_n,A_n,P_n)\leq \Phi(\bar{\mathcal{W}}(k),\bar{A}(k),P_n),$$
$$\Phi(\mathcal{W}_n,A_n,P_n)-\Phi(\mathcal{W}_n,A_n,P)\xrightarrow{a.s.}0,$$
and 
$$\Phi(\bar{\mathcal{W}}(k),\bar{A}(k),P_n)-\Phi(\bar{\mathcal{W}}(k),\bar{A}(k),P)\xrightarrow{a.s.}0.$$
Similarly for $n$ large enough,
\begin{align*}
\Phi(\mathcal{W}_n,A_n,P_n) & =inf\{\Phi(\mathcal{W},A,P_n)|\mathcal{W} \in \Gamma_k,A \in \xi_k\}\\
& \xrightarrow{a.s.}inf\{\Phi(\mathcal{W},A,P)|\mathcal{W} \in \Gamma_k,A \in \xi_k\}\\
& =m_k(P).    
\end{align*}
\end{proof}

\subsection{Uniform SLLN and continuity of $\Phi(\cdot,\cdot,P)$}
In this section, we prove a uniform SLLN for the function $\Phi(\cdot,\cdot,P)$ in Theorem \ref{slln}.
\begin{thm}
\label{slln}
Let $\mathcal{G}$ denote the family of all $P$-integrable functions of the form $g_{\mathcal{W},A}(x):=min_{\mathbf{a}\in A}\sum_{l=1}^p(w_l^\beta+\frac{\lambda}{p^2}|w_l|)(x_{l}-a_l)^2$, where $A\in \xi_k$ and $\mathcal{W}\in \Gamma_k$. Then $sup_{g \in \mathcal{G}}|\int g P_n-\int g P|\xrightarrow{a.s}0$.
\end{thm}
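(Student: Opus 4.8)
The plan is to establish this uniform strong law by the classical compactness argument of Pollard \cite{pollard1981strong}, carried out over the joint index $t := (\mathcal{W},A)$ in the set $T := \Gamma_k \times \xi_k$. This set is compact, being the product of the compact cube $\Gamma_k = [0,b]^p$ and the collection $\xi_k$ of subsets of $B(5M)$ of cardinality at most $k$, which is compact under the Hausdorff metric. Writing $g_t(x) := g_{\mathcal{W},A}(x)$, the two facts I need are \textbf{(i)} an integrable envelope $G$ with $0 \le g_t \le G$ for every $t \in T$, and \textbf{(ii)} continuity of $t \mapsto g_t(x)$ on $T$ for each fixed $x$. Granted these, the claim follows from the standard uniform SLLN for a uniformly dominated, pointwise-continuous family indexed by a compact metric space.

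First I would construct the envelope: for any $t=(\mathcal{W},A)$ fix a point $a \in A \subset B(5M)$; since $0 \le w_l \le b$,
\[
g_{\mathcal{W},A}(x) \;\le\; \Big(b^\beta + \frac{\lambda}{p^2}\,b\Big)\,\|x-a\|^2 \;\le\; 2\Big(b^\beta + \frac{\lambda}{p^2}\,b\Big)\big(\|x\|^2 + 25M^2\big) \;=:\; G(x),
\]
and $\int G\,dP < \infty$ because $\int \|x\|^2 P(dx) < \infty$ by hypothesis. For continuity, note that for fixed $x$ the map $(\mathcal{W},a) \mapsto \sum_{l=1}^p\big(w_l^\beta + \frac{\lambda}{p^2}|w_l|\big)(x_l - a_l)^2$ is jointly continuous on $\Gamma_k \times B(5M)$; taking the minimum over $a$ ranging in a set that varies continuously in the Hausdorff metric then shows $t \mapsto g_t(x)$ is continuous on $T$. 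This is the pointwise-in-$x$ analogue of the continuity of $\Phi(\cdot,\cdot,P)$ proved in Theorem \ref{cont}, and is in fact simpler.

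The covering argument then proceeds as follows. For $t_0 \in T$ and $\delta > 0$, put $\Delta_\delta(x;t_0) := \sup\{\,|g_t(x) - g_{t_0}(x)| : t \in T,\ \rho(t,t_0) < \delta\,\}$ with $\rho$ the product metric; by continuity this supremum equals its restriction to a countable dense subset, so $\Delta_\delta(\cdot;t_0)$ is measurable, and $0 \le \Delta_\delta(\cdot;t_0) \le 2G$. Since $\Delta_\delta(x;t_0) \downarrow 0$ as $\delta \downarrow 0$, dominated convergence gives $\int \Delta_\delta(\cdot;t_0)\,dP \to 0$. Fix $\epsilon > 0$; for each $t_0$ choose $\delta(t_0)$ with $\int \Delta_{\delta(t_0)}(\cdot;t_0)\,dP < \epsilon$, cover $T$ by the balls $\{B(t_0,\delta(t_0))\}$, and extract a finite subcover with centres $t_1,\dots,t_m$. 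For $t$ in the ball around $t_i$,
\[
|P_n g_t - P g_t| \;\le\; |P_n g_{t_i} - P g_{t_i}| + P_n\,\Delta_{\delta(t_i)}(\cdot;t_i) + P\,\Delta_{\delta(t_i)}(\cdot;t_i),
\]
and the ordinary SLLN applied to the finitely many integrable functions $g_{t_i}$ and $\Delta_{\delta(t_i)}(\cdot;t_i)$ shows the right-hand side is eventually below $3\epsilon$ for all $i$ simultaneously on one set of full probability. Hence $\limsup_n \sup_{t\in T}|P_n g_t - P g_t| \le 3\epsilon$ almost surely, and letting $\epsilon \to 0$ along a sequence completes the argument.

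The main obstacle I anticipate is the continuity step, namely verifying that $A \mapsto \min_{a\in A}(\cdot)$ is continuous for the Hausdorff metric and confirming the measurability of the oscillation functions $\Delta_\delta$; once these are settled, the envelope estimate and the finite-covering reduction to the ordinary SLLN are routine.
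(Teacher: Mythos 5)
Your proof is correct, but it follows a genuinely different route from the paper's. The paper proves this uniform SLLN by explicit bracketing: it builds finite $\delta_1$- and $\delta_2$-nets of $B(5M)$ and $\Gamma_k$, and for each $g_{\mathcal{W},A}$ writes down explicit lower and upper brackets $\dot{g} \le g \le \bar{g}$ of the form $\min_{\mathbf{a}\in A^{(0)}}\sum_l((w_l^{(0)}\pm\delta_2)^\beta+\frac{\lambda}{p^2}|w_l^{(0)}\pm\delta_2|)(x_l-a_l\pm\delta_1)^2$ over the nets, then controls $\int(\bar{g}-\dot{g})\,dP$ by splitting the integral at a large radius $R$ and invoking uniform continuity on bounded sets; the uniform convergence then follows from the ordinary SLLN applied to the finite bracket class. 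You instead use the Wald/Jennrich-style argument: an integrable envelope $G$, pointwise continuity of $(\mathcal{W},A)\mapsto g_{\mathcal{W},A}(x)$ on the compact index set, oscillation functions $\Delta_\delta(\cdot;t_0)$ shrunk by dominated convergence, and a finite subcover. Both reduce to finitely many applications of the ordinary SLLN. The paper's brackets are explicit functions, so no measurability-of-suprema issue arises, at the cost of somewhat laborious bookkeeping with the perturbed terms; your version is more modular and reuses exactly the continuity facts the paper proves anyway for Theorem \ref{cont} (Lemmas \ref{lem1} and \ref{lem2}), but it does require the measurability of $\Delta_\delta$, which you correctly dispatch by restricting the supremum to a countable dense subset, and it requires the Hausdorff-continuity of $A\mapsto\min_{\mathbf{a}\in A}(\cdot)$ pointwise in $x$, which holds by uniform continuity of the summand in $\mathbf{a}$ on $B(5M)$ for each fixed $x$. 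Both arguments are sound; yours is arguably the cleaner of the two given the surrounding infrastructure.
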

\begin{proof}
It is enough to show that for every $\epsilon>0$, $\exists$ a finite class of functions $\mathcal{G}_{\epsilon}$, such that for each $g\in \mathcal{G}$, there exists functions $\Dot{g}$, $\Bar{g}\in \mathcal{G}_{\epsilon}$ such that $\Dot{g} \leq g \leq \Bar{g}$ and $\int(\Bar{g}-\Dot{g})P(dx)<\epsilon$.\par
Let $D_{\delta_1}$ be a finite subset of $B(5M)$ such that every point of $B(5M)$ lies within a $\delta_1$ distance of at least one point of $D_{\delta_1}$. Also let $D_{\delta_2}$ be a finite subset of $\Gamma_k$ such that every point of $\Gamma_k$ is within a $\delta_2$ distance of at least one point of $D_{\delta_2}$. $\delta_1$ and $\delta_2$ will be chosen later. Let, $\xi_{k,\delta_1}=\{A \in \xi_k|A\subset D_{\delta_1}\}$ and $\Gamma_{k,\delta_2}=\{\mathcal{W}\in \Gamma_k|\mathcal{W}\subset D_{\delta_2}\}$. Take $\mathcal{G}_{\epsilon}$ to be the class of functions of the form $$min_{\mathbf{a}\in A'}\sum_{l=1}^p((w_l\pm \delta_2)^\beta+\frac{\lambda}{p^2}|w_l\pm \delta_2|)(x_{l}-a_l\pm \delta_1)^2,$$
where $A'$ ranges over $\xi_{k,\delta_1}$ and $\mathcal{W}$ ranges over $\Gamma_{k,\delta_2}$.\par
Given $A=\{\mathbf{a}_1,\dots,\mathbf{a}_k\} \in \xi_k $, there exists $A^{0}=\{\mathbf{a}^{(0)}_1,\dots,\mathbf{a}^{(0)}_k\}\in \xi_{k,\delta_1} $, such that $H(A,A')<\delta_1$ (choose $\mathbf{a}^{(0)}_i\in D_{\delta_1}$ such that $\|\mathbf{a}_i-\mathbf{a}^{(0)}_i\|<\delta_1$). Also note that given $\mathcal{W} \in \Gamma_k$, there exists $\mathcal{W}^{(0)} \in \Gamma_{k,\delta_2}$. For given $g_{\mathcal{W},A} \in \mathcal{G}$, take, 
\begin{align*}
\Dot{g}_{\mathcal{W},A}&:=min_{\mathbf{a}\in A^{(0)}}\sum_{l=1}^p((max\{w^{(0)}_l-\delta_2,0\})^\beta\\
&+\frac{\lambda}{p^2}|max\{w^{(0)}_l-\delta_2,0\}|)(max\{x_{l}-a_l -\delta_1,0\})^2    
\end{align*}
 and
\begin{align*}
\Bar{g}_{\mathcal{W},A} & :=min_{\mathbf{a}\in A^{(0)}}\sum_{l=1}^p((max\{w^{(0)}_l+\delta_2,0\})^\beta\\
&+\frac{\lambda}{p^2}|max\{w^{(0)}_l+\delta_2,0\}|)(max\{x_{l}-a_l +\delta_1,0\})^2.    
\end{align*}

Clearly, $\Dot{g}_{\mathcal{W},A}\leq g_{\mathcal{W},A}\leq \Bar{g}_{\mathcal{W},A} $. Now by taking $R>5M$, We have, 

\begingroup
\allowdisplaybreaks
{\footnotesize
\begin{equation}
    \begin{aligned}
    & \int  (\Bar{g}_{\mathcal{W},A}-\Dot{g}_{\mathcal{W},A})P(dx)  \\
    &\leq \sum_{i=1}^k \int \Bigg[ \sum_{l=1}^p((max\{w^{(0)}_l+\delta_2,0\})^\beta\\
    &+\frac{\lambda}{p^2}|max\{w^{(0)}_l+\delta_2,0\}|)(max\{x_{l}-a_l +\delta_1,0\})^2\\
    &-\sum_{l=1}^p((max\{w^{(0)}_l-\delta_2,0\})^\beta\\
    &+\frac{\lambda}{p^2}|max\{w^{(0)}_l-\delta_2,0\}|)(max\{x_{l}-a_l -\delta_1,0\})^2\Bigg]P(dx)\\
    & \leq kp \text{ } sup_{|x|>5M} \text{ } sup_{|a|<5M} \text{ } sup_{|w|<L} \Bigg[((max\{w+\delta_2,0\})^\beta\\
    &+\frac{\lambda}{p^2}|max\{w+\delta_2,0\}|)(max\{x-a -\delta_1,0\})^2 \\
    &-((max\{w-\delta_2,0\})^\beta\\
    &+\frac{\lambda}{p^2}|max\{w-\delta_2,0\}|)(max\{x-a -\delta_1,0\})^2\Bigg]\\
    &+2(1+\frac{\lambda}{p^2}) \int_{\|x\| \geq R}  \|x\|^2 P(dx).
    \end{aligned}
\end{equation}
}%
\endgroup
The second term can be made smaller than $\epsilon/2$ if $R$ is made large enough. Now appealing to the uniform continuity of the function $((max\{w,0\})^\beta+\frac{\lambda}{p^2}|max\{w,0\}|)(max\{x,0\})^2$ on bounded sets, we can find $\delta_1$ and $\delta_2$ small enough such that the first term is less than $\epsilon/2$. Hence the result.
\end{proof}

\begin{thm}
\label{slln_ultimate}
Let $\mathcal{G}$ denote the family of all $P$-integrable functions of the form $g_{\mathcal{W},A}(x):=min_{\mathbf{a}\in A}\sum_{l=1}^p(w_l^\beta+\frac{\lambda}{p^2}|w_l|)(x_{l}-a_l)^2$, where $A\in \xi_k$ and $\mathcal{W}\in \Gamma_k$. Let $g_{\mathcal{W},A,P_n}(x)=min_{\mathbf{a}\in A}\sum_{l=1}^p(w_l^\beta+\frac{\lambda}{p^2}|w_l|)(x_{l}-a_l)^2-\alpha(P_n)\sum_{l=1}^pw_l$. Then the following holds:
\begin{enumerate}
    \item $\int g_{\mathcal{W},A,P_n}(x)P_n(x)dx=\Phi(\mathcal{W},A,P_n)$.
    \item $sup_{\mathcal{W},A}|\int g_{\mathcal{W},A,P_n} P_n-\int g_{\mathcal{W},A,P} P|\xrightarrow{a.s}0$.
\end{enumerate}

\end{thm}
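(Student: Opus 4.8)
The plan is to deduce Theorem~\ref{slln_ultimate} from Theorem~\ref{slln} (the uniform SLLN for the ``integrand part'' of $\Phi$) together with the almost sure convergence $\alpha(P_n)\xrightarrow{a.s.}\alpha(P)$ established in Theorem~\ref{bddwt}; no new analytic machinery is required.

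For part~(1) I would simply unwind the definitions: since $\alpha(P_n)\sum_{l=1}^p w_l$ is constant in $x$ and $P_n$ is a probability measure,
\[
\int g_{\mathcal{W},A,P_n}(x)\,P_n(dx)=\int \min_{\mathbf{a}\in A}\sum_{l=1}^p\Big(w_l^\beta+\tfrac{\lambda}{p^2}|w_l|\Big)(x_l-a_l)^2\,P_n(dx)-\alpha(P_n)\sum_{l=1}^p w_l,
\]
which is exactly $\Phi(\mathcal{W},A,P_n)$ by the defining formula for $\Phi$.

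For part~(2), first set $g_{\mathcal{W},A,P}(x):=\min_{\mathbf{a}\in A}\sum_{l=1}^p\big(w_l^\beta+\tfrac{\lambda}{p^2}|w_l|\big)(x_l-a_l)^2-\alpha(P)\sum_{l=1}^p w_l$, so that $\int g_{\mathcal{W},A,P}\,P=\Phi(\mathcal{W},A,P)$ by the same computation. Then for each admissible pair $(\mathcal{W},A)$ I would split
\[
\int g_{\mathcal{W},A,P_n}\,P_n-\int g_{\mathcal{W},A,P}\,P=\Big(\int g_{\mathcal{W},A}\,P_n-\int g_{\mathcal{W},A}\,P\Big)-\big(\alpha(P_n)-\alpha(P)\big)\sum_{l=1}^p w_l,
\]
with $g_{\mathcal{W},A}$ the function appearing in Theorem~\ref{slln}, then apply the triangle inequality and take the supremum over $\mathcal{W}\in\Gamma_k$, $A\in\xi_k$ to obtain
\[
\sup_{\mathcal{W},A}\Big|\int g_{\mathcal{W},A,P_n}\,P_n-\int g_{\mathcal{W},A,P}\,P\Big|\le \sup_{g\in\mathcal{G}}\Big|\int g\,P_n-\int g\,P\Big|+\big|\alpha(P_n)-\alpha(P)\big|\sup_{\mathcal{W}\in\Gamma_k}\sum_{l=1}^p w_l.
\]
The first term on the right vanishes almost surely by Theorem~\ref{slln}; and because $\Gamma_k=[0,b]^p$ is bounded, $\sup_{\mathcal{W}\in\Gamma_k}\sum_{l=1}^p w_l\le pb$, so the second term is at most $pb\,|\alpha(P_n)-\alpha(P)|\to 0$ almost surely by Theorem~\ref{bddwt}. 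Combining the two bounds yields the claim.

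The hard part is not in this theorem at all: the chaining/bracketing estimate needed to control the supremum over the (apparently large) family $\mathcal{G}$ has already been carried out in Theorem~\ref{slln}. The only point to be careful about here is that the data-dependent scalar $\alpha(P_n)$ appears factored outside the minimum as a pure multiplicative constant, so controlling it uniformly in $(\mathcal{W},A)$ costs nothing beyond the boundedness of the weight domain $\Gamma_k$, which is already built into the standing setup.
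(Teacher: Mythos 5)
Your proposal is correct and follows essentially the same route as the paper: part (1) by unwinding definitions, and part (2) by splitting off the $\alpha(P_n)\sum_l w_l$ term, applying the triangle inequality, and invoking Theorem~\ref{slln} for the empirical-process term and Theorem~\ref{bddwt} for $\alpha(P_n)\xrightarrow{a.s.}\alpha(P)$. Your bound $\sup_{\mathcal{W}\in\Gamma_k}\sum_{l=1}^p w_l\le pb$ is in fact slightly more careful than the paper's stated constant $b$, but this is immaterial to the argument.
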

\begin{proof}
Part $(1)$ follows trivially. We only prove part $(2)$. Clearly,
\begin{equation}
\label{sup}
\begin{aligned}
    &|\int g_{\mathcal{W},A,P_n} P_n-\int g_{\mathcal{W},A,P} P| \\
    & \leq |\int g_{\mathcal{W},A} P_n-\int g_{\mathcal{W},A} P|+\Bigg|\sum_{l=1}^pw_l\Bigg||\alpha(P_n)-\alpha(P)|\\
    &\leq |\int g_{\mathcal{W},A} P_n-\int g_{\mathcal{W},A} P|+b|\alpha(P_n)-\alpha(P)|.
\end{aligned}
\end{equation}
Hence,
{\footnotesize
\begin{equation}
\begin{aligned}
\label{sup1}
    &sup_{\mathcal{W},A}|\int g_{\mathcal{W},A,P_n} P_n-\int g_{\mathcal{W},A,P} P| \\
    &\leq sup_{\mathcal{W},A}|\int g_{\mathcal{W},A} P_n-\int g_{\mathcal{W},A} P|+sup_{\mathcal{W},A}b|\alpha(P_n)-\alpha(P)|\\
    &= sup_{\mathcal{W},A}|\int g_{\mathcal{W},A} P_n-\int g_{\mathcal{W},A} P|+b|\alpha(P_n)-\alpha(P)|\\
    &\xrightarrow{a.s.}0.
\end{aligned}
\end{equation}
}%
The last almost sure convergence of Eqn \ref{sup1} is true since the first term converges to $0$ a.s. (Theorem \ref{slln}) and the second term converges to $0$ a.s. (Theorem \ref{bddwt}).
\end{proof}

 Before proceeding any further let us first define two function classes. 
\begin{itemize}
    \item Let $f_A(\mathbf{w})=\Phi(\mathbf{w},A,P)$, and let us define $\mathcal{F}_1=\{f_A: A \in \xi_k\}$.        
    \item Let $f_\mathbf{w}(A)=\Phi(\mathbf{w},A,P)$, and let us define $\mathcal{F}_2=\{f_\mathbf{w}: \mathbf{w} \in \mathcal{S}\}$.
\end{itemize}
In Lemmas \ref{lem1} and \ref{lem2}, we show that the families $\mathcal{F}_1$ and $\mathcal{F}_2$ are both equicontinuous \cite{rudin2006real}.
\begin{lemma}
\label{lem1}
The family of functions $\mathcal{F}_1$ is equicontinuous. 
\end{lemma}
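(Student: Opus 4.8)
The plan is to show that the family $\mathcal{F}_1=\{f_A:A\in\xi_k\}$ with $f_A(\mathbf{w})=\Phi(\mathbf{w},A,P)$ is uniformly Lipschitz on the compact parameter box $\Gamma_k$, which immediately yields equicontinuity. The key observation is that $A$ ranges over $\xi_k$, i.e. over subsets of the fixed bounded ball $B(5M)$, and $\mathbf{w}$ ranges over $\Gamma_k=[0,b]^p$, so every quantity appearing in $\Phi(\mathbf{w},A,P)$ is controlled uniformly in $A$ once $\mathbf{x}$ is integrated against $P$, using only $\int\|\mathbf{x}\|^2\,P(d\mathbf{x})<\infty$.

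First I would write $\Phi(\mathbf{w},A,P)=\int \min_{\mathbf{a}\in A}\sum_{l=1}^p\big(w_l^\beta+\tfrac{\lambda}{p^2}|w_l|\big)(x_l-a_l)^2\,P(d\mathbf{x})-\alpha(P)\sum_{l=1}^p w_l$, and note the second term is a fixed linear (hence Lipschitz) function of $\mathbf{w}$, so it suffices to handle the integral term. Fix $A$ and two weight vectors $\mathbf{w},\mathbf{w}'\in\Gamma_k$. For each fixed $\mathbf{x}$, let $\mathbf{a}^\star$ achieve the minimum for $\mathbf{w}$; then
\[
\min_{\mathbf{a}\in A}\sum_l (w_l^\beta+\tfrac{\lambda}{p^2}w_l)(x_l-a_l)^2
-\min_{\mathbf{a}\in A}\sum_l ({w'_l}^\beta+\tfrac{\lambda}{p^2}w'_l)(x_l-a'_l)^2
\le \sum_l \big(w_l^\beta-{w'_l}^\beta+\tfrac{\lambda}{p^2}(w_l-w'_l)\big)(x_l-a^\star_l)^2,
\]
and symmetrically with the roles reversed, so the absolute difference is at most $\sum_l\big|w_l^\beta-{w'_l}^\beta+\tfrac{\lambda}{p^2}(w_l-w'_l)\big|\,(x_l-a^\star_l)^2$. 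On the compact interval $[0,b]$ the map $t\mapsto t^\beta$ is Lipschitz with constant $\beta b^{\beta-1}$, so the coefficient is bounded by $L_0\|\mathbf{w}-\mathbf{w}'\|_\infty$ with $L_0:=\beta b^{\beta-1}+\tfrac{\lambda}{p^2}$, uniformly in $A$ and $\mathbf{x}$. Then $(x_l-a^\star_l)^2\le 2x_l^2+2(a^\star_l)^2\le 2x_l^2+2(5M)^2$ since $a^\star\in B(5M)$, and integrating against $P$ gives
\[
|f_A(\mathbf{w})-f_A(\mathbf{w}')|\le L_0\|\mathbf{w}-\mathbf{w}'\|_\infty\Big(2\!\int\!\|\mathbf{x}\|^2P(d\mathbf{x})+2p(5M)^2\Big)+\alpha(P)\sqrt{p}\,\|\mathbf{w}-\mathbf{w}'\|_2,
\]
a bound with a constant independent of $A$. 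Hence $\mathcal{F}_1$ is uniformly Lipschitz, therefore equicontinuous.

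The main obstacle — really the only delicate point — is justifying that the minimizing center $\mathbf{a}^\star$ can be taken in $B(5M)$ so that $(a^\star_l)^2\le(5M)^2$ uniformly; this is exactly where the restriction $A\in\xi_k$ (sets contained in $B(5M)$, established in the two Steps of the proof of Theorem \ref{strongth}) is used, and it is what makes the integrand dominated by an integrable envelope $C(\|\mathbf{x}\|^2+1)$ uniformly over the family. Everything else is the elementary Lipschitz estimate for $t\mapsto t^\beta$ on a bounded interval together with $\int\|\mathbf{x}\|^2\,P(d\mathbf{x})<\infty$. I would also remark that the same argument, with the roles of $A$ and $\mathbf{w}$ swapped and the Hausdorff metric on $\xi_k$ replacing $\|\cdot\|_\infty$ on $\Gamma_k$, will give the companion Lemma \ref{lem2}.
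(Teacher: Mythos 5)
Your argument is mathematically sound and actually gives more than equicontinuity: a uniform Lipschitz modulus over $A\in\xi_k$. The bound $|\min_{\mathbf{a}\in A}F(\mathbf{a})-\min_{\mathbf{a}\in A}G(\mathbf{a})|\leq\max_{\mathbf{a}\in A}|F(\mathbf{a})-G(\mathbf{a})|$, the Lipschitz constant $\beta b^{\beta-1}$ for $t\mapsto t^\beta$ on $[0,b]$, and the integrable envelope $2\|\mathbf{x}\|^2+2p(5M)^2$ (valid precisely because $A\subset B(5M)$) are all correct, and you rightly identify the restriction $A\in\xi_k$ as the only place where boundedness of the centers enters. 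One small imprecision: for the upper bound on the difference of the two minima you should evaluate at the minimizer for $\mathbf{w}'$ rather than for $\mathbf{w}$; since your envelope holds for every $\mathbf{a}\in A$ and you invoke the symmetric inequality anyway, this is harmless.

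The point worth flagging is a mismatch with the paper rather than with the mathematics. You proved equicontinuity of $\{f_A\}$ as functions of $\mathbf{w}$, which is the literal reading of the definition $\mathcal{F}_1=\{f_A:A\in\xi_k\}$ with $f_A(\mathbf{w})=\Phi(\mathbf{w},A,P)$. The paper's own proof of Lemma \ref{lem1}, however, establishes the complementary statement: $|\Phi(\mathcal{W},A,P)-\Phi(\mathcal{W},B,P)|<\epsilon$ whenever $H(A,B)<\delta$, uniformly in $\mathcal{W}\in\Gamma_k$, i.e., equicontinuity in the $A$-variable under the Hausdorff metric; and that is exactly how Lemma \ref{lem1} is invoked in Theorem \ref{cont} ("if $A$ is chosen close enough to $A_0$"). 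It is the paper's proof of Lemma \ref{lem2} that matches your argument (continuity in $\mathcal{W}$ uniformly over $A$, done there by splitting the integral at $\|x\|>R$ rather than by your cleaner global Lipschitz estimate). In short, the definitions of $\mathcal{F}_1,\mathcal{F}_2$ and the two appendix proofs appear to be swapped in the paper; your proof covers the content the paper labels Lemma \ref{lem2}, and the content the paper actually uses under the name Lemma \ref{lem1} is the one you defer to your closing remark: for $H(A,B)<\delta$, pair each $\mathbf{b}\in B$ with some $\mathbf{a}(\mathbf{b})\in A$ at distance less than $\delta$ and bound $|(x_l-b_l)^2-(x_l-a(\mathbf{b})_l)^2|\leq(2|x_l|+10M+\delta)\delta$ before integrating, the weight coefficients being uniformly bounded on $\Gamma_k$. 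Both ingredients are needed for Theorem \ref{cont}, and your method delivers both, but as written your proof establishes only one of them.
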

\begin{proof}
See Appendix \ref{lem1 proof}
\end{proof}
\begin{lemma}
\label{lem2}
The family of functions $\mathcal{F}_2$ is equicontinuous. 
\end{lemma}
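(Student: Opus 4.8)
The plan is to reduce the claim to a uniform Lipschitz estimate. Fix any $\mathbf{w}\in\Gamma_k$ and observe that, as a function of $A$, the term $-\alpha(P)\sum_{l=1}^pw_l$ in $\Phi(\mathbf{w},A,P)$ is a constant; hence for $A,A'\in\xi_k$,
\[
f_\mathbf{w}(A)-f_\mathbf{w}(A')=\int\Bigl(\min_{\mathbf{a}\in A}g_\mathbf{w}(\mathbf{x},\mathbf{a})-\min_{\mathbf{a}\in A'}g_\mathbf{w}(\mathbf{x},\mathbf{a})\Bigr)P(d\mathbf{x}),
\]
where $g_\mathbf{w}(\mathbf{x},\mathbf{a})=\sum_{l=1}^p\bigl(w_l^\beta+\tfrac{\lambda}{p^2}|w_l|\bigr)(x_l-a_l)^2$. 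Since $(\xi_k,H)$ is compact, equicontinuity of $\mathcal{F}_2$ is the same as uniform equicontinuity, so it is enough to exhibit, for every $\epsilon>0$, a single $\delta>0$ such that $H(A,A')<\delta$ forces $|f_\mathbf{w}(A)-f_\mathbf{w}(A')|<\epsilon$ for \emph{all} $\mathbf{w}\in\Gamma_k$ simultaneously.

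The first step is the standard pointwise comparison of the two minima through the Hausdorff distance (the device used in Pollard's argument): if $H(A,A')<\delta$, then every point of $A$ lies within $\delta$ of a point of $A'$ and vice versa, so choosing $\mathbf{a}^*\in A$ attaining $\min_{\mathbf{a}\in A}g_\mathbf{w}(\mathbf{x},\mathbf{a})$ together with a matching $\mathbf{a}'\in A'$, $\|\mathbf{a}^*-\mathbf{a}'\|<\delta$, and arguing symmetrically, we obtain
\[
\Bigl|\min_{\mathbf{a}\in A}g_\mathbf{w}(\mathbf{x},\mathbf{a})-\min_{\mathbf{a}\in A'}g_\mathbf{w}(\mathbf{x},\mathbf{a})\Bigr|\le\sup_{\substack{\mathbf{a},\mathbf{a}'\in B(5M)\\\|\mathbf{a}-\mathbf{a}'\|\le\delta}}\bigl|g_\mathbf{w}(\mathbf{x},\mathbf{a})-g_\mathbf{w}(\mathbf{x},\mathbf{a}')\bigr|.
\]
The second step is a Lipschitz bound on $g_\mathbf{w}$ that is uniform over $\mathbf{w}\in\Gamma_k$. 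Because $0\le w_l\le b$, each coefficient obeys $w_l^\beta+\tfrac{\lambda}{p^2}|w_l|\le b^\beta+\tfrac{\lambda}{p^2}b=:K$, a constant free of $\mathbf{w}$; writing $(x_l-a_l)^2-(x_l-a_l')^2=(a_l'-a_l)(2x_l-a_l-a_l')$ and using $|a_l|,|a_l'|\le 5M$ gives $|g_\mathbf{w}(\mathbf{x},\mathbf{a})-g_\mathbf{w}(\mathbf{x},\mathbf{a}')|\le 2K\delta\bigl(\sqrt p\,\|\mathbf{x}\|+5Mp\bigr)$. Integrating against $P$ and using $\int\|\mathbf{x}\|^2P(d\mathbf{x})<\infty$, hence $\int\|\mathbf{x}\|P(d\mathbf{x})<\infty$, yields
\[
\bigl|f_\mathbf{w}(A)-f_\mathbf{w}(A')\bigr|\le 2K\delta\Bigl(\sqrt p\textstyle\int\|\mathbf{x}\|\,P(d\mathbf{x})+5Mp\Bigr)=:C\delta,
\]
with $C$ independent of $\mathbf{w}$, $A$, $A'$; taking $\delta=\epsilon/C$ finishes the proof.

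I do not anticipate a genuine obstacle: the only points needing care are the handling of the pointwise minimum via the Hausdorff distance and the verification that the weight coefficients are bounded \emph{uniformly} over the index set — which is precisely why $\mathcal{F}_2$ is indexed by the compact box $\Gamma_k=[0,b]^p$ rather than all of $\mathbb{R}_+^p$, and is the place where the argument would break down for unbounded weights. One should also note that the same computation goes through verbatim with $P$ replaced by the empirical measure $P_n$, with $\int\|\mathbf{x}\|\,P_n(d\mathbf{x})$ a.s.\ bounded by the SLLN, which is the form actually needed when combining this lemma with Lemma \ref{lem1} to deduce joint continuity of $(\mathcal{W},A)\mapsto\Phi(\mathcal{W},A,\cdot)$.
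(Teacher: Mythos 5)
Your argument is internally correct, and as a proof of equicontinuity of the family $\{A\mapsto\Phi(\mathbf{w},A,P):\mathbf{w}\in\Gamma_k\}$ in the Hausdorff metric it is in fact cleaner than what the paper does: by bounding the weight coefficients uniformly by $K=b^\beta+\frac{\lambda}{p^2}b$ and using $\int\|\mathbf{x}\|\,P(d\mathbf{x})<\infty$ you obtain a genuine Lipschitz constant $C$ independent of $\mathbf{w}$, whereas the paper truncates the integral at $\|\mathbf{x}\|=R$ and argues in two pieces. Your handling of the two minima through the Hausdorff matching is the standard device and is carried out correctly.

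The issue is which statement you have proved. The paper's definitions of $\mathcal{F}_1$ and $\mathcal{F}_2$ are evidently interchanged relative to its appendix proofs: the proof the paper supplies for Lemma \ref{lem1} is essentially your argument (perturb $A$ with $\mathcal{W}$ fixed, uniformly over $\mathcal{W}$), while the proof it supplies for Lemma \ref{lem2} --- and the way Lemma \ref{lem2} is invoked in Theorem \ref{cont}, namely to control $|\Phi(\mathcal{W},A_0,P)-\Phi(\mathcal{W}_0,A_0,P)|$ --- concerns the complementary direction: continuity of $\mathbf{w}\mapsto\Phi(\mathbf{w},A,P)$ on $\Gamma_k$, uniformly over $A\in\xi_k$. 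Reading the displayed definition of $\mathcal{F}_2$ literally, your proof is a valid proof of the lemma as stated; but it then merely duplicates the content of Lemma \ref{lem1}, and the ingredient that Theorem \ref{cont} actually draws from Lemma \ref{lem2} is not supplied by your argument. That missing half is no harder than what you did: fix $A$, bound the difference of the two minima by $\max_{\mathbf{a}\in A}\sum_{l=1}^p\bigl|w_l^\beta-{w'_l}^\beta+\tfrac{\lambda}{p^2}(|w_l|-|w'_l|)\bigr|(x_l-a_l)^2$, observe $(x_l-a_l)^2\le 2\|\mathbf{x}\|^2+50M^2$, and use the uniform continuity of $w\mapsto w^\beta+\tfrac{\lambda}{p^2}|w|$ on $[0,b]$ together with $\int\|\mathbf{x}\|^2P(d\mathbf{x})<\infty$ to obtain a modulus of continuity independent of $A$. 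You should either add that argument or state explicitly that you are adopting the literal definition of $\mathcal{F}_2$ and that the $\mathbf{w}$-direction is the content of (the correspondingly relabelled) Lemma \ref{lem1}.
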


\begin{proof}
See Appendix \ref{lem2 proof}.
\end{proof}
Before we state the next theorem, note that, the map $(\mathcal{W},A) \to \Phi(\mathcal{W},A,P)$ is from $\Gamma_k \times \xi_k \to \mathbb{R}$. $\Gamma_k \times \xi_k$ is a metric space with the metric $$d_0((\mathcal{W}_1,A_1),(\mathcal{W}_2,A_2)):=\|\mathcal{W}_1-\mathcal{W}_2\|_2^2+H(A_1,A_2),$$ where, $H(\cdot,\cdot)$ is the Hausdorff metric.

\begin{thm}\label{cont}
The map $(\mathcal{W},A) \to \Phi(\mathcal{W},A,P)$ is continuous on $\Gamma_k \times \xi_k$.
\end{thm}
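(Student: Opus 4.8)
The plan is to prove sequential continuity, which suffices because $\Gamma_k\times\xi_k$ is a metric space under $d_0$. So I would fix a point $(\mathcal{W}_0,A_0)\in\Gamma_k\times\xi_k$ and an arbitrary sequence $(\mathcal{W}_n,A_n)\to(\mathcal{W}_0,A_0)$; unpacking the definition of $d_0$ this means $\|\mathcal{W}_n-\mathcal{W}_0\|_2\to 0$ and $H(A_n,A_0)\to 0$ simultaneously, and the goal is $\Phi(\mathcal{W}_n,A_n,P)\to\Phi(\mathcal{W}_0,A_0,P)$. Note that $(\mathcal{W}_0,A_n)$ and $(\mathcal{W}_n,A_n)$ both lie in $\Gamma_k\times\xi_k$, so the two equicontinuity lemmas are applicable along the sequence.

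The first step is to insert the intermediate point $(\mathcal{W}_0,A_n)$ and apply the triangle inequality:
\[
|\Phi(\mathcal{W}_n,A_n,P)-\Phi(\mathcal{W}_0,A_0,P)|\le |\Phi(\mathcal{W}_n,A_n,P)-\Phi(\mathcal{W}_0,A_n,P)|+|\Phi(\mathcal{W}_0,A_n,P)-\Phi(\mathcal{W}_0,A_0,P)|.
\]
Call these terms $(\mathrm{I})$ and $(\mathrm{II})$. Term $(\mathrm{II})$ equals $|f_{\mathcal{W}_0}(A_n)-f_{\mathcal{W}_0}(A_0)|$ with $f_{\mathcal{W}_0}\in\mathcal{F}_2$; since Lemma \ref{lem2} gives (in particular) continuity of each member of $\mathcal{F}_2$ with respect to the Hausdorff metric, and $H(A_n,A_0)\to 0$, we conclude $(\mathrm{II})\to 0$. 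Term $(\mathrm{I})$ equals $|f_{A_n}(\mathcal{W}_n)-f_{A_n}(\mathcal{W}_0)|$ with each $f_{A_n}\in\mathcal{F}_1$. Here I invoke the equicontinuity of $\mathcal{F}_1$ from Lemma \ref{lem1}: given $\varepsilon>0$ there is $\delta>0$, \emph{independent of which $A\in\xi_k$ is chosen}, such that $\|\mathcal{W}-\mathcal{W}_0\|_2<\delta$ forces $|f_A(\mathcal{W})-f_A(\mathcal{W}_0)|<\varepsilon$. Since $\|\mathcal{W}_n-\mathcal{W}_0\|_2\to 0$, for all large $n$ we have $(\mathrm{I})<\varepsilon$. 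Combining the two bounds yields $\limsup_n|\Phi(\mathcal{W}_n,A_n,P)-\Phi(\mathcal{W}_0,A_0,P)|\le\varepsilon$, and letting $\varepsilon\downarrow 0$ proves joint continuity.

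The crux — and the reason the two lemmas were proved separately — is the uniformity needed in term $(\mathrm{I})$: mere separate continuity of $\mathcal{W}\mapsto\Phi(\mathcal{W},A,P)$ for each fixed $A$ would \emph{not} close the argument, because the second argument $A_n$ is also moving, so one needs a modulus of continuity in $\mathcal{W}$ that does not depend on $A$. That is exactly what the equicontinuity of $\mathcal{F}_1$ supplies. (Symmetrically, one could instead route through $(\mathcal{W}_n,A_0)$ and use equicontinuity of $\mathcal{F}_2$ for the analogue of term $(\mathrm{I})$; either decomposition works, and I would present the one above.) The only routine checks remaining are that the intermediate points stay in $\Gamma_k\times\xi_k$ and that convergence in $d_0$ indeed implies coordinatewise convergence in $\|\cdot\|_2$ and in $H$, both of which are immediate from the definition of $d_0$.
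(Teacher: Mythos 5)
Your proof is correct and takes essentially the same approach as the paper's: a triangle inequality through an intermediate point combined with the two equicontinuity lemmas, the only (immaterial) difference being that you route through $(\mathcal{W}_0,A_n)$ while the paper routes through $(\mathcal{W},A_0)$ --- the symmetric alternative you yourself mention. Your explicit observation about exactly which term requires uniformity over the moving index (and why separate continuity would not suffice) is left implicit in the paper, but the underlying argument is identical.
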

\begin{proof}
Fix $(\mathcal{W}_0,A_0) \in \Gamma_k \times \xi_k$. From triangle inequality, we get,
{\footnotesize
\begin{align*}
& |\Phi(\mathcal{W},A,P)-\Phi(\mathcal{W}_0,A_0,P)| \\
&\leq |\Phi(\mathcal{W},A,P)-\Phi(\mathcal{W},A_0,P)|+|\Phi(\mathcal{W},A_0,P)-\Phi(\mathcal{W}_0,A_0,P)|.    
\end{align*}
}%
The first term can be made smaller than $\epsilon/2$ if $A$ is chosen close enough to $A_0$ (in Hausdorff sense). This follows from Lemma \ref{lem1}. The second term can also be made smaller than $\epsilon/2$ if $\mathcal{W}$ is chosen close enough to $\mathcal{W}_0$ (in Euclidean sense). This follows from Lemma \ref{lem2}. Hence the result. 
\end{proof}

\section{Experimental Results}
In this section, we present the experimental results on various real-life and synthetic datasets. All the experiments were undertaken on an \texttt{HP} laptop with Intel(R) Core(TM) i3-5010U 2.10 GHz processor, 4GB RAM, 64-bit Windows 8.1 operating system. 
The datasets and codes used in the experiments are publicly available from \url{https://github.com/SaptarshiC98/lwk-means}.
\begin{figure}
    \centering
    \includegraphics[width=0.55\textwidth]{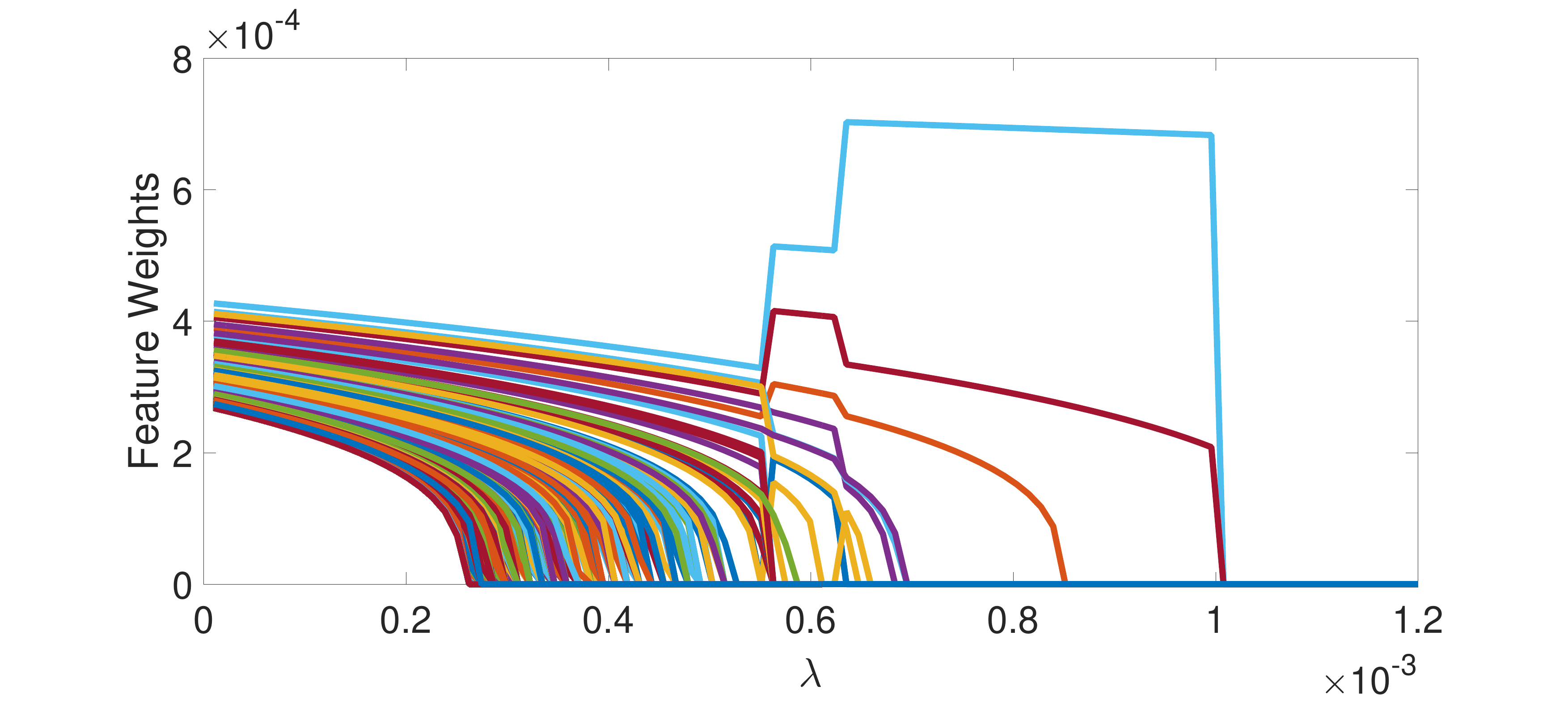}
    \caption{Regularization paths for the Leukemia dataset.}
    \label{fig:leu}
\end{figure}
\subsection{Regularization Paths}
In this section we discuss the concept of regularization paths in the context of $LW$-$k$-means. The term regularization path was first introduced in the context of lasso \cite{tibshirani1996regression}. We introduce two new concepts called {\it mean regularization path} and {\it median regularization path} in the context of $LW$-$k$-means. Suppose we have a sequence $\{\lambda_i\}_{i=1}^{n}$ of length $n$ of $\lambda $ values. After setting $\lambda=\lambda_i$, we run the $LW$-$k$-means algorithm $t$ times (say). Hence we have a set of $t$ weights, $\mathcal{W}_1,\dots ,\mathcal{W}_t$. Hence we can take the estimates of the average weight to be the mean of these $t$ vectors. Let this estimate be  $\mathcal{W}^*_i$. Thus, for each value $\lambda_i$, we get the mean weights $\mathcal{W}^*_i$. This sequence of $\mathcal{W}^*_i$'s, $\{\mathcal{W}^*_i\}_{i=1}^{n}$  is defined to be the mean regularization path. Similarly one can define the median regularization path by taking the median of the weights instead of the mean. 
\subsection{Case Studies in Microarray Datasets}
A typical microarray dataset has several thousands of genes and fewer than $100$ samples. We use the Leukemia and Lymphoma  datasets to illustrate the effectiveness of the $LW$-$k$-means algorithm. We do not include $k$-means and IF-HCT-PCA in the following examples since both the algorithms does not perform feature weighting.
\subsubsection{Example 1}
The Leukemia dataset consists of $3571$ gene expressions and $72$ samples. The dataset was collected by Golub {\it et al.} \cite{golub1999molecular}. We run the $LW$-$k$-means algorithm $100$ times  for each value of $\lambda$ and note the average value of the different feature  weights. We also note the average CER for different $\lambda$ values.
\begin{figure}
    \centering
    \begin{subfigure}{0.49\textwidth}
        \includegraphics[width=\textwidth]{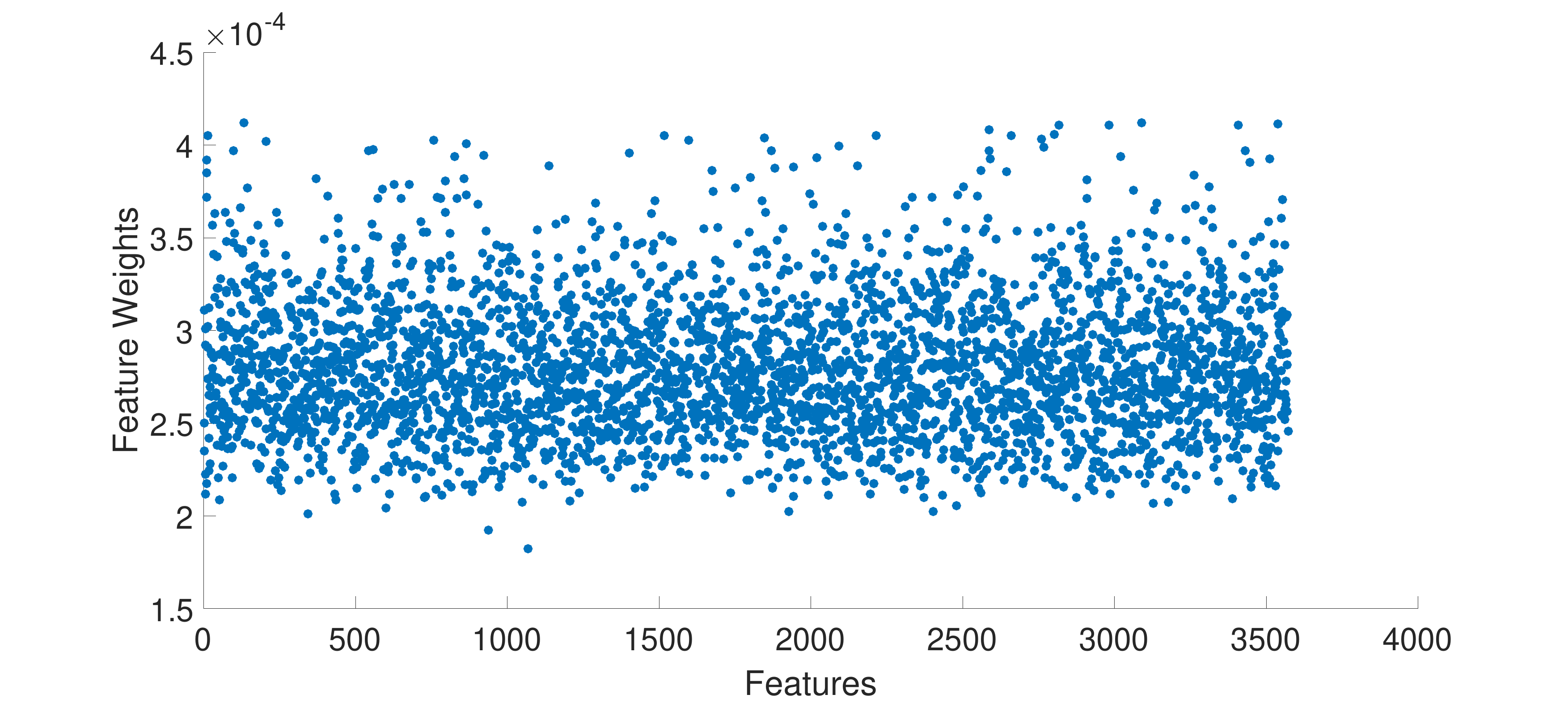}
        \caption{$WK$-means weights}
        \label{fig:luk_huang}
    \end{subfigure}
    \begin{subfigure}{0.49\textwidth}
        \includegraphics[width=\textwidth]{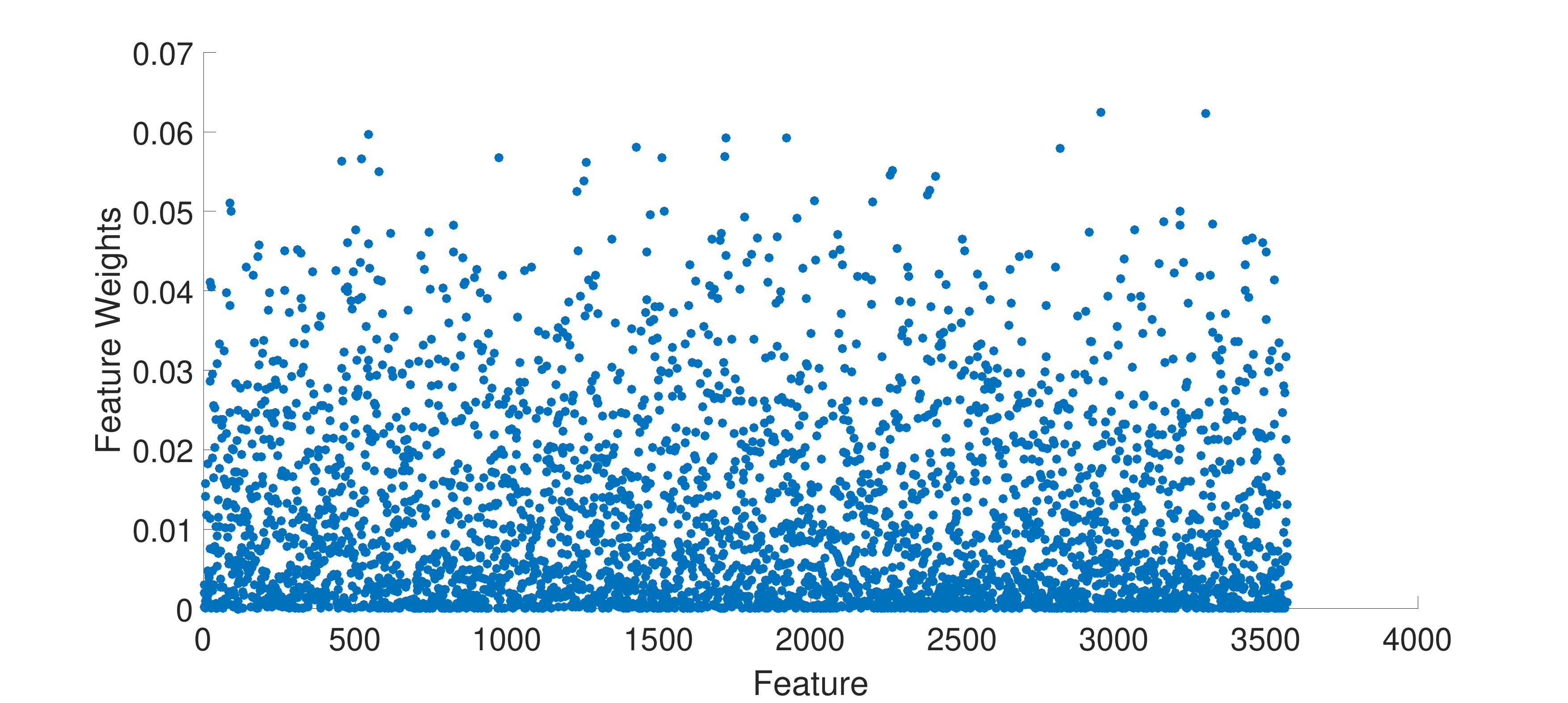}
        \caption{Sparse $k$-means}
        \label{fig:luk_sp}
    \end{subfigure}
    \begin{subfigure}{0.49\textwidth}
        \includegraphics[width=\textwidth]{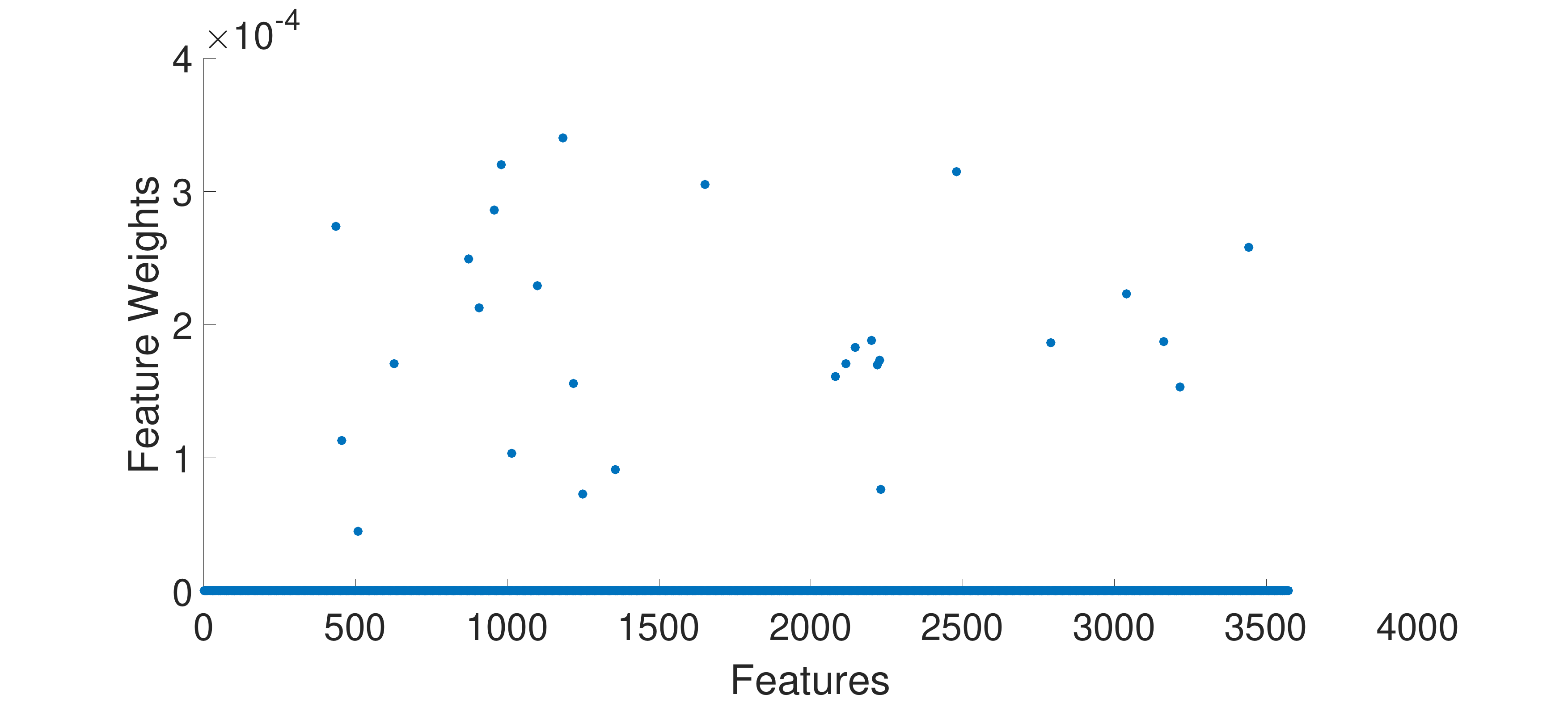}
        \caption{$LW$-$k$-means weights}
        \label{fig:luk_lwk}
    \end{subfigure}
    \caption{Average Weights assigned to different features by the $WK$-means and $LW$-$k$-means algorithm for lymphoma dataset. $LW$-$k$-means assigns zero feature weights to many of the features whereas $WK$-means and sparse $k$-means does not. 
    }
    \label{fig:luk}
\end{figure}
\begin{figure}[htb]
    \centering
    \includegraphics[width=0.4\textwidth]{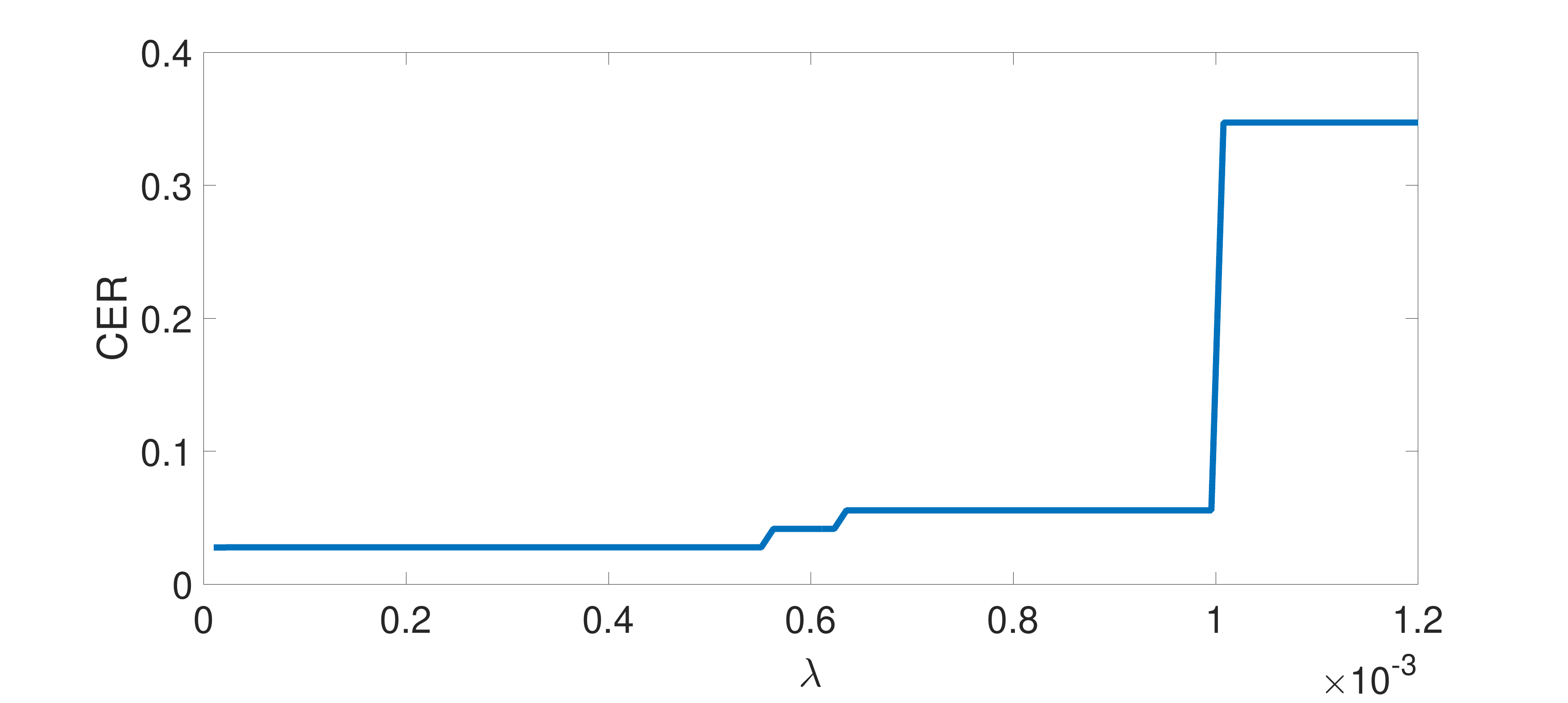}
    \caption{Average CER for different values of $\lambda$ for Leukemia dataset.}
    \label{fig:leu_r}
\end{figure}

In Fig. \ref{fig:leu}, we show the regularization paths for the Leukemia dataset.  In Fig. \ref{fig:leu_r}, we plot the average misclassification error rate for the same dataset. It is evident from Fig. \ref{fig:leu_r}, that as we decrease $\lambda$ the average CER drops down abruptly around $\lambda=0.52 \times 10^{-3}$. From Fig. \ref{fig:leu}, we observe that only few features are selected (on an average, 10 for $\lambda=0.6 \times 10^{-3}$) when $\lambda>0.5 \times 10^{-3}$. Possibly these features do not completely reveal the cluster structure of the dataset. As $\lambda$ is decreased, the CER remains more or less stable.
We also run the $WK$-means and sparse $k$-means algorithms 100 times (we performed the experiment 100 times to get a more consistent view of the feature weight) on the Leukemia dataset and compute the median of the weights for different features. In Fig. \ref{fig:luk_huang} and \ref{fig:luk_sp}, we plot these feature weights against the corresponding features for $WK$-means and sparse $k$-means respectively. It can be easily seen that $WK$-means and sparse $k$-means do not assign zero weight to all the features. In Fig. \ref{fig:luk_lwk}, we plot corresponding average (median) feature weights assigned by the $LW$-$k$-means algorithm. It can be easily observed that $LW$-$k$-means assigns zero feature weights to many of the features.\par 
\subsubsection{Example 2}
The Lymphoma dataset consists of $4026$ gene expressions and $62$ samples. The dataset was collected by  Alizadeh {\it et al.} \cite{alizadeh2000distinct}.  We run the $LW$-$k$-means algorithm $100$ times for each value of $\lambda$ and note both the mean and median values of the different feature weights. We also note the both the mean and median CER's for different $\lambda$ values.

\begin{figure}
    \centering
    \begin{subfigure}[b]{0.48\textwidth}
        \includegraphics[width=\textwidth]{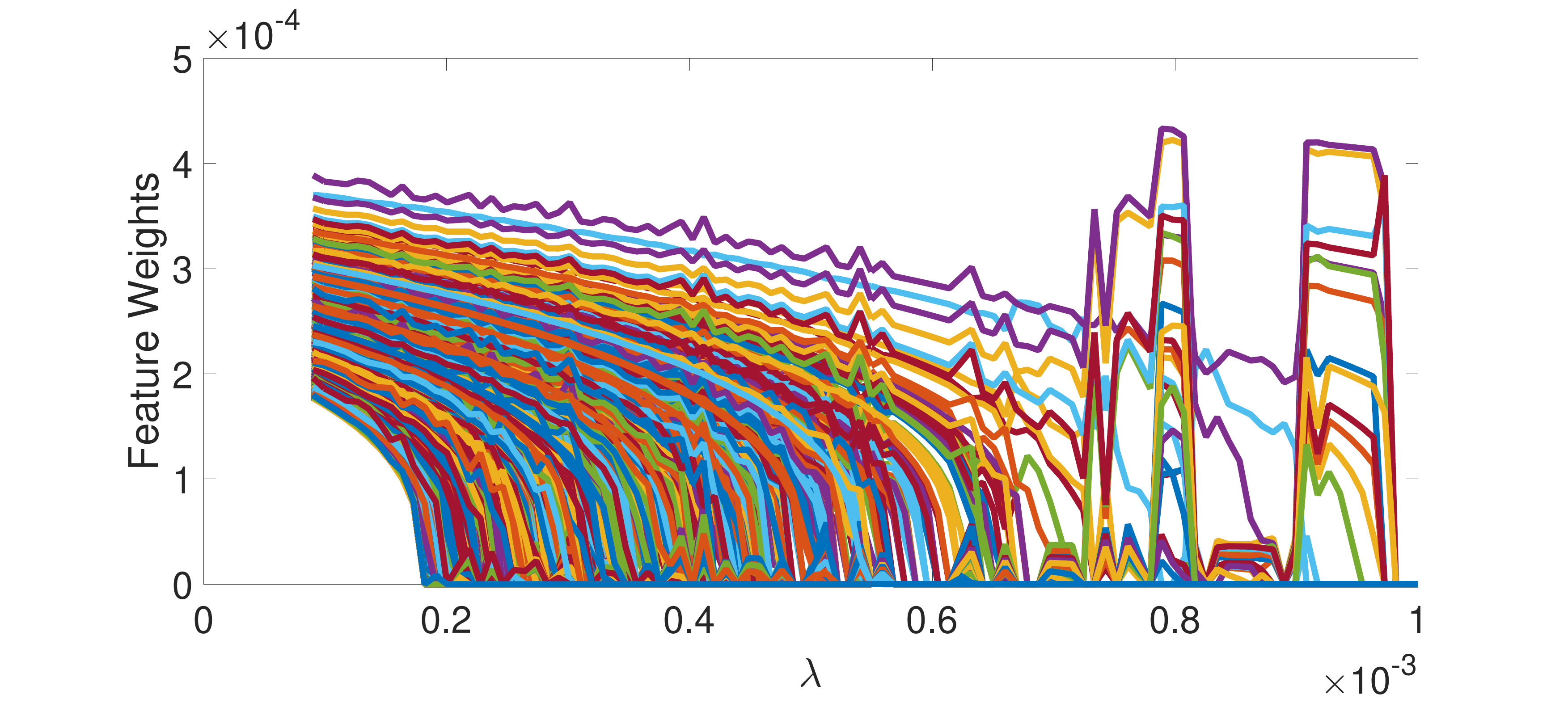}
        \caption{\small{ Mean Regularization Path for Lymphoma Dataset}}
        \label{fig:mean}
    \end{subfigure}
    \begin{subfigure}[b]{0.48\textwidth}
        \includegraphics[width=\textwidth]{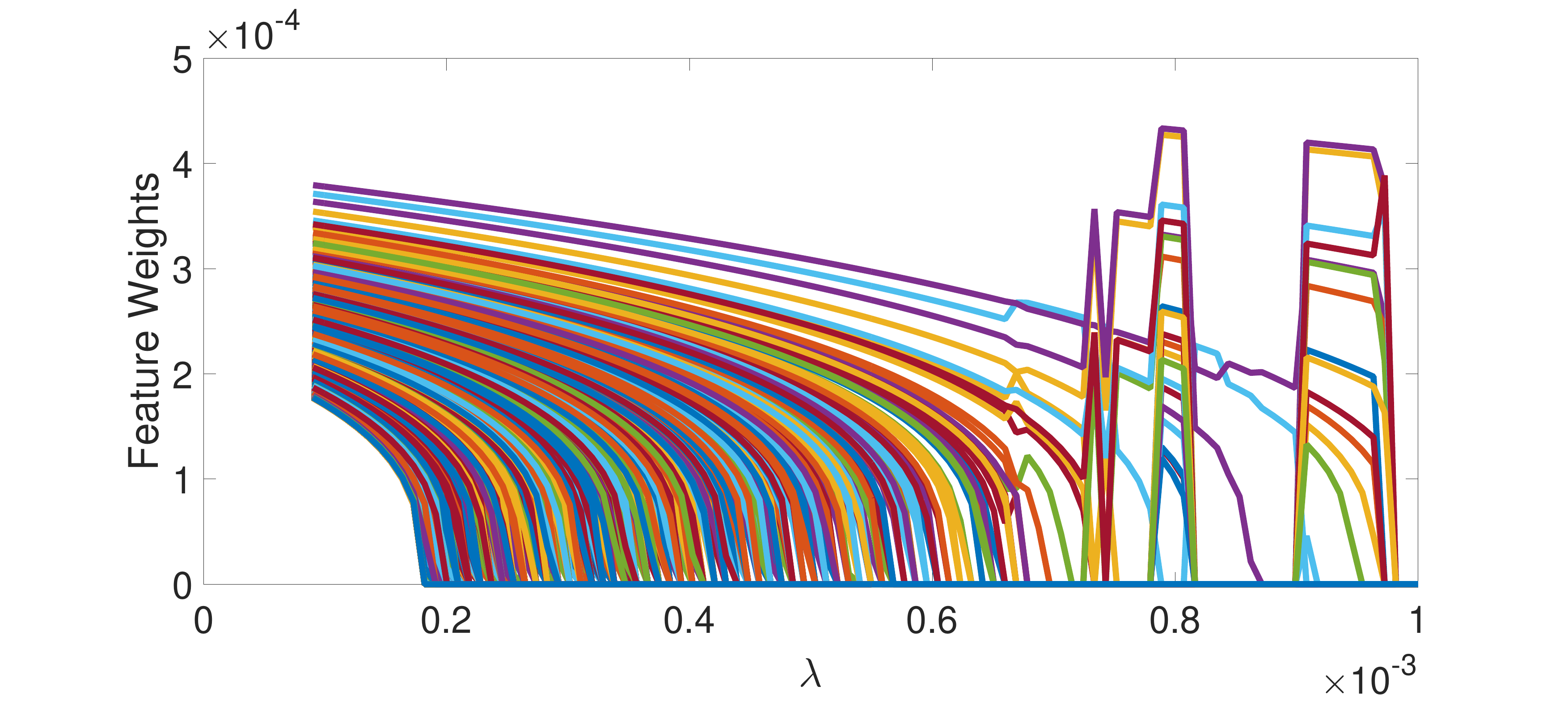}
        \caption{Median Regularization Path for Lymphoma Dataset}
        \label{fig:median}
    \end{subfigure}
    \caption{Regularization Path for Lymphoma Dataset.}\label{fig:lympho}
\end{figure}
\begin{figure}[htb]
    \centering
    \begin{subfigure}{0.49\textwidth}
        \includegraphics[width=\textwidth]{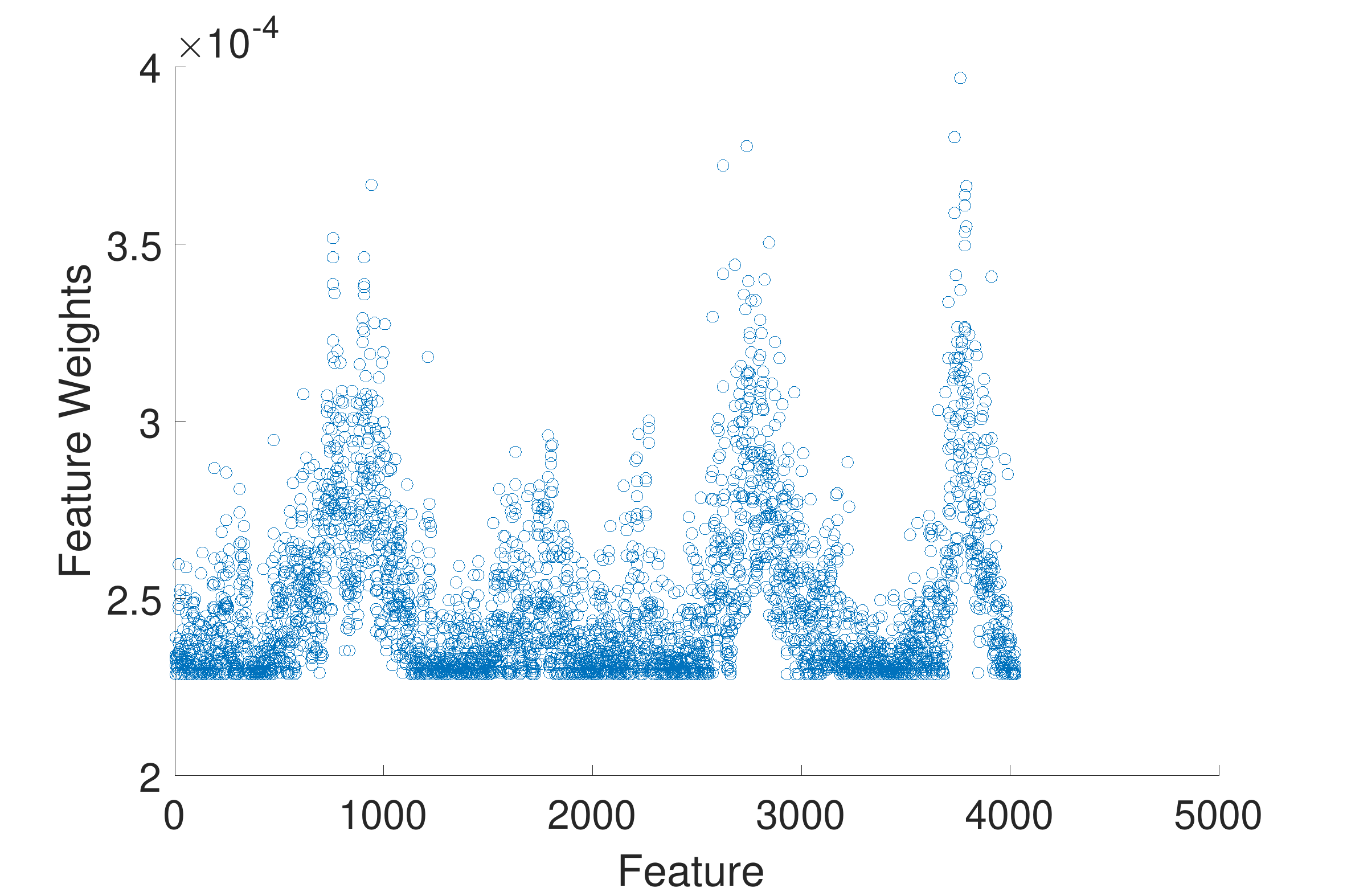}
        \caption{$WK$-means weights}
        \label{fig:lim}
    \end{subfigure}
    \begin{subfigure}{0.49\textwidth}
        \includegraphics[width=\textwidth]{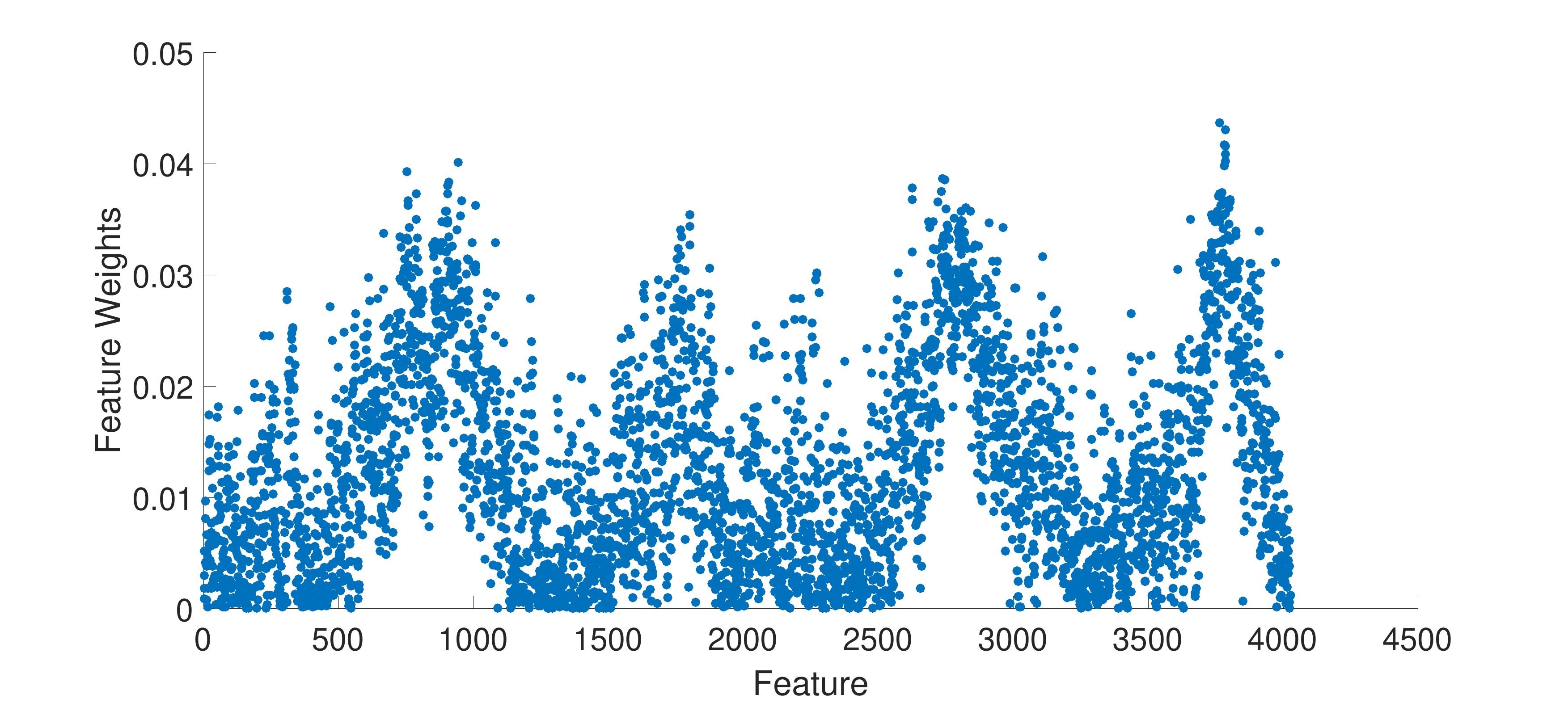}
        \caption{Sparse $k$-means}
        \label{sp1}
    \end{subfigure}
    \begin{subfigure}{0.49\textwidth}
        \includegraphics[width=\textwidth]{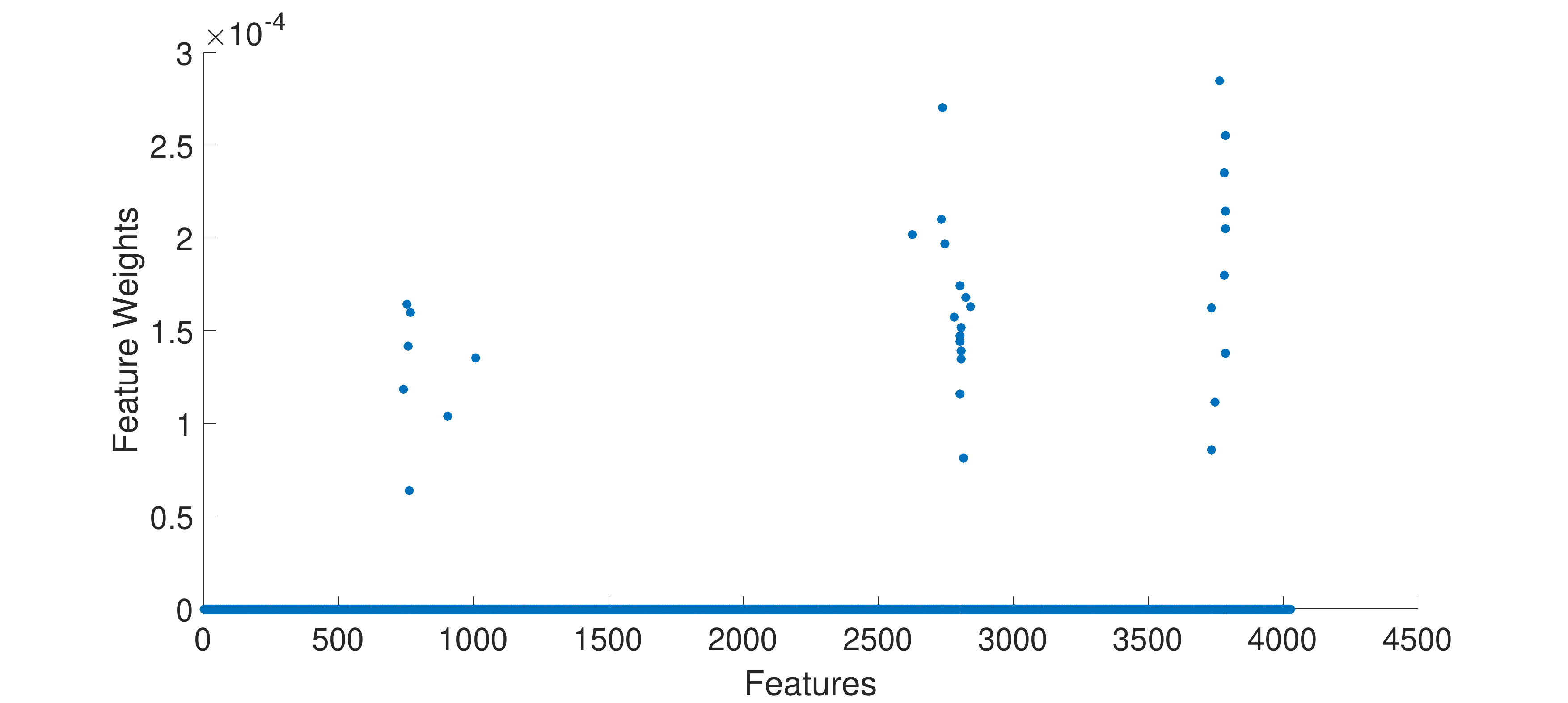}
        \caption{$LW$-$k$-means weights}
        \label{fig:lim_lwk}
    \end{subfigure}
    \caption{Average Weights assigned to different features by the $WK$-means and $LW$-$k$-means algorithm for lymphoma dataset. $LW$-$k$-means assigns zero feature weights to many of the features whereas $WK$-means and sparse $k$-means does not. Also, the features which were given more weights by $WK$-means, many of them have non-zero weights assigned by $LW$-$k$-means.}
    \label{fig:limy}
\end{figure}

In Fig. \ref{fig:lympho}, we plot the average (both mean and median) regularization paths and in Fig. \ref{fig:lympho_r}, we plot the average (both mean and median) CER for different values of $\lambda$. We observe that the median regularization path is smoother relative to the mean regularization path.  We also see from Fig. \ref{fig:lympho_r}, that the mean CER curve is less smooth than the median CER curve. The non-smooth mean regularization paths indicate a few cases where due to a bad initialization, the solutions got stuck at a local minimum instead of the global minima of the objective function. During our
experiments we observed that there were a few times when we got a bad initialization for cluster centroids, thus adversely affecting the mean regularization path and mean CER. On the other hand, the median is more robust against outliers and thus the corresponding regularization paths and CER are smoother compared to those corresponding to the mean. From Fig. \ref{fig:median_r}, we observe that there is a sudden drop in the misclassification error rate around $\lambda=6.2 \times 10^{-4}$ and it remains stable when $\lambda$ is further decreased. This might be due to the fact that when $\lambda$ is high, no features are selected and as $\lambda$ is decreased to around $\lambda=6.2 \times 10^{-4}$, the relevant features are selected. Also note that these features have higher weights than other features, even when $\lambda$ is quite small. The above facts indicate that indeed the $LW$-$k$-means detects the features which contain the cluster structure of the data.\par
We also run the $WK$-means and sparse $k$-means algorithms 100 times on the Lymphoma dataset and compute the median of the weights for different features. In figures \ref{fig:lim} and \ref{sp1}, we plot these feature weights against the corresponding features for $WK$-means and sparse $k$-means respectively. It can be easily seen that $WK$-means and sparse $k$-means do not assign zero feature weights and thus in effect do not perform a feature selection. In Fig. \ref{fig:lim_lwk}, we plot the corresponding average (median) feature weights assigned by the $LW$-$k$-means algorithm. It is easily observed that $LW$-$k$-means assigns zero feature weights to many of the features. 

\begin{figure}
    \centering
    \begin{subfigure}{0.47\textwidth}
        \includegraphics[width=\textwidth]{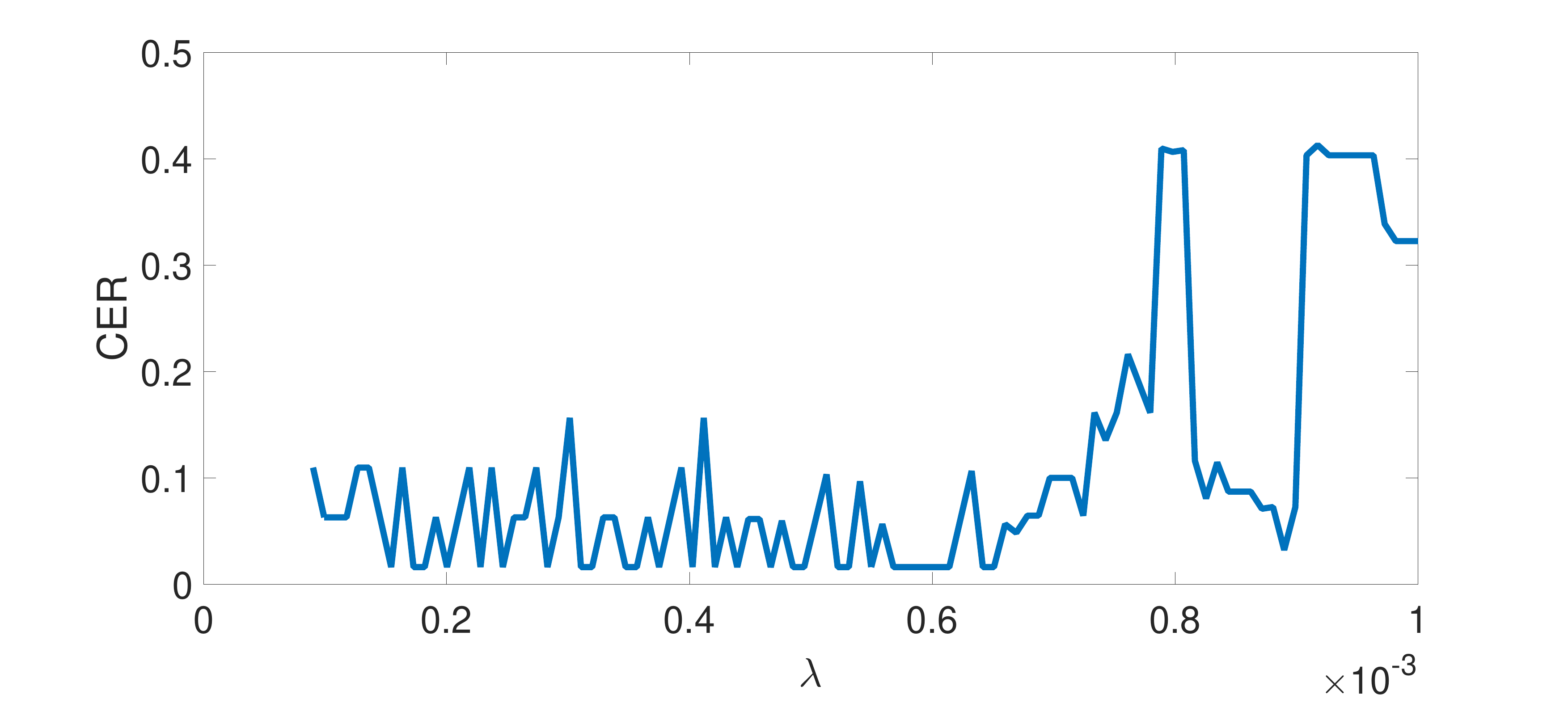}
        \caption{Mean CER for different values of $\lambda$}
        \label{fig:mean_r}
    \end{subfigure}
    ~ 
    \begin{subfigure}{.47\textwidth}
        \includegraphics[width=\textwidth]{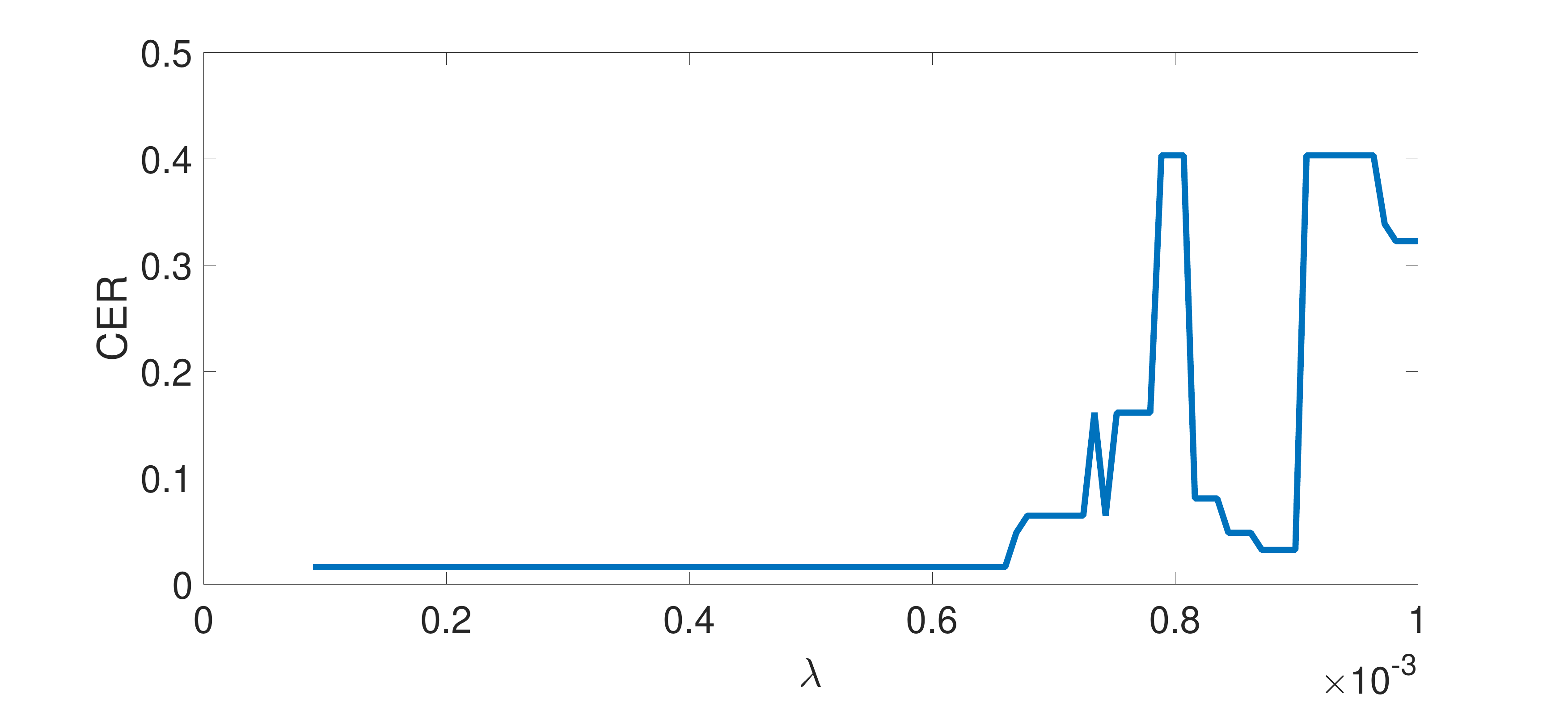}
        \caption{Median of the CER for different values of $\lambda$}
        \label{fig:median_r}
    \end{subfigure}
    \caption{Average CER for different values of $\lambda$ for lymphoma dataset. The median regularization path is much smoother than the mean regularization path because the median is not adversety affected by the $k$-means initialization of the $LW$-$k$-means algorithm.}\label{fig:lympho_r}
\end{figure}
\subsection{Choice of $\lambda$}
Let us illustrate with the example of the synthetic toy1 dataset (generated by us) which has $10$ features of which only the first $4$ are the distinguishing ones. The dataset is available from \url{https://github.com/SaptarshiC98/lwk-means}.  We take different values of $\lambda$ and iterate the $LW$-$k$-means algorithm $20$ times and take the average value of the weights assigned to different features by the algorithm. Fig. \ref{fig:toy1} shows the average value of the feature weights for different values of $\lambda$. This figure is similar to the regularization paths for the lasso \cite{tibshirani1996regression}. \par
Here the key observation is that as $\lambda$ increases, the weights decrease on an average and eventually becomes $0$. From Fig. \ref{fig:toy1}, it is evident that the $LW$-$k$-means correctly identifies that the first $4$ features are important for revealing the cluster structure for the dataset. Here, an appropriate guess for $\lambda$ might be any value between $0.1$ and $0.5$. 
It is clear from this toy example, if the dataset has a proper cluster structure, after a threshold, increasing $\lambda$ slightly does not reduce the number of feature selected.  
\begin{figure}[htb]
    \centering
    \includegraphics[width=0.4\textwidth]{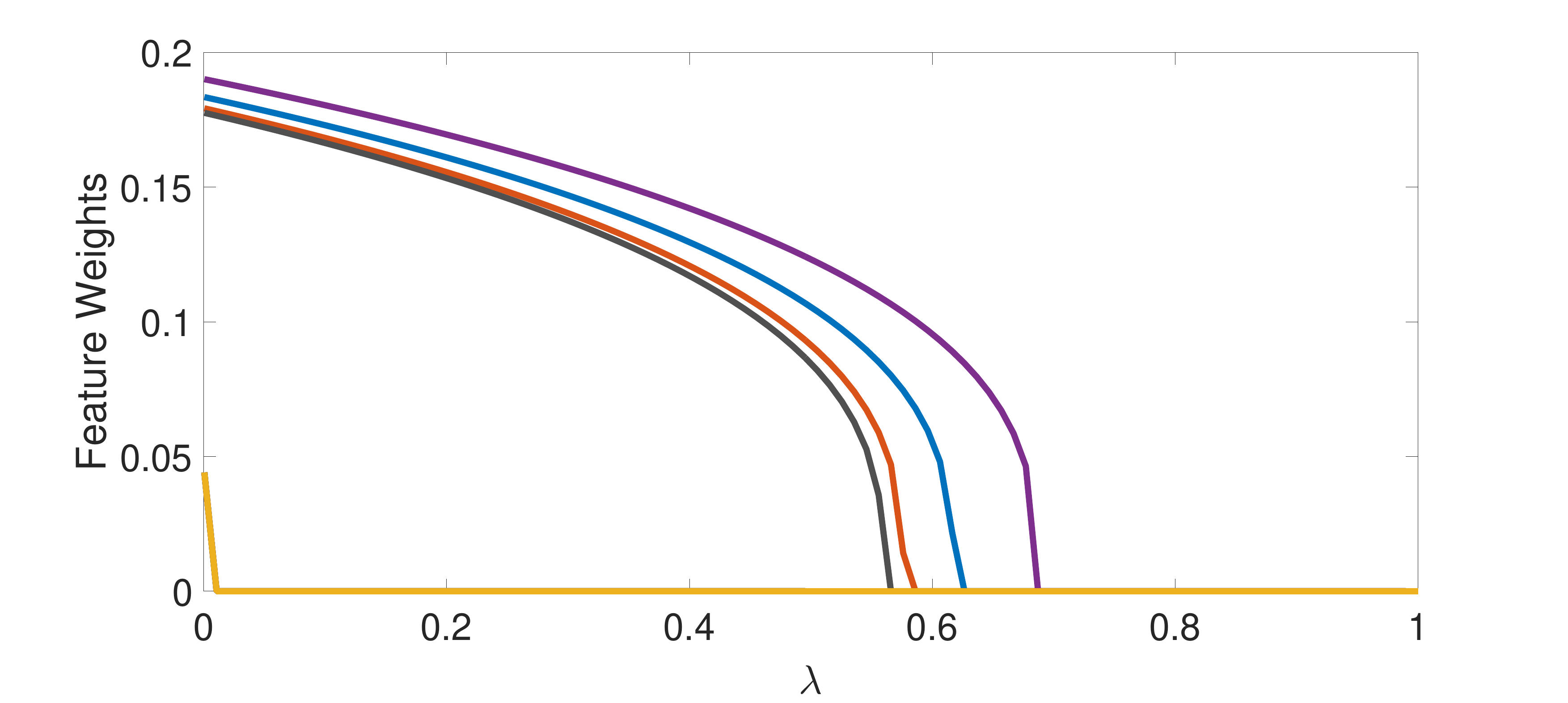}
    \caption{Mean regularization paths for dataset toy1}
    \label{fig:toy1}
\end{figure}
\subsection{Experimental Results on Real-life Datasets}
\subsubsection{Description of the Datasets}
The datasets are collected from the Arizona State University (ASU) Repository (\url{http://featureselection.asu.edu/datasets.php}), Keel Repository \cite{alcala2010keel}, and The UCI Machine Learning Repository \cite{Lichman:2013}. In Table \ref{tab des}, a summary description of the datasets is provided. The $COIL_2$, $ORL_2$, $YALE_2$ datasets are constructed by taking the first  144, 20 and 22 instances from the COIL20, ORL, and Yale image datasets respectively. The Breast Cancer and Lung Cancer datasets were analyzed and grouped into two classes in \cite{yousefi2009reporting}. A description of all the genomic datasets can be found in \cite{jin2016influential}.  
\begin{table}[htb]
\centering
\caption{Description of the Real-life Datasets}
\label{tab des}
\begin{tabular}{ccccc}
\hline

Dataname  & Source & $k$ & $n$ & $p$  \\
\hline
Brain Cancer &    Pomeroy \cite{jin2016influential}&    5 &    42 & 5597\\
Leukemia &    Gordon et al. \cite{gordon2002translation} &    2 &    72 &    3571 \\
Lung Cancer & Bhattacharjee et al.  \cite{bhattacharjee2001classification}   & 2 & 203 &    12,600\\
Lymphoma &    Alizadeh et al. \cite{alizadeh2000distinct} & 3 & 62 &    4026\\
SuCancer & Su et al. \cite{jin2016influential} & 2 & 174 & 7909\\
Wine &  Keel &    3    & 178 & 13\\
$COIL_5$ & ASU &    5 &    360 &    1024\\
$ORL_2$ &    ASU &    2 &    20 &    1024\\
$YALE_2$ & ASU &    2 &    22 &    1024\\
ALLAML & ASU    & 2 &    72 & 7129\\
Appendicitis & Keel & 2 & 106 & 7\\ 
WDBC & Keel & 2 & 569 & 30\\
GLIOMA & ASU & 4 & 50 & 4434

\\ \hline
\end{tabular}
\end{table}

\subsubsection{Performance Index}
For comparing the performance of various algorithms on the same dataset, we use the {\it Classification Error Rate (CER)} \cite{friedman2001elements} between two partitions $\mathcal{T}_1$ and $\mathcal{T}_2$ of the patterns as the cluster validation index. This index measures the mismatch between two partitions of a given set of patterns with a value 0 representing no mismatch and a value 1 representing complete mismatch.
\subsubsection{Computational Protocols}
The following computational protocols were followed during the experiment.\par 
\textit{Algorithms under consideration}: The $LW$-$k$-means algorithm, the $k$-means algorithm \cite{Mac}, $WK$-means algorithm \cite{huang2005automated}, the IF-HCT-PCA algorithm \cite{jin2016influential} and the sparse $k$-means algorithm \cite{witten2010framework}. \par
We set the value of $\beta$ to 4 for both the $LW$-$k$-means and $WK$-means algorithms throughout the experiments. The value of $\lambda$ was chosen by performing some hand-tuned experiments. To choose the value of $\alpha$, we first run the $k$-means algorithm until convergence. Then use the value of $\alpha=\frac{1}{\Bigg[\sum_{l=1}^p\frac{1}{[\beta D_l]^{\frac{1}{\beta-1}}}\Bigg]^{\beta-1}}.$ Here $D_l=\sum_{i=1}^{n}\sum_{j=1}^{k}u_{i,j} d(x_{i,l},z_{j,l})$. This is the value of the Lagrange multiplier in $WK$-means algorithm \cite{huang2005automated}.\par 
\textit{Performance comparison}: For each of the last three algorithms, we start with a set of randomly chosen centroids and iterate until convergence. We run each algorithm independently $20$ times on each of the datasets and calculate the CER. We standardized the datasets prior to applying the algorithms for all the five algorithms. 

\begin{table*}[htb]
\centering
\caption{CER for Synthetic Datasets}
\label{tab synth}
\begin{tabular}{crrrrr}
\hline

Datasets & \multicolumn{1}{c}{$LW$-$k$-means($\lambda$)} & \multicolumn{1}{c}{$WK$-means} & \multicolumn{1}{c}{$k$-means} & \multicolumn{1}{c}{IF-HCT-PCA} & \multicolumn{1}{c}{Sparse $k$-means} \\
\hline
s1 &     \textbf{0} (0.04) &    0.0642 &     0.2012    & 0.3333&\textbf{0}\\
s2    &   \textbf{0}(0.02)&    0.1507&    0.1398&    0.34&0\\
s3&    \textbf{0}(0.02)    &0.0401&    0.0865    &0.6667& \textbf{0}\\
s4&    \textbf{0}(0.007)    &0.087    &0.2000&    0.3167&\textbf{0}\\
s5&\textbf{    0}(0.007)    &0.1065&    0.1172&    0.3267&\textbf{0}\\
s6 &\textbf{0} (0.002)   & 0.0465 &    0.1537&    0.3567&\textbf{0}\\
s8 &     \textbf{0}(0.0005) &    0.1272 &    0.0653    &0.3067&\textbf{0}\\
hd6&    \textbf{0} (0.005)   & 0.2567 &    0.3062 &    0.34&\textbf{0}\\
sim1 &    \textbf{0}(0.1)    & 0.0203&    0.0452&    0.333&\textbf{0}\\
f1 & \textbf{0.0267}(0.0019) & 0.6158 & 0.6138 &0.3700& 0.5938333\\
f5  & \textbf{0.0100}(0.0006) & 0.6337 & 0.6260 & 0.2767 & 0.5328333

\\ \hline
\end{tabular}
\end{table*}

\begin{table*}[htb]
\centering
\caption{CER for Real-life Datasets}
\label{tab real}
\begin{tabular}{crrrrr}
\hline
Datasets & \multicolumn{1}{c}{$LW$-$k$-means($\lambda$)} & \multicolumn{1}{c}{$WK$-means} & \multicolumn{1}{c}{$k$-means} & \multicolumn{1}{c}{IF-HCT-PCA} & \multicolumn{1}{c}{Sparse $k$-means}  \\
\hline
Brain &    \textbf{0.2381}(0.0005) &    0.4452 &    0.2865 &    0.2624& 0.2857\\
Leukemia &    \textbf{0.0278}(0.0005) &    0.2419 &    0.2789 &    0.0695& 0.2778\\
Lung Cancer &    \textbf{0.2167}(0.000162) &    0.4672 &    0.4361 &    \textbf{0.2172} & 0.3300\\
Lymphoma &    \textbf{0.0161}(0.0006) &    0.3266 &    0.3877 &    0.0657&0.2741 \\
SuCancer & \textbf{0.4770}(0.0003) &    0.4822 &    0.4772&    0.5000&\textbf{0.4770}\\
Wine &    \textbf{0.0506}(1) &    0.0896    & 0.3047 &    0.1404&\textbf{0.0506}  \\
$COIL_5$ &    0.4031(0.001) &    0.4365 &    0.4261 &    0.4889 & \textbf{0.3639} \\
$ORL_2$ &    \textbf{0.0500}(0.005) &    0.1053 &    0.1351 &    0.3015  & 0.0512\\
$YALE_2$ &    \textbf{0.1364}(0.002) &    0.1523 &    0.1364 &    0.4545 & \textbf{0.1364}\\
ALLAML &    \textbf{0.2500}(0.0002) &    0.3486 &    0.2562 &    0.2693 & 0.2546\\
Appendicitis & 0.1981(0.17) & 0.3642 &    0.3156 &    \textbf{0.1509} & 0.1905\\
WDBC & \textbf{0.0756}(0.0001) &    0.0758 & 0.0901 & 0.1494 & 0.0810\\
GLIOMA & \textbf{0.4}(0.00051)   &0.424 &    0.442 &    0.6    & \textbf{0.4}

\\ \hline
\end{tabular}
\end{table*}
\subsection{Discussions}

In this section, we discuss some of the results obtained by using $LW$-$k$-means algorithm for clustering various datasets. In Tables \ref{tab synth} and \ref{tab real}, we report the mean CER obtained by  $LW$-$k$-means, $WK$-means, $k$-means, IF-HCT-PCA and sparse $k$-means. The values of $\lambda$ for $LW$-$k$-means are  also mentioned in both Tables \ref{tab synth} and \ref{tab real}.\par 
In Table \ref{tab synth}, we report the mean CER obtained by  $LW$-$k$-means, $WK$-means, $k$-means, IF-HCT-PCA, and sparse $k$-means. It is evident from Table \ref{tab synth} that the $LW$-$k$-means outperforms three of the state of the art algorithms (except sparse $k$-means) in all the synthetic datasets. 
Though the sparse $k$-means and $LW$-$k$-means give the same CER for the synthetic datasets, the time taken by sparse $k$-means is much more compared to $LW$-$k$-means. Also for some of the synthetic datasets, sparse $k$-means fails to identify all the relevant feature as discussed in section \ref{fs}.  \par
As revealed from Table \ref{tab real}, the $LW$-$k$-means outperforms the IF-HCT-PCA in 11 of the 13 real-life datasets. In Table \ref{tab real compar}, we note the average time taken by each of the $LW$-$k$-means, IF-HCT-PCA and sparse $k$-means. Computation of the threshold by Higher Criticism thresholding increases the runtime of the IF-HCT-PCA algorithm. Also the computation of the tuning parameter via the gap statistics increases the runtime of the sparse $k$-means algorithm. We also note the average number of selected features for the three algorithms in Table \ref{tab real compar}. It is clear from Table \ref{tab real compar}, $LW$-$k$-means also achieves better results in much lesser time compared to that of IF-HCT-PCA. \par
From Table \ref{tab real}, it can be seen that the $LW$-$k$-means outperforms the sparse $k$-means in all the 6 microarray datasets. For the other datasets, $LW$-$k$-means and sparse $k$-means give copmarable results. Also, it is clear from Table \ref{tab real compar}, $LW$-$k$-means achieves it in much lesser time compared to sparse $k$-means. Also note from Table \ref{tab real compar}, the sparse $k$-means gives non-zero weights to all the features except for $YALE_2$ and $ORL_2$ datasets. Thus, in effect, for all the other datasets, sparse $k$-means {\it does not} perform feature selection. It can also be seen that $LW$-$k$-means achieves almost the same level of accuracy using much smaller number of features for the two aforementioned datasets. \par

\subsection{Discussions on Feature Selection}
\label{fs}
In this section, we compare the feature selection aspects between $LW$-$k$-means, IF-HCT-PCA and sparse $k$-means algorithms. We only discuss compare the three algorithms for synthetic datasets, since the importance of each feature is known beforehand. \par
Before we proceed, we define a new concept called the {\it ground truth relevance vector} of a dataset. The ground truth relevance vector of a dataset $\mathcal{D}$ is defined as, $\mathcal{T_l}=(t_1,\dots ,t_p)$, where $t_i=1$ if $i^{th}$ feature is important in revealing the cluster structure of the dataset, $t_i=0$, otherwise. In general, this vector is not known beforehand. The objective of any feature selection algorithm is to estimate it.\par

\begin{table*}
\centering
\caption{Comparison between $LW$-$k$-means and IF-HCT-PCA}
\label{tab real compar}
\begin{tabular}{|c|rrr|rrr|}
\hline
Datasets & \multicolumn{3}{c|}{Number of Selected Features} & \multicolumn{3}{c|}{Time (in seconds)}  \\
\hline
 & $LW$-$k$-means & IF-HCT-PCA & Sparse $k$-means & $LW$-$k$-means & IF-HCT-PCA & Sparse $k$-means\\
 \hline
Brain &    14 &    429 & 5597 &    2.407632 &    186.951822 &  324.26  \\
Leukemia &    28 &        213 & 3571 &  1.008672 &    48.983883 & 159.44\\
Lung Cancer &    148 & 418    & 12600 &    1.542459  &    229.079416 &  2225.28 \\
Lymphoma &    32 &44    & 4026 &    1.542459 &    60.122838 &  184.23 \\
SuCancer & 7909 & 6 &7909 & 236.310317 & 805.546843 &964.39 \\
Wine &   13  & 4    &13    &0.219742  &273.263245 &4.49      \\
$COIL_5$ &    332.2 &    441 & 1024 &    4.661402 & 205.827235 &    480.38  \\
$ORL_2$ &92     &324     &148 &    0.156323  & 14.038397 &  43.41\\
$YALE_2$ &    33 &    31 &159 &0.204668 &    229.513561  &43.45\\
ALLAML &    357 &    213 & 7129&    1.008672 & 48.983883 & 423.25\\
GLIOMA & 77 & 50 & 4358 & 2.15 & 164.14 & 199.04\\
Appendicitis & 5 & 7 & 7 & 2.421305 & 110.572437 &  2.87 \\
WDBC & 30 & 13 & 30 & 0.510246 & 118.152659 & 21.05
\\ \hline
\end{tabular}
\end{table*}
Similarly we define {\it relevance vector} of a feature selection algorithm $\mathcal{A}$ and a dataset $\mathcal{D}$. It is a binary vector assigned by feature selection algorithm $\mathcal{A}$ to the dataset $\mathcal{D}$ and is defined by,
$\mathcal{T_l^A}=(t_1,\dots ,t_p)$, where $t_i=1$ if $i^{th}$ feature is selected by algorithm $\mathcal{A}$, $t_i=0$, otherwise.\par
For the synthetic datasets, we already know the ground truth relevance vector for these datasets. We use Matthews Correlation Coefficient (MCC) \cite{matthews1975comparison} to compare between the ground truth relavance vector and the relevance vector assigned by the algorithms $LW$-$k$-means, IF-HCT-PCA and sparse $k$-means.  MCC lies between $−1$ and $+1$. A coefficient of $+1$ represents a perfect agreement between the ground truth and the algorithm with respect to feature selection, $−1$ indicates total disagreement between the same and 0 denotes no better than random feature selection.
The MCC between the ground truth relevance vector and the relevance vector assigned by the algorithms $LW$-$k$-means, IF-HCT-PCA, and sparse $k$-means is shown in Table \ref{mathew}. From Table \ref{mathew}, it is clear that $LW$-$k$-means correctly identifies all the relevant features and thus leads to an MCC of +1 for each of the synthetic datasets, whereas, IF-HCT-PCA performs no better than a random feature selection. For the sparse $k$-means algorithm, it identifies only a subset of the relevant features as important for datasets s2, s3, s4, s5, s6, s7 and correctly identifies all of the features in only datasets s1, hd1, and sim1. Also for datasets f1 and f5, the sparse $k$-means algorithm performs no better than random selection of the features.

\begin{table}[htb]
\centering
\caption{Matthews Correlation Coefficient For Synthetic Datasets}
\label{mathew}
\begin{tabular}{crrr}
\hline
Datasets & \multicolumn{1}{c}{$LW$-$k$-means} & \multicolumn{1}{c}{IF-HCT-PCA}&\multicolumn{1}{c}{Sparse $k$-means}  \\
\hline
s1 & 1 & 0.0870&1\\ 
s2 & 1 & 0.0380&0.7535922\\ 
s3 & 1 & -0.0611&0.9594972\\
s4 & 1 &  0.0072 &0.5016978\\
s5 & 1 & -6.3668e-04&0.6276459\\
s6 & 1 & 0.0547&0.6813851\\
s7 & 1 & 0.0345&0.6707212\\
hd6 & 1 & 0.0048&1\\
sim1& 1&0.1186&1\\
f1&1&0.2638& 0.01549587\\
f5 & 1 & 0.3413 & 0.02240979\\
\hline
\end{tabular}

\end{table}

\section{Simulation Study}\label{simulation studies}
In the following example, we compare the $WK$-means estimate of weights with those of the $LW$-$k$-means estimates.

\subsection{Example 1}
We simulated $50$ datasets each of which have $4$ clusters consisting of $100$ points each. Let $\mathbf{X}_i$ be a random point from the $i^{th}$ cluster, where $i \in \{1,2,3,4\}$. Let $\mathbf{X}_i=(X^{(i)}_1,X^{(i)}_{2})'$. The dataset is simulated as follows.
\begin{itemize}
    \item $X^{(1)}_1$ are i.i.d from $\mathcal{N}(0,1)$.
    \item $X^{(1)}_2$ are i.i.d from $\mathcal{N}(0,1)$.
    \item $X^{(2)}_1$ are i.i.d from $\mathcal{N}(7,1)$.
    \item $X^{(2)}_2$ are i.i.d from $\mathcal{N}(2,1)$.
    \item $X^{(3)}_1$ are i.i.d from $\mathcal{N}(13,1)$.
    \item $X^{(3)}_2$ are i.i.d from $\mathcal{N}(-2,1)$.
    \item $X^{(4)}_1$ are i.i.d from $\mathcal{N}(19,1)$.
    \item $X^{(4)}_2$ are i.i.d from $Unif(-10,10)$.
\end{itemize}
We run the sparse $k$-means and $LW$-$K$-means algorithms 10 times on each dataset and noted the average of the feature weights. We do this procedure on each of the 50 datasets.  In Fig. \ref{fig:hist}, we plot the histogram for the features $x_1$ and $x_2$. From Fig. \ref{fig:hist}, it is clear that feature $x_1$  has a clusture structure and feature $x_2$ does not. In Fig. \ref{fig:eg1}, we plot the boxplot of the average weights assigned by the $LW$-$k$-means and sparse $k$-means algorithms to features $x_1$ and $x_2$ for all the 50 datasets. Fig. \ref{fig:eg1} shows that sparse $k$-means assigns a feature weight of 0.32 to the unimportant feature $x_2$, whereas $LW$-$k$means assigns $x_2$, zero feature weight and hence is capable of proper feature selection.
\begin{figure}[htb]
    \centering
    \begin{subfigure}{0.47\textwidth}
        \includegraphics[width=\textwidth]{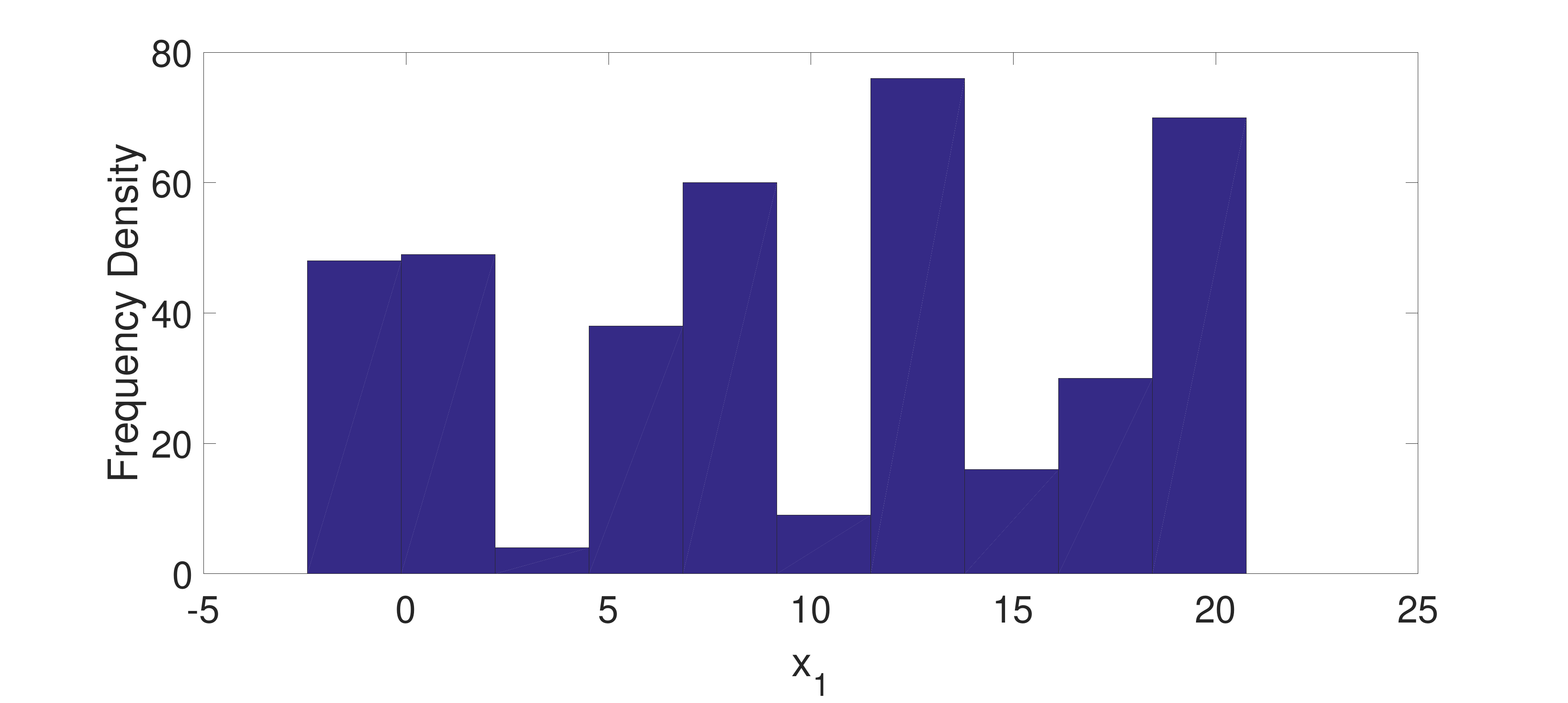}
        \caption{Feature $x_1$}
        \label{fig:hist1}
    \end{subfigure}
    \begin{subfigure}{0.47\textwidth}
        \includegraphics[width=\textwidth]{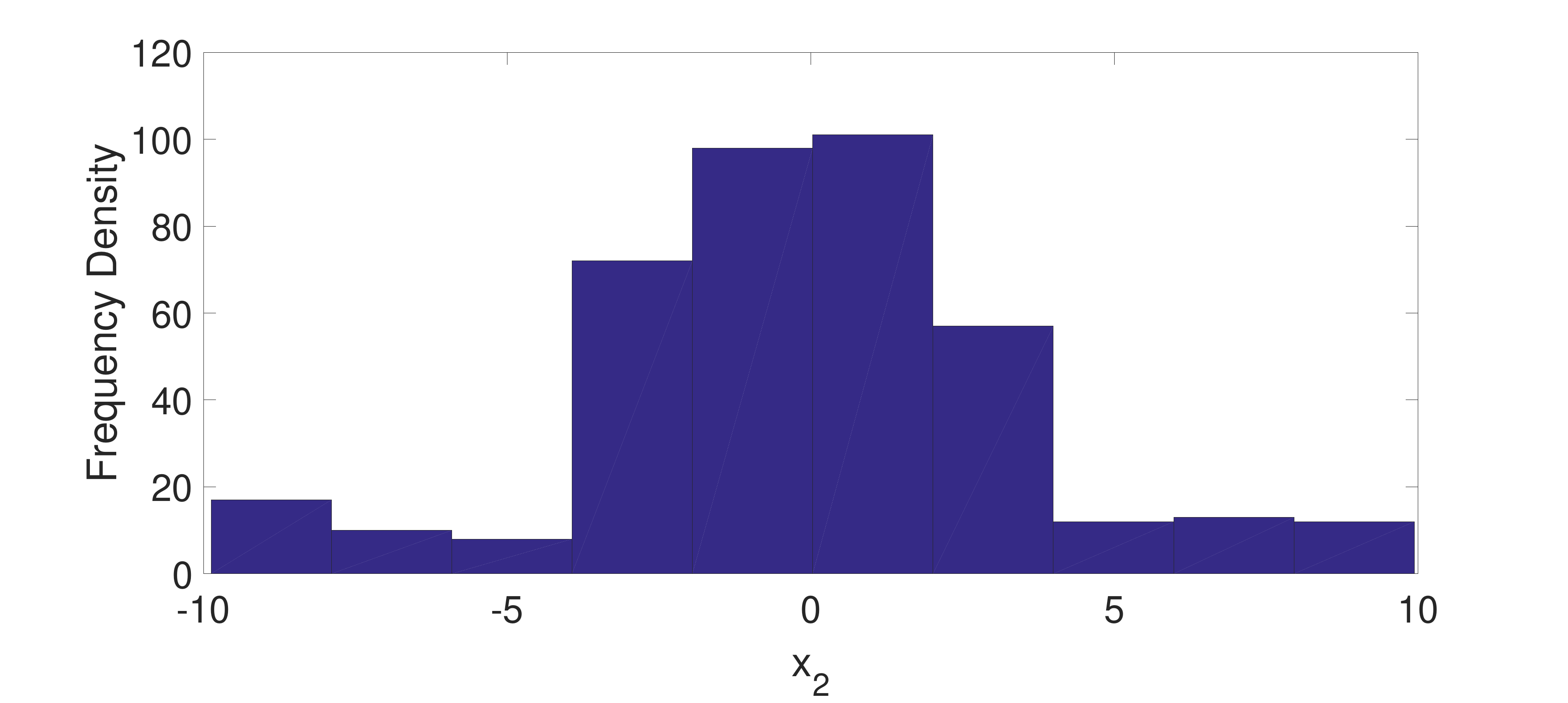}
        \caption{Feature $x_2$}
        \label{fig:hist1}
    \end{subfigure}
    \caption{Histogram of features $x_1$ and $x_2$ of the $W2$ dataset. Clearly Feature $x_1$ has a clusture structure and feature $x_2$ doesn't.}\label{fig:hist}
\end{figure}
\begin{figure}[htb]
    \centering
    \begin{subfigure}{0.47\textwidth}
        \includegraphics[width=\textwidth]{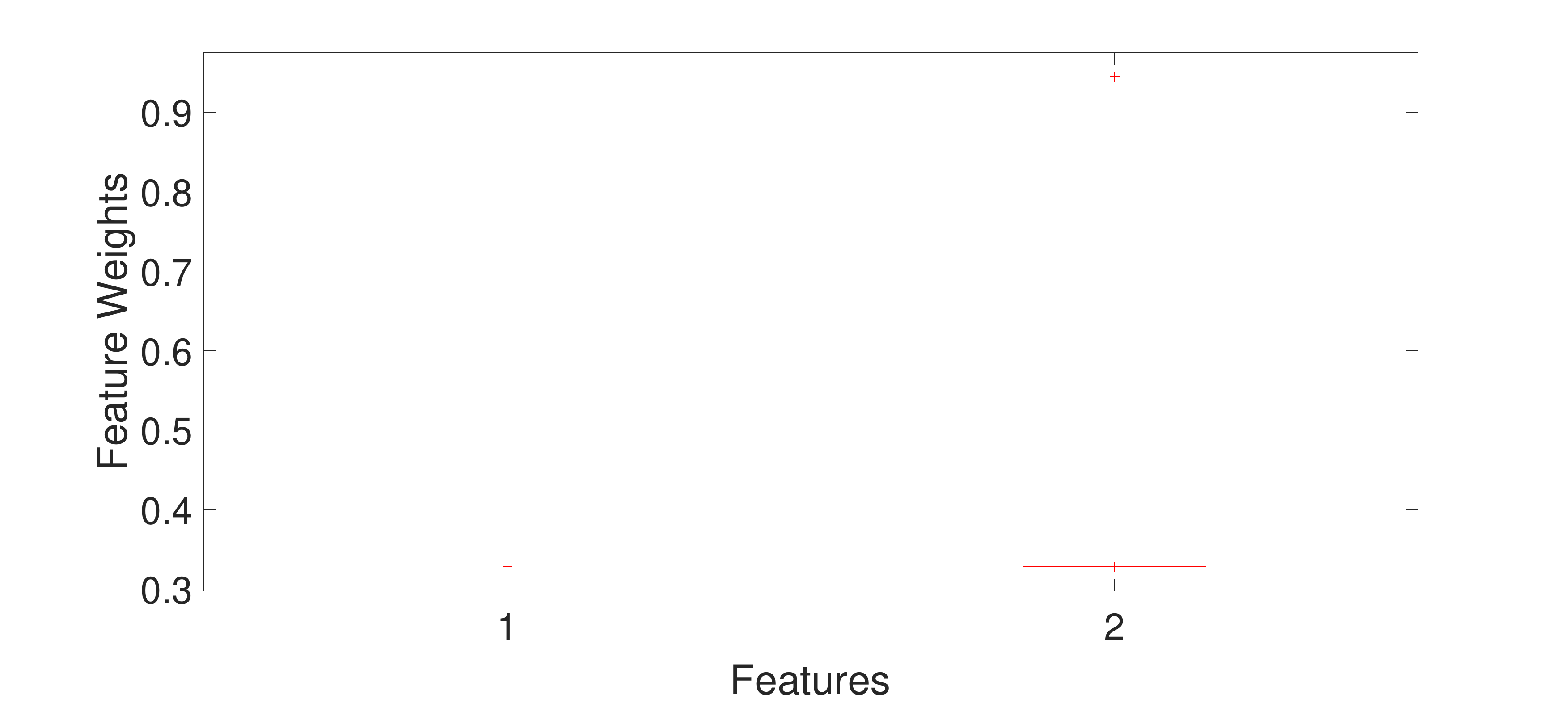}
        \caption{Sparse $k$-means}
        \label{}
    \end{subfigure}
    \begin{subfigure}{0.47\textwidth}
        \includegraphics[width=\textwidth]{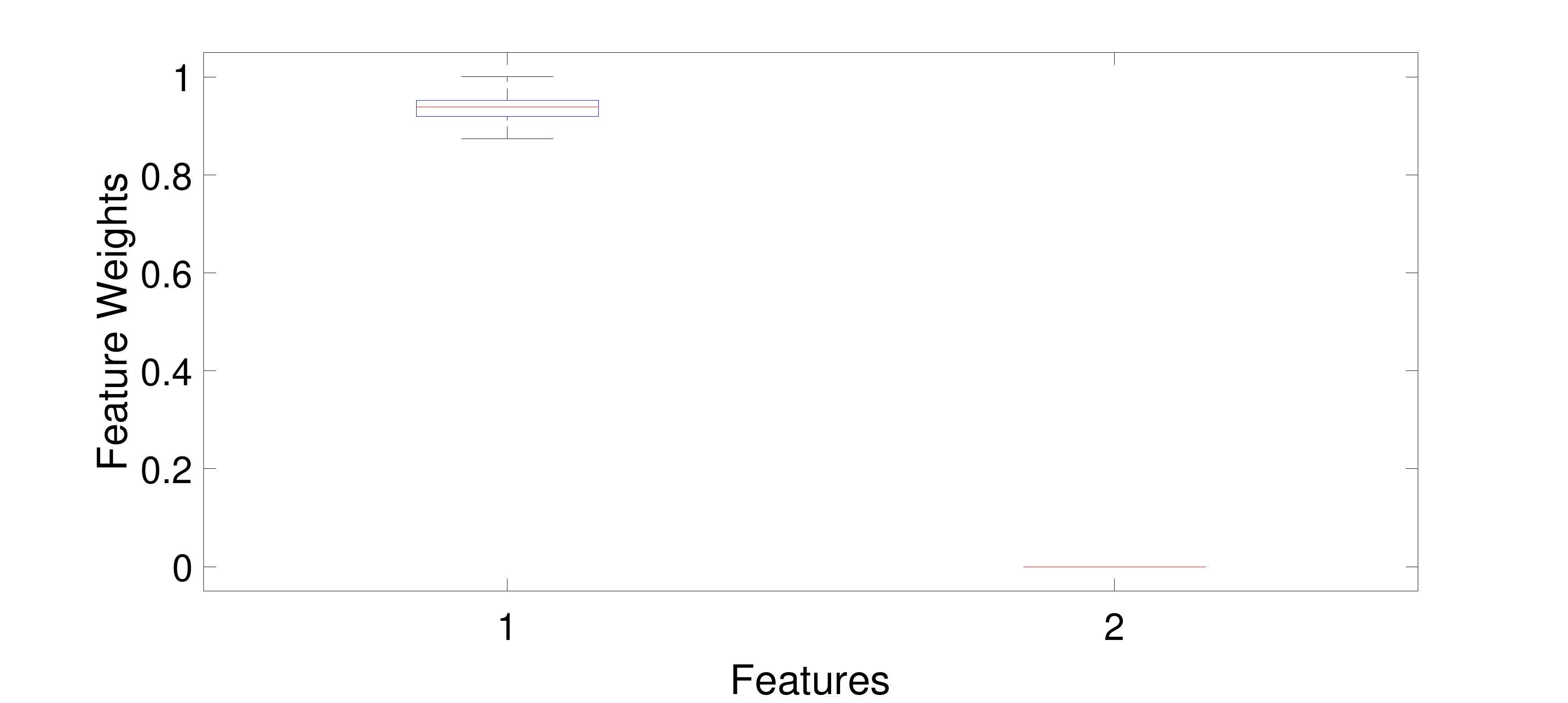}
        \caption{$LW$-$k$-means}
        \label{fig:hist2}
    \end{subfigure}
    \caption{Boxplot of the average weights assigned by the $LW$-$k$-means and sparse $k$-means algorithms to features 1 and 2 for all the 50 datasets. The boxplot shows that sparse $k$-means assigns a feature weight of 0.32 to the unimportant feature $x_2$, whereas, $LW$-$k$means assigns zero feature weight to $x_2$ and hence is capable of proper feature selection. }\label{fig:eg1}
\end{figure}
\begin{figure}
    \centering
    \begin{subfigure}{.47\textwidth}
        \includegraphics[width=\linewidth]{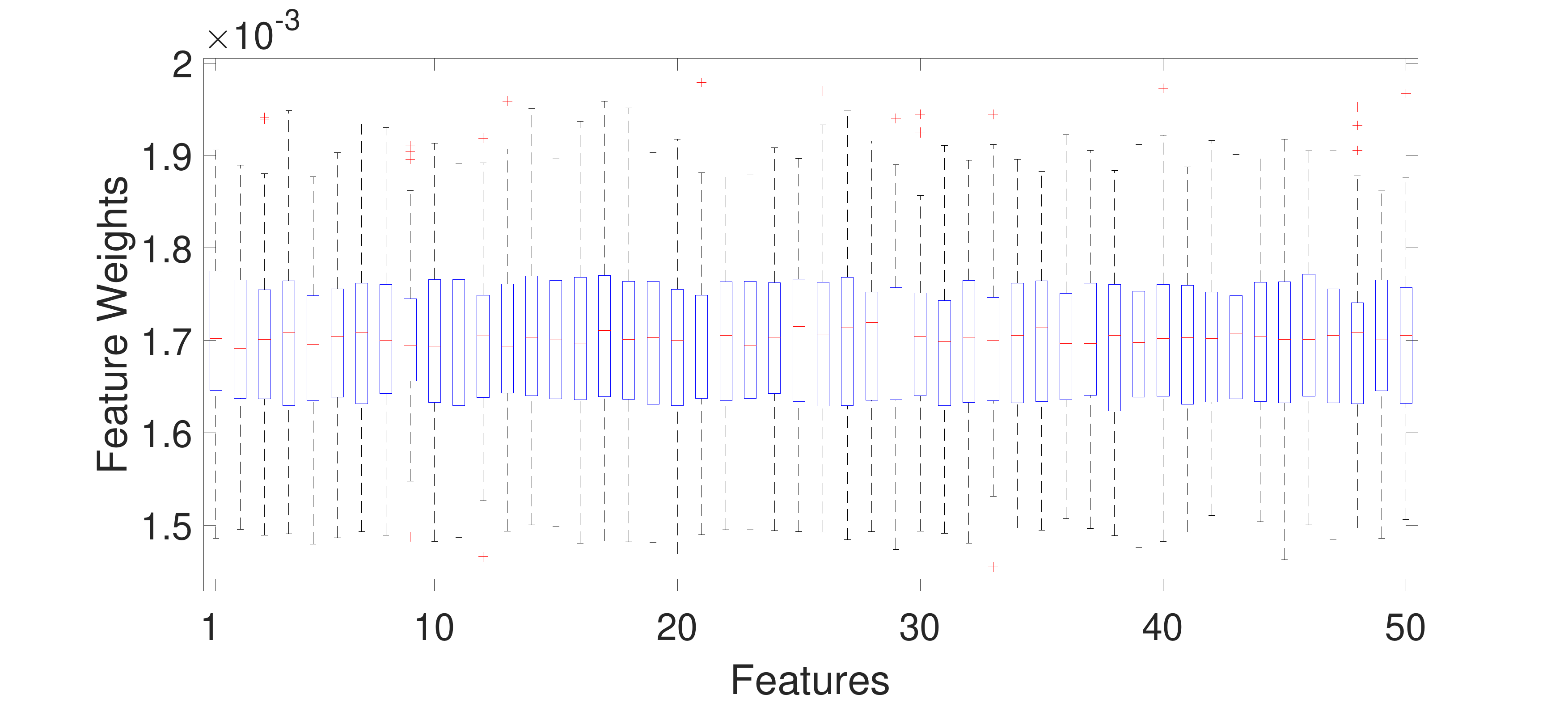}
        \caption{Sparse $k$-means}
        \label{fig:huang_mean}
    \end{subfigure}
     add desired spacing between images, e. g. ~, \quad, \qquad, \hfill etc. 
      (or a blank line to force the subfigure onto a new line)
    \begin{subfigure}{.47\textwidth}
        \includegraphics[width=\linewidth]{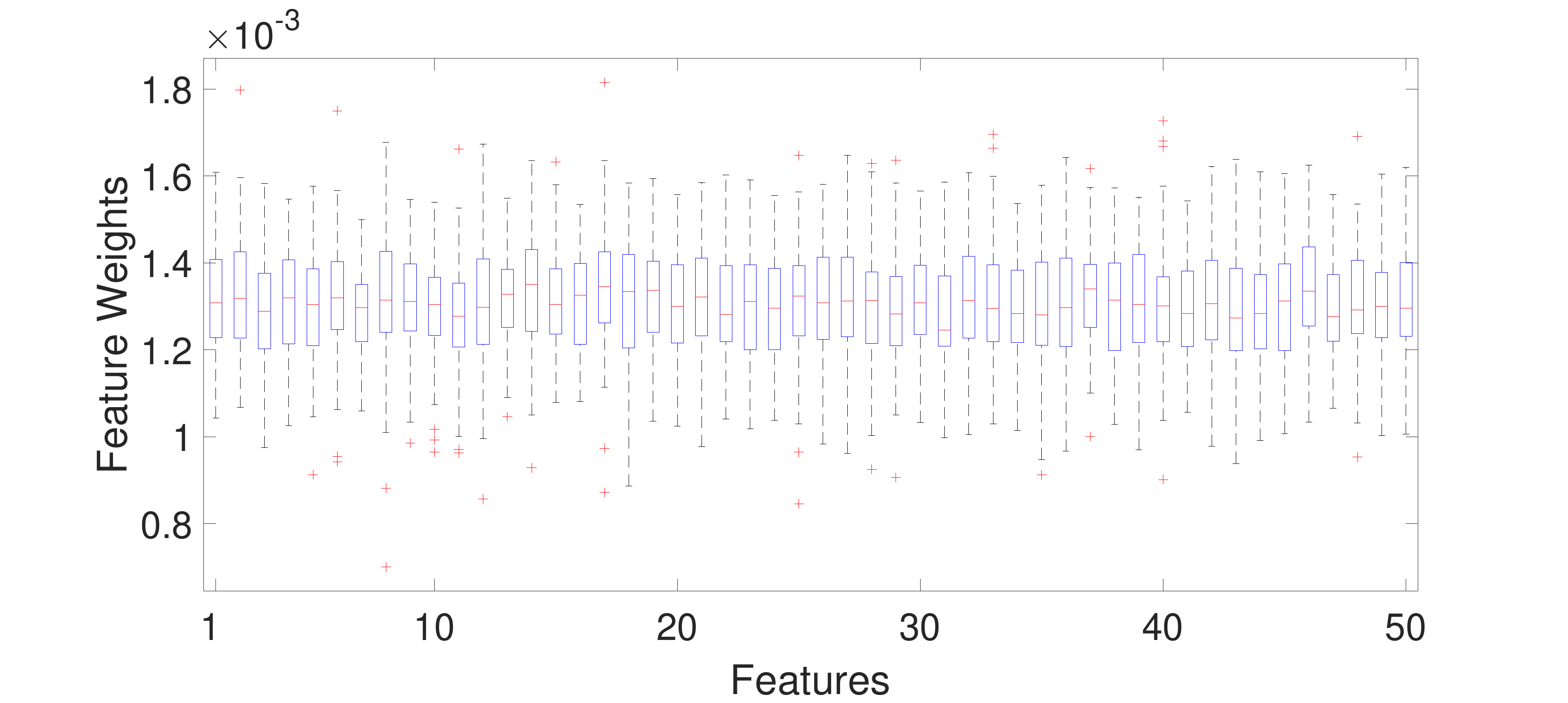}
        \caption{$LW$-$k$-means}
        \label{fig:lwk_mean}
    \end{subfigure}
    \caption{Boxplot of the average weights assigned by the $LW$-$k$-means and $WK$-means algorithms to features 1 and 2 for all the 50 datasets. The boxplot shows that sparse $k$-means gives  }\label{fig:sim1}
\end{figure}
\begin{figure}
    \centering
    \begin{subfigure}{0.47\textwidth}
        \includegraphics[width=\textwidth]{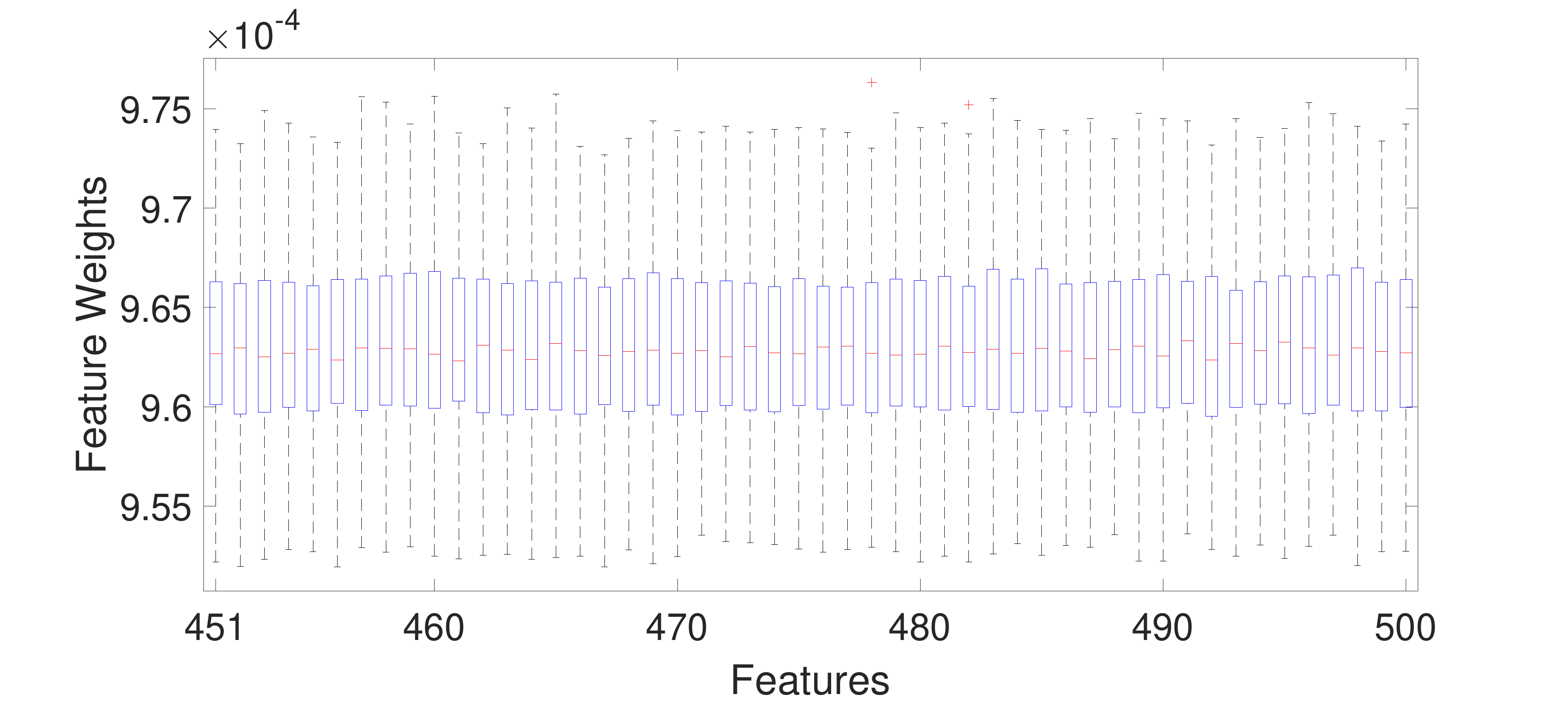}
        \caption{Boxplot of the average weights assigned by the $WK$-means algorithm to features 500 to 550 for all the 70 datasets.}
        \label{fig:huang_mean_re}
    \end{subfigure}
     add desired spacing between images, e. g. ~, \quad, \qquad, \hfill etc. 
      (or a blank line to force the subfigure onto a new line)
    \begin{subfigure}{0.47\textwidth}
        \includegraphics[width=\textwidth]{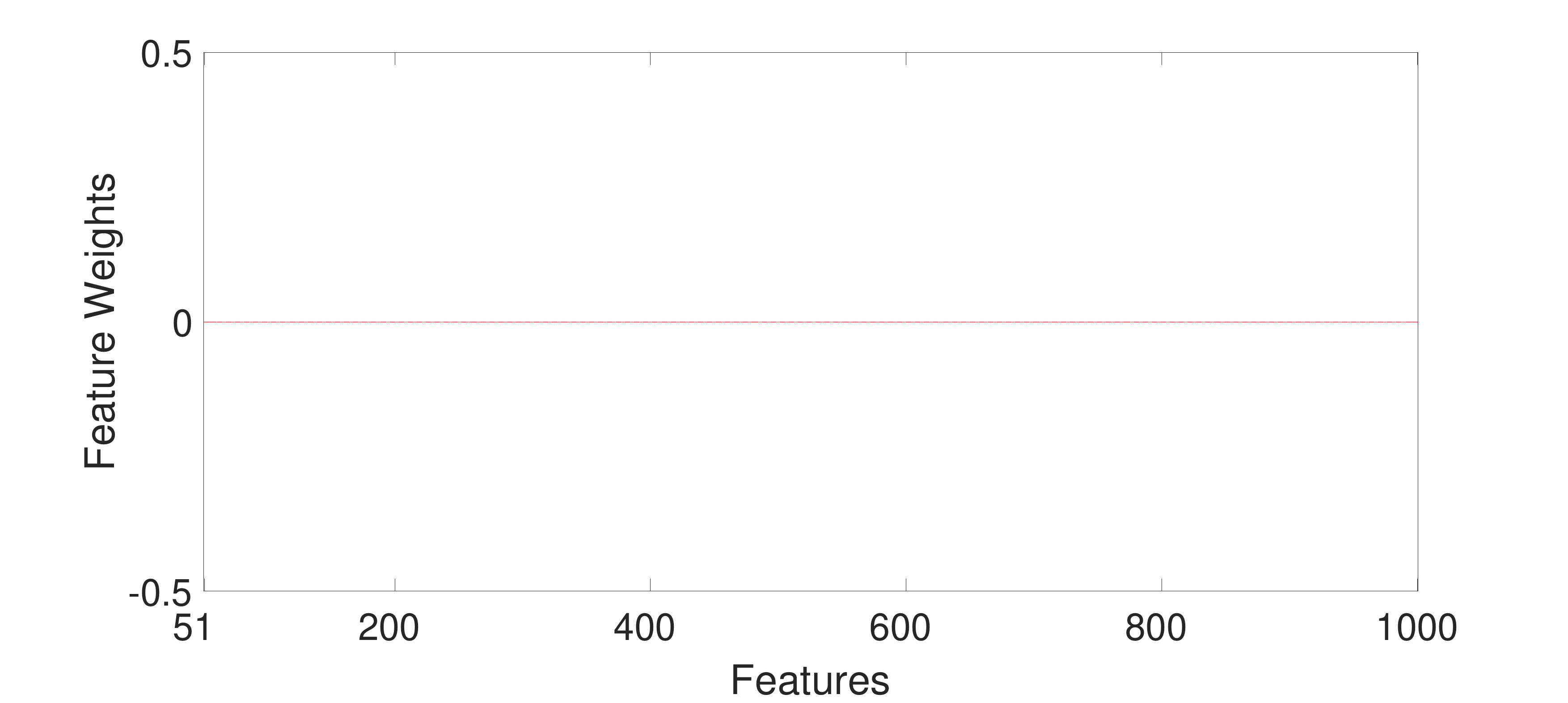}
        \caption{Boxplot of the average weights assigned by the $WK$-means algorithm to features 51 to 1000 for all the 70 datasets.}
        \label{fig:lwk_mean_re}
    \end{subfigure}
    \caption{Boxplot of the average weights assigned by the $LW$-$k$-means and $WK$-means algorithms to features 1 to 50 for all the 70 datasets. The boxplot show a lesser variability for the $LW$-$k$-means weights compared to the $WK$-means weights. }\label{fig:sim1_re}
\end{figure}
\subsection{Example 2}
 We simulated $70$ datasets each of which have $3$ clusters consisting of $100$ points each. Let $\mathbf{X}_i$ be a random point from the $i^{th}$ cluster, where $i \in \{1,2,3\}$. Let $\mathbf{X}_i=(X^{(i)}_1,\dots ,X^{(i)}_{1000})$. 
The datasets are simulated as follows.
\begin{itemize}
    \item $X^{(1)}_j$ are i.i.d from $\mathcal{N}(0,1)$ $\forall j \in \{1,\dots ,50\}$.
\item $X^{(1)}_j$ are i.i.d from $\mathcal{N}(5,1)$ $\forall j \in \{1,\dots ,50\}$.
\item $X^{(1)}_j$ are i.i.d from $\mathcal{N}(10,1)$ $\forall j \in \{1,\dots ,50\}$.
\item $X^{(i)}_j$ are i.i.d from $\chi^2_{(5)}$ $\forall j \in \{51,\dots ,1000\}$.
\item $X^{(i)}_j$ is independent of $X^{(h)}_k$ $\forall i,h \in \{1,2,3\} $ and $\forall j,k \in \{1,\dots ,1000\} $ such that $(i,j) \neq (h,k)$.
\end{itemize}

 Thus each of the datasets has only the first $50$ features relevant and the other features irrelevant.
For each of these $70 $ datasets, we run $LW$-$k$-means (with $\lambda=0.005$) and $WK$-means $40$ times and note the average (mean) weights  
assigned to different features by both the $LW$-$k$-means and $Wk$-means algorithms.\par
In Fig. \ref{fig:huang_mean}, we show the boxplot of the average weights assigned by the $Wk$-means algorithm to feature $1$ to $50$ for all the $70$ datasets. In Fig. \ref{fig:lwk_mean}, we show the corresponding boxplot for the $LW$-$k$-means algorithm. Fig. \ref{fig:sim1} clearly show a lesser variability for the weights assigned by $LW$-$k$-means compared to that of $WK$-means. In Fig. \ref{fig:sim1_re}, we plot the corresponding boxplot for the rest of the features. For space constraints, we only plotted the boxplots corresponding to features $500$ to $550$ for the $WK$-means algorithm. From Fig. \ref{fig:huang_mean_re} it is clear that the average weights assigned by $WK$-means for the irrelevant features are somewhat close to zero but not exactly zero. On the other hand, the average weights assigned by $LW$-$k$-means for the irrelevant features are exactly equal to zero as shown in fig. \ref{fig:lwk_mean_re}. 
\section{Conclusion and Future Works}
In this paper, we introduced a alternative sparse $k$-means algorithm based on the Lasso penalization of feature weighting. 
We derived the expression of the solution to the $LW$-$k$-means objective, theoretically, using KKT conditions of optimality. We also proved the convergence of the proposed algorithm. Since $LW$-$k$-means does not make any distributional assumptions of the given data, it works well even when the irrelevant features does not follow a normal distribution. We validated our claim by performing detailed experiments on 9 synthetic and 13 real-life datasets. We also undertook a simulation study to find out the variability of the feature weights assigned by the $LW$-$k$-means and $WK$-means and found that $LW$-$k$-means always assigns zero weight to the irrelevant features for the appropriate value of $\lambda$.  We also proposed an objective method to choose the value of the tuning  parameter $\alpha$ in the algorithm.\par
Some possible extension of the proposed method might be to extend it to fuzzy clustering, to give a probabilistic interpretation of the feature weights assigned by the proposed algorithm and also to use different divergence measures to enhance the performance of the algorithm. One can also explore the possibility to prove the strong consistency of the proposed algorithm for different divergence measures, prove the local optimality of the obtained partial optimal solutions and also to choose the value of $\lambda$ in an user independent fashion.

 \appendices
 \section{Proofs of Various Theorems an Lemmas of the paper}
\subsection{Proof of Theorem \ref{convex}}
\label{convex proof}
\begin{proof}
Clearly,
$P(\mathcal{W})=h(\mathcal{W})+g(\mathcal{W})$, where
$$h(\mathcal{W})=\frac{1}{n}\sum_{l=1}^p w_l^\beta D_l^0-\alpha\sum_{l=1}^pw_l$$
and 
$$g(\mathcal{W})=\frac{\lambda}{n p^2}\sum_{l=1}^p  |w_l|D_l^0.$$
Now, $\frac{\partial^2 h}{\partial w_l^2}=\beta (\beta-1)w_l^{\beta-2} D_l \geq 0$. Hence $h(\mathcal{W})$ is convex. It is also easy to see that $g(\mathcal{W})$ is convex. Hence $P(\mathcal{W})$ being the sum of two convex functions is convex.
\end{proof}
\subsection{Proof of Theorem \ref{reformulation}}
\label{reformulation proof}
\begin{proof}
By Theorem \ref{convex}, the objective function in \ref{eq1} is convex. Let $(w_1,t_1)$ and $(w_2,t_2)$ satisfy constraints \ref{eq3} and \ref{eq4}. Let $\gamma \in (0,1)$, $t=\gamma t_1+(1-\gamma)t_2$ and $w=\gamma w_1+(1-\gamma)w_2$. Then, $t-w=\gamma (t_1-w_1)+(1-\gamma)(t_2-w_2) \geq 0$ and $t+w=\gamma (t_1+w_1)+(1-\gamma)(t_2+w_2) \geq 0. $ Hence $(w,t)$ satisfy constraints \ref{eq3} and \ref{eq4} and the constraint set of problem $P_2^*$ is convex. The Hessian of the objective function in \ref{eq2} is $H(w,t)=\frac{1}{n}\begin{bmatrix} 
\beta(\beta -1)w^{\beta-2}D & 0 \\
0 & 0
\end{bmatrix}$
which is clearly positive semi-definite. Hence the objective function of problem $P_2^*$ is convex. Thus, any local minimizer of problem $P_2^*$ is also a global minimizer.\par
Since $(w^*,t^*)$ is a local (hence global) minimizer of problem $P_2^*$, for all $(w,t)$ which satisfy Eqn \ref{eq3} and \ref{eq4}, 
\begin{equation}
    \label{eq5}
    \frac{1}{n} {w^*}^{\beta} D-\alpha w^* +\frac{\lambda}{np^2} t^* D \leq  \frac{1}{n}w^{\beta} D-\alpha w +\frac{\lambda}{np^2} t D.
\end{equation}
Taking $w=w_1^*$ and $t=|w_1^*|$ in Eqn \ref{eq5}, we get,
\begin{equation}
\label{eq6}
    \frac{1}{n}{w^*}^{\beta} D-\alpha w^* +\frac{\lambda}{np^2} t^* D \leq  \frac{1}{n}{{w_1}^*}^{\beta} D-\alpha w_1^* +\frac{\lambda}{np^2} |w_1^*| D.
\end{equation}
Again, since $w_1^*$ is a solution to problem $P_1^*$,
\begin{equation}
\label{eq7}
   \frac{1}{n} {w_1^*}^{\beta} D-\alpha w_1^* +\frac{\lambda}{np^2} |w_1^*| D \leq  \frac{1}{n}{w^*}^{\beta} D-\alpha w^* +\frac{\lambda}{np^2} |w^*| D.
\end{equation}
Adding Eqn \ref{eq6} and \ref{eq7}, we get
\begin{equation}
\label{eq8}
    t^* \leq |w^*|.
\end{equation}
Again, from constraints \ref{eq3} and \ref{eq4}, we get
\begin{equation}
\label{eq9}
    t^* \geq |w^*|.
\end{equation}
Hence, from Eqn \ref{eq8} and \ref{eq9}, we get
\begin{equation}
\label{eq10}
t^* = |w^*|.    
\end{equation}
Substituting Eqn \ref{eq10} in Eqn \ref{eq6}, we get
\begin{equation}
\label{eq11}
     \frac{1}{n}{w^*}^{\beta} D-\alpha w^* +\frac{\lambda}{np^2} |w^*| D \leq \frac{1}{n} {w_1^*}^{\beta} D-\alpha w_1^* +\frac{\lambda}{np^2} |w_1^*| D.
\end{equation}
Hence from Eqn \ref{eq7} and \ref{eq11}, we get 
\begin{equation}
\label{eq12}
    \frac{1}{n} {w^*}^{\beta} D-\alpha w^* +\frac{\lambda}{np^2} |w^*| D =  \frac{1}{n}{w_1^*}^{\beta} D-\alpha w_1^* +\frac{\lambda}{np^2} |w_1^*| D.
\end{equation}
Since, Eqn \ref{eq12} is true for all $\alpha \geq 0$ and $\lambda \geq 0$, $ w^*= w_1^*$
\end{proof}
\subsection{Proof of Theorem \ref{solve alt}}
\label{solve alt proof}
\begin{proof}
The Lagrangian for the single-dimensional optimization problem $P_2^*$ is given by,
$$\mathcal{L}(w,t,\lambda_1,\lambda_2)=\frac{1}{n}w^{\beta} D-\alpha w +\frac{\lambda}{np^2} t D-\lambda_1(t-w)-\lambda_2(t+w).$$
The Karush-Kuhn-Tucker (KKT) necessary conditions of optimality for $(w^*,t^*)$ is given by,
$$\frac{\partial \mathcal{L}}{\partial w}=0 $$
\begin{equation}
\label{2.3.1}
  \implies \frac{1}{n}\beta {w^*}^{\beta-1}D=\alpha -\lambda_1+\lambda_2  
\end{equation}

$$\frac{\partial \mathcal{L}}{\partial t}=0 $$
\begin{equation}
    \label{2.3.2}
    \implies \frac{\lambda}{np^2} D =\lambda_1+\lambda_2
\end{equation}
\begin{equation}
\label{2.3.3}
  t-w \geq 0.
\end{equation}
\begin{equation}
\label{2.3.4}
  t+w \geq 0  ,
\end{equation}
$$\lambda_1, \lambda_2 \geq 0.$$
\begin{equation}
\label{2.3.5}
  \lambda_1(t-w)=0 . 
\end{equation}
\begin{equation}
\label{2.3.6}
  \lambda_2(t+w)=0  .
\end{equation}
Now let us consider the following situations:\par
\textbf{Case-1} $\frac{n\alpha}{D}>\frac{\lambda}{p^2}$:\par
$$\frac{1}{n}\beta {w^*}^{\beta-1}D=\alpha -\frac{\lambda}{np^2} D+2 \lambda_2>0 \implies w>0. $$
From, Eqn \ref{2.3.3}, $t>0.$ Thus, from Eqn \ref{2.3.6}, $\lambda_2=0$. Hence, $\frac{1}{n}\beta {w^*}^{\beta-1}D=\alpha -\frac{\lambda}{np^2} D \implies w^*=\Bigg[\frac{1}{\beta}(\frac{n\alpha}{D}-\frac{\lambda}{p^2})\Bigg]^{\frac{1}{\beta-1}}.$\par
\textbf{Case-2} $\frac{n\alpha}{D} \leq \frac{\lambda}{p^2}$:\par
If $w>0$, $(t+w)>0$ which implies $\lambda_2=0.$ $\frac{1}{n}\beta {w^*}^{\beta-1}D=\alpha -\frac{\lambda}{np^2} D \leq 0 \implies w \leq 0$, which is a contradiction.\par Now if $w<0$, $(t-w)>0$ which implies $\lambda_1=0.$ From Eqn \ref{2.3.1} and \ref{2.3.2}, it is easily seen that, 
$\frac{1}{n}\beta  {{w}^*}^{\beta-1}D=\alpha +\frac{\lambda}{np^2} D \implies w \geq 0,$
which is again a contradiction. Hence the only possibility is $w=0$. Now, since $\frac{n\alpha}{D} \geq 0$, from \textbf{Case 1} and \textbf{2}, we conclude that $w^*=\Bigg[\frac{1}{\beta}S(\frac{n\alpha}{D},\frac{\lambda}{p^2})\Bigg]^{\frac{1}{\beta-1}}.$

\end{proof}
\subsection{Proof of Theorem \ref{solve p3}}
\label{solve p3 proof}
\begin{proof}
Now to solve \textit{Problem $\mathbf{P_3}$}, note that \textit{Problem $\mathbf{P_3}$} is separable in $\mathcal{W}$ i.e. we can write $P(\mathcal{W})$ as 
\begin{equation}
    \label{sepa}
    P(\mathcal{W})=\sum_{l=1}^p P_l(w_l),
\end{equation}

where $P_l(w_l)=\frac{1}{n}w_l^{\beta} D^0_l-\alpha w_l +\frac{\lambda}{np^2} |w_l| D^0_l$. Now since \textit{Problem $\mathbf{P_3}$} is separable, it is enough to solve \textit{Problem $\mathbf{P'_l}$} $\forall d \in \{1,\dots ,p\}$ and combine the solutions to solve \textit{Problem $\mathbf{P_3}$}. Here \textit{Problem $\mathbf{P'_l}$} ($d \in \{1,\dots ,p\}$) is given by, 
\begin{equation}
    \label{pd}
    minimize \text{ } P_l(w_l)=\frac{1}{n} (w_l^\beta+\frac{\lambda}{p^2} |w_l|)D_l^0-\alpha w_l  \text{ w.r.t } w_l.
\end{equation}
The Theorem follows trivially from Theorem \ref{solve 1 d}.
\end{proof}
\subsection{Proof of Theorem \ref{convergence}}
\label{convergence proof}
\begin{proof}
Let $f_m$ be the value of the objective function at the end of the $m^{th}$ iteration of the algorithm. Since each step of the inner {\it while} loop of the algorithm decreases the value of the objective function, $f_t \geq f_{t+1}$ $\forall t \in \mathbb{N}$. Again note that,  $f_t \geq 0$ $\forall t \in \mathbb{N}$. Hence the sequence $\{f_m\}_{m=1}^\infty$ is a decreasing sequence of reals bounded below by $0$. Hence, by monotone convergence theorem, $\{f_m\}_{m=1}^\infty$ converges. Now since $\{f_m\}_{m=1}^\infty$ is convergent hence Cauchy and thus $\exists N_0 \in \mathbb{N}$ such that if $n \geq N_0$, $|f_{n+1}-f_n| < \epsilon$, which is the stopping criterion of the algorithm. Thus, the $LW$-$k$-means algorithm converges in a finite number of iteration.
\end{proof}
\subsection{Proof of Theorem \ref{bddwt}}
\label{bddwt proof}
\begin{proof}
Let $D_l$ denote the minimum value of the $k$-means objective function for only the $d^{th}$ feature of the the dataset i.e. $\{x_{1,l},\dots,x_{n,l}\}$.
 Let $\mathcal{U}^*_n$ denote the cluster assignment matrix corresponding to the optimal set of centroids $A_n=\{\mathbf{a}_1,\dots,\mathbf{a}_k\}$. Let $D^*_l=\sum_{i=1}^n\sum_{j=1}^k u^*_{ij}d(x_{i,l},z_{j,l})$. It is easy to see that $D^*_l \geq D_l$. Hence, 
$\frac{1}{D_l^*} \leq \frac{1}{D_l}$.
Thus, $$\frac{1}{\Bigg(\sum_{l=1}^p [\frac{n}{\beta {D^*}_l}]^{\frac{1}{\beta-1}}\Bigg)^{\beta-1}} \geq \frac{1}{\Bigg(\sum_{l=1}^p [\frac{n}{\beta D_l}]^{\frac{1}{\beta-1}}\Bigg)^{\beta-1}}=\alpha_n(P_n).$$
We know that $w^{(n)}_l=\Bigg[\frac{1}{\beta}S\Bigg(\frac{n \alpha(P_n)}{D^*_l},\frac{\lambda}{p^2}\Bigg)\Bigg]^{\frac{1}{\beta-1}}$. Thus,
$$w^{(n)}_l \leq \Bigg[\frac{1}{\beta}\frac{n\alpha(P_n)}{D^*_l}\Bigg]^{\frac{1}{\beta-1}}.$$
Thus, 
\begin{align*}
\sum_{l=1}^pw^{(n)}_l & \leq \sum_{l=1}^p\Bigg[\frac{1}{\beta}\frac{n\alpha(P_n)}{D^*_l}\Bigg]^{\frac{1}{\beta-1}}\\
& \leq (n\alpha(P_n))^{\frac{1}{\beta-1}}\sum_{l=1}^p\Bigg[\frac{1}{\beta D^*_l}\Bigg]^{\frac{1}{\beta-1}}\\
& =\alpha(P_n)^{\frac{1}{\beta-1}}\sum_{l=1}^p\Bigg[\frac{n}{\beta D^*_l}\Bigg]^{\frac{1}{\beta-1}}\\
&\leq 1.    
\end{align*}
The almost sure convergence of $\alpha(P_n)$ follows from the strong consistency of the $k$-means algorithm \cite{pollard1981strong}.
\end{proof}
\subsection{Proof of Lemma \ref{lem1}}
\label{lem1 proof}
\begin{proof}
If $A,B \in \xi_k$ such that $H(A,B)<\delta$, then for each $\mathbf{b} \in B,\exists \textbf{ } \mathbf{a}(\mathbf{b})\in A$ such that $|b_l-a(b)_l|<\delta$  $\forall d \in \{1,\dots,p\}$.
{\small
\begingroup
\allowdisplaybreaks
\begin{equation}
    \begin{aligned}
    &\Phi(\mathcal{W},A,P)-\Phi(\mathcal{W},B,P)\\
    &=\int min_{a \in A} \sum_{l=1}^p (w_l^\beta+\frac{\lambda}{p^2}|w_l|)(x_l-a_l)^2 P(dx)\\
    &-\int min_{b \in B} \sum_{l=1}^p (w_l^\beta+\frac{\lambda}{p^2}|w_l|)(x_l-b_l)^2P(dx)\\    
    & \leq \int max_{b \in B} \sum_{l=1}^p (w_l^\beta+\frac{\lambda}{p^2}|w_l|)[(x_l-b_l)^2-(x_l-a(b)_l)^2]P(dx)\\
    & \leq \int min_{b \in B} \sum_{l=1}^p (w_l^\beta+\frac{\lambda}{p^2}|w_l|)(2x_l+10M)\delta P(dx)\\
    & = \int_{\|x\| \leq R} \sum_{l=1}^p (w_l^\beta+\frac{\lambda}{p^2}|w_l|)(2x_l+10M)\delta P(dx)\\
    & +\int_{\|x\| > R} \sum_{l=1}^p (w_l^\beta+\frac{\lambda}{p^2}|w_l|)(2x_l+10M)\delta P(dx)\\
    & \leq \int_{\|x\| \leq R} \sum_{l=1}^p (w_l^\beta+\frac{\lambda}{p^2}|w_l|)(2R+10M)\delta P(dx)\\
    &  +\sum_{l=1}^p (w_l^\beta+\frac{\lambda}{p^2}|w_l|)\delta\int_{\|x\| > R}(2x_l+10M) P(dx).
    \end{aligned}
\end{equation}
\endgroup
}%
The last term can be made smaller than $\epsilon/2$ if $R$ is chosen large enough. The first term can be made less than $\epsilon/2$ if $\delta$ is chosen sufficiently small. Similarly one can show that $\Phi(\mathcal{W},B,P)-\Phi(\mathcal{W},A,P)<\epsilon$.  Hence the result.
\end{proof}
\subsection{Proof of Lemma \ref{lem2}}
\label{lem2 proof}
\begin{proof}
Let, $\mathcal{W},\mathcal{W}'\in \Gamma_k$ such that $\|\mathcal{W}-\mathcal{W}'\|<\delta$.Take $R>5M$. Thus,
{\small
\begingroup
\allowdisplaybreaks
    \begin{align*}
        & \Phi(\mathcal{W},A,P)-\Phi(\mathcal{W}',A,P)\\
        &=\int min_{a \in A} \sum_{l=1}^p (w_l^\beta+\frac{\lambda}{p^2}|w_l|)(x_l-a_l)^2 P(dx)\\
        &- \int min_{a \in A} \sum_{l=1}^p ({w'_l}^\beta+\frac{\lambda}{p^2}|w_l'|)(x_l - a_l)^2 P(dx)\\
        &\leq \int \sum_{a \in A} \sum_{l=1}^p ({w_l}^\beta-{w'_l}^\beta+\frac{\lambda}{p^2}(|w_l|-|w_l'|))(x_l-a_l)^2 P(dx)\\
        &=\int_{\|x\|\leq R} \sum_{a \in A} \sum_{l=1}^p ({w_l}^\beta-{w'_l}^\beta+\frac{\lambda}{p^2}(|w_l|-|w_l'|))(x_l-a_l)^2 P(dx)\\
        & + \int_{\|x\|> R} \sum_{a \in A} \sum_{l=1}^p (w_l^\beta-{w'_l}^\beta+\frac{\lambda}{p^2}(|w_l|-|w_l'|))(x_l-a_l)^2 P(dx)\\
        & \leq \int_{\|x\|\leq R} k \sum_{l=1}^p (w_l^\beta-{w'}^\beta_l+\frac{\lambda}{p^2}(|w_l|-|w_l'|))4R^2 P(dx)\\
        &+\int_{\|x\|> R} k \sum_{l=1}^p 2(b^\beta+\frac{\lambda}{p^2}b)(x_l-a_l)^2 P(dx).
    \end{align*}
\endgroup
}%
    The second term can be made smaller than $\epsilon/2$ if $R$ is chosen sufficiently large. Appealing to the continuity of the function $f(\mathcal{W})=\sum_{l=1}^p(w_l^\beta+\frac{\lambda}{p^2}|w_l|)$, the first term can be made smaller than $\epsilon/2$, if $\delta$ is chosen sufficiently small enough. Similarly one can show that, $\Phi(\mathcal{W}',A,P)-\Phi(\mathcal{W},A,P)<\epsilon$. Hence the result.
\end{proof}
\IEEEpeerreviewmaketitle

\ifCLASSOPTIONcaptionsoff
  \newpage
\fi

\bibliographystyle{IEEEtran}
\bibliography{bare_jrnl}
\begin{IEEEbiography}[{\includegraphics[width=1in,height=1.25in,clip,keepaspectratio]{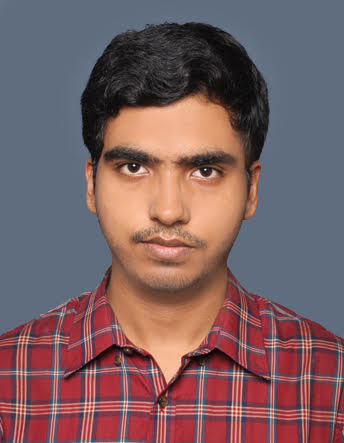}}]{Saptarshi Chakraborty}
received his B. Stat. degree in Statistics from the Indian Statistical Institute, Kolkata in 2018 and is currently pursuing his M. Stat. degree (in Statistics) at the same institute.  He was also a summer exchange student at the Big Data Summer Institute, University of Michigan, USA in 2018, where he worked on the application of Machine Learning algorithms on medical data. His current research interests are Statistical Learning (both supervised and unsupervised), Evolutionary Computing  and  Visual  Cryptography.
\end{IEEEbiography}
\begin{IEEEbiography}[{\includegraphics[width=1in,height=1.25in,clip,keepaspectratio]{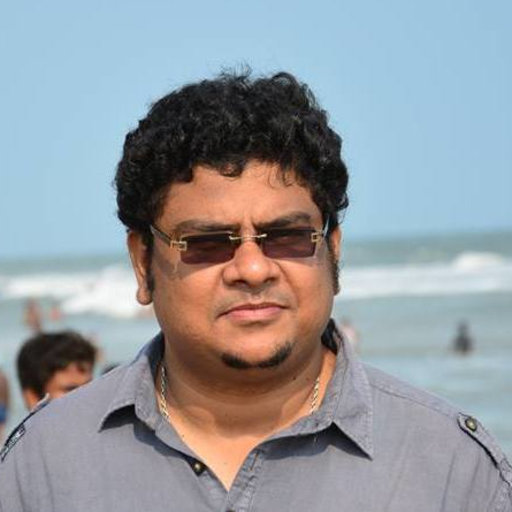}}]{Swagatam Das}
is currently serving as an associate professor at the Electronics and Communication Sciences Unit, Indian Statistical Institute,
Kolkata, India. He has published more than 250
research articles in peer-reviewed journals and
international conferences. He is the founding coeditor-in-chief of Swarm and Evolutionary Computation, an international journal from Elsevier.
Dr. Das has 16,000+ Google Scholar citations
and an H-index of 60 till date. He is also the recipient of the 2015 Thomson Reuters Research Excellence India Citation Award as the highest cited researcher from India in Engineering and Computer Science category between 2010 to 2014.
\end{IEEEbiography}




\end{document}